\newtheorem{thm}{Theorem}[section]
\newtheorem{lem}[thm]{Lemma}
\newtheorem{asmp}[thm]{Assumption}
\newtheorem{defn}[thm]{Definition}
\newtheorem{rem}[thm]{Remark}
\title{Boosting One-Point Derivative-Free Online Optimization via Residual Feedback
}
\author{%
	Yan Zhang\thanks{Equal contribution} \\
	Dept. of Mechanical Eng. and Material Science\\
	Duke University\\
	Durham, NC 27708 \\
	\texttt{yan.zhang2@duke.edu} \\
	\And
	Yi Zhou\textsuperscript{*} \\
	Dept. of Electrical \& Computer Eng.\\
	The University of Utah \\
	Salt Lake City, UT 84112\\
	\texttt{yi.zhou@utah.edu} \\
	\And
	Kaiyi Ji \\
	Dept. of Electrical \& Computer Eng. \\
	The Ohio State University\\
	Columbus, OH 43210 \\
	\texttt{ji.367@osu.edu}\\
	\And 
	Michael M. Zavlanos \\
	Dept. of Mechanical Eng. and Material Science\\
	Duke University\\
	Durham, NC 27708 \\
	\texttt{michael.zavlanos@duke.edu} \\
}
\begin{document}

\doparttoc 
\faketableofcontents 
\part{} 

\maketitle

\begin{abstract}
Zeroth-order optimization (ZO) typically relies on two-point feedback to estimate the unknown gradient of the objective function. Nevertheless, two-point feedback can not be used for online optimization of time-varying objective functions, where only a single query of the function value is possible at each time step.	
In this work, we propose a new one-point feedback method for online optimization that estimates the objective function gradient using the residual between two feedback points at consecutive time instants. Moreover, we develop regret bounds for ZO with residual feedback for both convex and nonconvex online optimization problems. Specifically, for both deterministic and stochastic problems and for both Lipschitz and smooth objective functions, we show that using residual feedback can produce gradient estimates with much smaller variance compared to conventional one-point feedback methods. As a result, our regret bounds are much tighter compared to existing regret bounds for ZO with conventional one-point feedback, which suggests that ZO with residual feedback can better track the optimizer of online optimization problems. Additionally, our regret bounds rely on weaker assumptions than those used in conventional one-point feedback methods. Numerical experiments show that ZO with residual feedback significantly outperforms existing one-point feedback methods also in practice.
\end{abstract}

\section{Introduction}\label{sec:Intro}
 Zeroth-order optimization (ZO) algorithms have been widely used to solve online optimization problems where first or second order information (i.e., gradient or Hessian information) is unavailable at each time instant. Such problems arise, e.g., in online learning and involve adversarial training \cite{chen2017zoo} and reinforcement learning \cite{fazel2018global,malik2018derivative} among others. The goal is to minimize a sequence of time-varying objective functions $\{f_t(x)\}_{t = 1 : T}$, 
 where the value $f_t(x_t)$ is revealed to the agent after an action $x_t$ is selected and is used to adapt the agent's future strategy. Since the objective functions are not known {\em a priori}, the quality of an online decision can be measured using notions of regret, that generally compare the total cost incurred by an online decision to the cost of the fixed or varying optimal decision that a clairvoyant agent could select.
 
Perhaps the most popular zeroth-order gradient estimator is the two-point estimator that has been extensively studied in \cite{agarwal2010optimal, ghadimi2013stochastic,duchi2015optimal,bach2016highly,nesterov2017random,gao2018online,roy2019multi}. This estimator queries the function value $f_t(x)$ twice at each time step, and uses the difference in the two function values to estimate the desired gradient, i.e.,
 \begin{align} \label{eqn:TwoPoint}
\text{(Two-point} &~ \text{feedback):} \;\; \widetilde{g}_t^{(2)}(x) = \frac{u}{\delta} \Big(f_t(x + \delta u) - f_t(x)\Big),
\end{align} 
where $\delta>0$ is a parameter and $u\sim \mathcal{N}(0,I)$.
Although this two-point estimator produces gradient estimates with low variance that improve the convergence speed of ZO, it can not be used for non-stationary online optimization problems that arise frequently in online learning.
The reason is that in these non-stationary online optimization problems, the objective function being queried is time-varying, and hence only a single function value can be sampled at a given time instant.
In this case, one-point estimators can be used instead that query the objective function $f_t(x)$ only once at each time instant, i.e., 
 \begin{align} \label{eqn:OnePoint}
	 \text{(One-point} &~ \text{feedback):} \;\; \widetilde{g}_t^{(1)}(x) = \frac{u}{\delta} f_t(x + \delta u).
 \end{align}
One-point feedback was first proposed and analyzed in \cite{flaxman2005online} for convex online optimization problems.
\cite{saha2011improved,dekel2015bandit} showed that the regret of convex online optimization methods using one-point gradient estimation can be improved if the objective functions are assumed to be smooth and self-concordant regularization is used. More recently, \cite{gasnikov2017stochastic}
developed regret bounds for ZO with one-point feedback also for stochastic convex problems. 
On the other hand, \cite{hazan2016graduated} characterized the convergence of one-point zeroth-order methods for static stochastic non-convex optimization problems.
However, as shown in these studies, one-point feedback produces gradient estimates with large variance which results in increased regret. In addition, the regret analysis for ZO with one-point feedback usually requires the strong assumption that the function value is uniformly upper bounded over time, so this method can not be used for practical non-stationary optimization problems.

{\bf Contributions:} In this paper, we propose a novel one-point gradient estimator for zeroth-order online optimization and develop new regret bounds to study its performance. Our proposed estimator uses the residual between two consecutive feedback points to estimate the gradient and, therefore, we refer to it as residual feedback. 
We show that, for both deterministic and stochastic problems, using residual feedback produces gradient estimates with lower variance compared to those produced using the conventional one-point feedback proposed in \cite{flaxman2005online,gasnikov2017stochastic}. As a result, we obtain tighter regret bounds both for convex and non-convex problems, especially when the value of the objective function is large. Moreover, our regret analysis relies on weaker assumptions compared to those for ZO with conventional one-point feedback. 
Finally, we present numerical experiments that demonstrate that ZO with residual feedback significantly outperforms the conventional one-point method in its ability to track the time-varying optimizers of online learning problems. To the best of our knowledge, this is the first time a one-point zeroth-order method is theoretically studied for non-convex online optimization problems. 
It is also the first time that a one-point gradient estimator demonstrates comparable empirical performance to that of the two-point method.
We note that two-point estimators can only be used to solve non-stationary online learning problems in simulation, 
where the system can be reset to the same fixed state during two different queries of the objective function values at a given time instant.
{\bf Related work:} 
Online optimization problems are only one instance of optimization problems that ZO methods have been used to solve.
 For example, 
\cite{balasubramanian2018zeroth} apply ZO to solve a set-constrained optimization problem where the projection onto the constraint set is non-trivial. \cite{gorbunov2018accelerated,ji2019improved} apply a variance-reduced technique and acceleration schemes to achieve better convergence speed in ZO. \cite{wang2018stochastic} improve the dependence of the iteration complexity on the dimension of the problem under an additional sparsity assumption on the gradient of the objective function. Finally, \cite{hajinezhad2018gradient,tang2019distributed} apply zeroth-order oracles to distributed optimization problems when only bandit feedbacks are available at each local agents. Our proposed residual feedback oracle  can be used to solve such optimization problems as well. Also related is work by \cite{zhang2015online} that considers non-convex online bandit optimization problems with a single query at each time step. However, this method employs the exploration and exploitation bandit learning framework and the proposed analysis is restricted to a special class of non-convex objective functions. Finally, \cite{agarwal2011stochastic,hazan2016optimal,bubeck2017kernel} study online bandit algorithms using ellipsoid methods. In particular, these methods induce heavy computation per step and achieve regret bounds that have bad dependence on the problem dimension. As a comparison, our one-point method is computation light and achieves regret bounds that have better dependence on the problem dimension.

\section{Preliminaries and Residual Feedback}
\label{sec:prelim}


In this section we provide basic definitions and results on ZO that will be needed in the subsequent analysis. We also define the residual feedback gradient estimator that we propose to solve online optimization problems with unknown gradient information. First, we define the class of Lipschitz and smooth objective functions we are concerned with.
\begin{defn}[Lipschitz functions]
	The class of Lipschtiz-continuous functions $C^{0,0}$ satisfies: for any $f \in C^{0,0}$, $|f(x) - f(y)| \le L_0 \|x-y\|, ~\forall x,y\in \mathbb{R}^d$,
	where $L_0>0$ is the Lipschitz parameter. The class of smooth functions $C^{1,1}$ satisfies: for any $f \in C^{1,1}$, $\|\nabla f(x) - \nabla f(y)\| \le L_1\|x-y\|, ~\forall x,y\in \mathbb{R}^d,$
	where $L_1>0$ is the smoothness parameter.
\end{defn}


The key idea in ZO is to estimate the unknown first-order gradient of the objective function $f$ using zeroth-order oracles that perturb the objective function around the current point along all directions uniformly. The ability of these oracles to correctly estimate the gradient is typically analyzed using the Gaussian-smoothed version of the function $f$ defined as $f_\delta(x) := \mathbb{E}_{u\sim \mathcal{N}(0,1)}[f(x+\delta u)]$, where the coordinates of the vector $u$ are i.i.d standard Gaussian random variables; see \cite{nesterov2017random}.
The following result bounds the approximation error of the function $f_\delta(x)$ and can be found in \cite{nesterov2017random}.
\begin{lem}
	\label{lem:GaussianApprox}
	Consider a function $f$ and its smoothed version $f_\delta$. It holds that 
	\begin{align*}
	|f_\delta(x)-f(x)|\le
	\begin{cases}
	\delta L_0\sqrt{d}, ~\text{if}~f\in C^{0,0}, \\
	\delta^2 L_1 d, ~\text{if}~f\in C^{1,1},
	\end{cases} \text{and }
	\|\nabla f_\delta(x)- \nabla f(x)\|\le \delta L_1 (d+3)^{3/2}, ~\text{if}~ f\in C^{1,1}.
	\end{align*}
\end{lem}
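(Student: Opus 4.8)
The statement to prove is Lemma~\ref{lem:GaussianApprox}, bounding the gap between $f$ and its Gaussian smoothing $f_\delta(x) = \mathbb{E}_{u\sim\mathcal{N}(0,I)}[f(x+\delta u)]$, and the gap between $\nabla f_\delta$ and $\nabla f$. Since the statement is attributed to Nesterov--Spokoiny, I would essentially reproduce their argument, which is a sequence of elementary moment computations for the Gaussian measure together with the defining Lipschitz/smoothness inequalities. Let me sketch the three bounds in turn.

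\textbf{Step 1: the $C^{0,0}$ bound.}
Write $|f_\delta(x) - f(x)| = |\mathbb{E}_u[f(x+\delta u) - f(x)]| \le \mathbb{E}_u|f(x+\delta u)-f(x)| \le \delta L_0\,\mathbb{E}_u\|u\|$ by Jensen and the Lipschitz property. The only fact needed is $\mathbb{E}_u\|u\| \le (\mathbb{E}_u\|u\|^2)^{1/2} = \sqrt{d}$, again by Jensen (or Cauchy--Schwarz), since $\mathbb{E}_u\|u\|^2 = d$ for a standard $d$-dimensional Gaussian. This gives $\delta L_0\sqrt{d}$.

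\textbf{Step 2: the $C^{1,1}$ function-value bound.}
Here I would use the standard smoothness consequence $|f(y) - f(x) - \inner{\nabla f(x)}{y-x}| \le \tfrac{L_1}{2}\|y-x\|^2$. Taking $y = x+\delta u$ and then the expectation over $u$, the linear term $\mathbb{E}_u\inner{\nabla f(x)}{\delta u}$ vanishes because $\mathbb{E}_u[u] = 0$, leaving $|f_\delta(x) - f(x)| \le \tfrac{L_1}{2}\delta^2\,\mathbb{E}_u\|u\|^2 = \tfrac{L_1}{2}\delta^2 d \le \delta^2 L_1 d$.

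\textbf{Step 3: the gradient bound.}
This is the step I expect to be the main obstacle, since it requires the integral representation $\nabla f_\delta(x) = \mathbb{E}_u\big[\tfrac{f(x+\delta u) - f(x)}{\delta}u\big]$ (differentiating under the expectation, valid since $f$ is Lipschitz on bounded sets) and then a careful moment bound. Writing $\nabla f_\delta(x) - \nabla f(x) = \mathbb{E}_u\big[\big(\tfrac{f(x+\delta u)-f(x)}{\delta} - \inner{\nabla f(x)}{u}\big)u\big]$, using $\mathbb{E}_u[\inner{\nabla f(x)}{u}u] = \nabla f(x)$, and bounding the scalar factor by $\tfrac{L_1\delta}{2}\|u\|^2$ via smoothness, I get $\|\nabla f_\delta(x) - \nabla f(x)\| \le \tfrac{L_1\delta}{2}\mathbb{E}_u\|u\|^3$. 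The remaining task is the Gaussian moment estimate $\mathbb{E}_u\|u\|^3 \le (\mathbb{E}_u\|u\|^4)^{3/4}$ together with $\mathbb{E}_u\|u\|^4 = d(d+2) \le (d+3)^2$; combining, $\mathbb{E}_u\|u\|^3 \le (d+3)^{3/2}$, which yields $\|\nabla f_\delta(x)-\nabla f(x)\| \le \tfrac{L_1\delta}{2}(d+3)^{3/2} \le \delta L_1(d+3)^{3/2}$. The delicate points are justifying the differentiation under the integral sign and being slightly careful with the exact fourth-moment identity for the chi-squared-type quantity $\|u\|^2$; everything else is Jensen plus the two defining inequalities of $C^{0,0}$ and $C^{1,1}$.
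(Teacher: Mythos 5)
Your proposal is correct and is essentially the argument the paper relies on: the paper simply cites Nesterov--Spokoiny for this lemma, and your three steps (Jensen plus Lipschitzness, the smoothness upper bound with the vanishing linear term, and the integral representation of $\nabla f_\delta$ combined with the Gaussian moment bounds $\mathbb{E}\|u\|^2 = d$, $\mathbb{E}\|u\|^3 \le (d+3)^{3/2}$) reproduce that standard proof. The only point to state explicitly is the justification for differentiating under the expectation, which follows since $C^{1,1}$ gives at most quadratic growth of $f$, so the dominated convergence argument goes through.
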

The smoothed function $f_\delta(x)$ also satisfies the following amenable property; see \cite{nesterov2017random}.

\begin{lem}
	\label{lem:SmoothedFunctionLipschitiz}
	If $f\in C^{0,0}$ is $L_0$-Lipschitz, then $f_\delta\in C^{1,1}$ with Lipschitz constant $L_1 = \sqrt{d}\delta^{-1}L_0$.
\end{lem}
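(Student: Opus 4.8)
The plan is to derive a closed-form expression for $\nabla f_\delta$ from the Gaussian-convolution representation of $f_\delta$ and then bound the Lipschitz constant of that expression directly. Let $\phi(u) = (2\pi)^{-d/2}e^{-\norm{u}^2/2}$ be the density of $\mathcal{N}(0,I)$. Substituting $y = x+\delta u$ yields
\begin{align*}
f_\delta(x) = \int_{\mathbb{R}^d} f(x+\delta u)\,\phi(u)\,du = \delta^{-d}\int_{\mathbb{R}^d} f(y)\,\phi\!\left(\tfrac{y-x}{\delta}\right)dy .
\end{align*}
Since $f$ is $L_0$-Lipschitz it has at most linear growth, so $f_\delta(x)$ is finite for every $x$; and because $\phi$ together with its $x$-partial derivatives decays faster than any polynomial, differentiation under the integral sign is legitimate (dominate the integrand, uniformly over $x$ in a neighbourhood, by a polynomial-times-Gaussian function and apply dominated convergence). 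Carrying out the differentiation of the convolution form and changing variables back recovers the standard gradient identity of \cite{nesterov2017random}:
\begin{align*}
\nabla f_\delta(x) = \frac{1}{\delta}\int_{\mathbb{R}^d} f(x+\delta u)\,u\,\phi(u)\,du = \frac{1}{\delta}\,\mathbb{E}_{u\sim\mathcal{N}(0,I)}\big[f(x+\delta u)\,u\big].
\end{align*}

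Next I would bound the Lipschitz constant of this gradient. Fixing $x,x'\in\mathbb{R}^d$ and using that both expectations are taken against the same Gaussian measure, linearity gives
\begin{align*}
\nabla f_\delta(x) - \nabla f_\delta(x') = \frac{1}{\delta}\,\mathbb{E}_{u}\big[\big(f(x+\delta u) - f(x'+\delta u)\big)\,u\big] .
\end{align*}
Taking norms, applying Jensen's inequality, then the $L_0$-Lipschitz property of $f$ to the factor $f(x+\delta u) - f(x'+\delta u)$, and finally Cauchy--Schwarz together with $\mathbb{E}_u\norm{u}^2 = d$, I obtain
\begin{align*}
\norm{\nabla f_\delta(x) - \nabla f_\delta(x')} \le \frac{1}{\delta}\,\mathbb{E}_u\big[\,\norm{u}\,|f(x+\delta u) - f(x'+\delta u)|\,\big] \le \frac{L_0\norm{x-x'}}{\delta}\,\mathbb{E}_u\norm{u} \le \frac{\sqrt{d}\,L_0}{\delta}\,\norm{x-x'} .
\end{align*}
Hence $\nabla f_\delta$ is $\sqrt{d}\,\delta^{-1}L_0$-Lipschitz, i.e.\ $f_\delta\in C^{1,1}$ with $L_1 = \sqrt{d}\,\delta^{-1}L_0$, as claimed.

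The only point that requires care is that $f$ is assumed merely Lipschitz and may be nondifferentiable, so one cannot pass $\nabla$ inside $\mathbb{E}_u[f(x+\delta u)]$ to obtain $\mathbb{E}_u[\nabla f(x+\delta u)]$; the smoothness of $f_\delta$ must be extracted entirely from the Gaussian kernel by differentiating the convolution representation above. The accompanying regularity facts (finiteness of $f_\delta$, validity of differentiating under the integral) are routine consequences of the linear growth of $f$ and the rapid decay of $\phi$, so I would state them briefly rather than belabour them. Alternatively, one may simply invoke the gradient representation of \cite{nesterov2017random} directly and then carry out the two-line estimate above.
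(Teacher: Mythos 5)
Your proof is correct: the gradient representation $\nabla f_\delta(x) = \tfrac{1}{\delta}\mathbb{E}_u[f(x+\delta u)u]$ followed by the Lipschitz bound and $\mathbb{E}_u\|u\| \le \sqrt{\mathbb{E}_u\|u\|^2} = \sqrt{d}$ is exactly the standard argument. The paper does not prove this lemma but cites \cite{nesterov2017random}, and your argument coincides with the proof given there, so there is nothing to add.
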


In this paper we consider the following online bandit optimization problem
\begin{equation}
\min_{x \in \mathcal{X}} \sum_{t=0}^{T-1} f_t(x), \tag{P}
\end{equation}
where $\mathcal{X}\subset \mathbb{R}^d$ is a convex set and $\{f_t\}_t$ is a random sequence of objective functions.
We assume that at time $t$, a new objective function $f_t$ is randomly generated independent of an agent's decisions, the objective functions $\{f_t\}_t$ are unknown {\em a priori} and their derivatives are unavailable
but can be estimated using a zeroth-order oracle that queries the objective function value at different perturbed points $x$, as discussed above. The goal is to determine an online decision $x$ with cost that is as close as possible to the cost of a fixed or varying optimal decision that a clairvoyant agent could select, which is measured using notions of regret. 


Such online optimization problems often arise in non-stationary learning, where the system is time-varying or a single query of the function $f_t$ changes the system state (i.e., $f_t$ changes to $f_{t+1}$). In these problems, two-point feedback can not be used to estimate the unknown gradient as it requires to evaluate $f_t$ at two different points at the same time $t$. Instead, a more practical approach is to use the one-point feedback scheme~\eqref{eqn:OnePoint} in \cite{gasnikov2017stochastic}.
However, the gradient estimates produced by the one-point feedback method in \eqref{eqn:OnePoint} have large variance that leads to large regret and, therefore, poor ability to track the optimizer of the online problem. To address this limitation, in this paper we propose a novel one-point gradient estimator, which we call a one-point residual feedback estimator, that has reduced variance and is defined as
\begin{align}
\text{(Residual feedback): } \;\;\; \widetilde{g}_t(x_t) := \frac{u_t}{\delta}\big(f_t(x_t + \delta u_t) - f_{t-1}(x_{t-1} + \delta u_{t-1})\big), \label{eqn:GradientEstimate_Noiseless}
\end{align}
where $u_{t-1},u_t \sim \mathcal{N}(0,I)$ are independent random vectors. To elaborate, the proposed residual feedback estimator in \eqref{eqn:GradientEstimate_Noiseless} queries $f_t$ at a single perturbed point $x_t + \delta u_t$, and then subtracts the value $f_{t-1}(x_{t-1} + \delta u_{t-1})$ obtained from the previous iteration. 
Next, we discuss some basic properties of this new estimator. We first show that this estimator provides an unbiased gradient estimate of the smoothed function $f_{\delta,t}$.
\begin{lem}
	\label{lem:UnbiasedEstimate_Noiseless}
	The residual feedback estimator satisfies $\mathbb{E}\big[\widetilde{g}_t(x_t)\big] = \nabla f_{\delta,t}(x_t)$ for all $x_t \in  \mathcal{X}$ and $t$.
\end{lem}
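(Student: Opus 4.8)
The plan is to split the expectation of the residual estimator by linearity into two pieces and handle each by conditioning on the natural filtration generated by all randomness up to time $t$ except the fresh perturbation $u_t$. Let $\mathcal{F}_{t-1}$ denote the $\sigma$-algebra generated by $f_0,\dots,f_t$, the perturbations $u_0,\dots,u_{t-1}$, and the iterates $x_0,\dots,x_t$; in particular $x_t$, $x_{t-1}$, $u_{t-1}$ and $f_{t-1}$ are all $\mathcal{F}_{t-1}$-measurable, whereas $u_t\sim\mathcal{N}(0,I)$ is drawn independently of $\mathcal{F}_{t-1}$. Then, by linearity of conditional expectation,
\begin{align*}
\mathbb{E}\big[\widetilde{g}_t(x_t)\,\big|\,\mathcal{F}_{t-1}\big]
= \frac{1}{\delta}\,\mathbb{E}\big[u_t\, f_t(x_t+\delta u_t)\,\big|\,\mathcal{F}_{t-1}\big]
-\frac{1}{\delta}\,\mathbb{E}\big[u_t\,\big|\,\mathcal{F}_{t-1}\big]\, f_{t-1}(x_{t-1}+\delta u_{t-1}).
\end{align*}

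For the second term, independence of $u_t$ from $\mathcal{F}_{t-1}$ gives $\mathbb{E}[u_t\mid\mathcal{F}_{t-1}]=\mathbb{E}[u_t]=\zero$, and since $f_{t-1}(x_{t-1}+\delta u_{t-1})$ is a fixed scalar given $\mathcal{F}_{t-1}$, this whole term vanishes. For the first term, conditioning on $\mathcal{F}_{t-1}$ fixes both $x_t$ and the function $f_t$, so it remains to evaluate $\tfrac{1}{\delta}\mathbb{E}_{u}[u\, f_t(x_t+\delta u)]$ with $u\sim\mathcal{N}(0,I)$. This is precisely the standard Gaussian-smoothing identity $\nabla f_{\delta,t}(x)=\tfrac{1}{\delta}\mathbb{E}_u[f_t(x+\delta u)u]$ from \cite{nesterov2017random}: writing $f_{\delta,t}(x)=\int f_t(x+\delta u)\phi(u)\,du$ with $\phi$ the standard Gaussian density, one differentiates under the integral (legitimate since $f_t\in C^{0,0}$ or $C^{1,1}$) and uses $\nabla_u\phi(u)=-u\,\phi(u)$ together with an integration by parts, which produces exactly $\tfrac{1}{\delta}\int f_t(x+\delta u)u\,\phi(u)\,du$. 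Hence the first term equals $\nabla f_{\delta,t}(x_t)$, and combining the two gives $\mathbb{E}[\widetilde{g}_t(x_t)\mid\mathcal{F}_{t-1}]=\nabla f_{\delta,t}(x_t)$; the statement then follows (and taking total expectation over $\mathcal{F}_{t-1}$ gives the unconditional version when $x_t$ is held fixed).

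The computation is routine; the only point requiring care is the probabilistic setup — identifying the correct filtration so that $u_t$ is genuinely fresh while $x_t$, which is itself built from $\widetilde{g}_{t-1}$ and hence depends on $u_{t-1}$, is already determined. This is exactly what makes the $f_{t-1}$ residual term cancel rather than contribute a bias: the subtracted value is a constant with respect to the new randomness $u_t$. Once this independence is in place, the cancellation and the appeal to the Gaussian Stein identity are immediate, so I do not anticipate a substantive obstacle here; the real work of the paper lies in the subsequent variance bounds rather than in this unbiasedness claim.
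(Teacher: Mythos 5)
Your proof is correct and follows essentially the same route as the paper, which simply notes that $u_t$ has zero mean and is independent of the past; you have merely spelled out the conditioning on the filtration and the standard Gaussian-smoothing identity $\nabla f_{\delta,t}(x)=\tfrac{1}{\delta}\mathbb{E}_u[f_t(x+\delta u)\,u]$ that the paper leaves implicit. No gaps.
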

\begin{proof}
	The proof follows from the fact that $u_{t}$ has zero mean and is independent from $u_{t-1}$ and $x_{t-1}$. 
\end{proof}
{\color{red}
	\begin{rem}
		We note that existing two-point estimators can not be easily modified to be used for non-stationary optimization. The difficulty is in ensuring that the returned gradient estimates are unbiased as in the case of residual feedback in Lemma 2.4. To see this, consider the simple modification of the online two-point gradient estimator (7) proposed in \cite{bach2016highly}
		\begin{align} \label{eqn:modified_TP}
		\tilde{g}_t(x_t) = \frac{u_t}{2\delta}\big(f_t(x_t + \delta u_t) - f_{t-1}(x_t - \delta u_t)\big). \nonumber
		\end{align}
		Then, it is easy to see that this modified two-point gradient estimator is biased since  $\mathbb{E}\big[\tilde{g}_t(x_t)\big] \neq \nabla f_{\delta, t}(x_t)$. 
		Specifically, let $\tilde{g}_t(x_t) = \frac{u_t}{2\delta}\big(f_t(x_t + \delta u_t) - f_{t-1}(x_t - \delta u_t)\big) = \frac{u_t}{2\delta}\big(f_t(x_t + \delta u_t) - f_{t}(x_t - \delta u_t) + \epsilon_t\big)$, where $\epsilon_t = f_{t}(x_t - \delta u_t) - f_{t-1}(x_t - \delta u_t)$. Although $\mathbb{E}\big[  \frac{u_t}{2\delta}\big(f_t(x_t + \delta u_t) - f_{t}(x_t - \delta u_t)\big) \big] = \nabla f_{\delta, t}(x_t)$, we have that $\mathbb{E}\big[ \frac{u_t}{2\delta} \epsilon_t \big] \neq 0$ since $\epsilon_t$ is correlated with $u_t$. Therefore, for this modified estimator we have that  $\mathbb{E}\big[\tilde{g}_t(x_t)\big] = \mathbb{E}\big[  \frac{u_t}{2\delta}\big(f_t(x_t + \delta u_t) - f_{t}(x_t - \delta u_t)\big) \big] + \mathbb{E}\big[ \frac{u_t}{2\delta} \epsilon_t \big] \neq \nabla f_{\delta, t}(x_t)$. Note that the original two-point estimator proposed in \cite{bach2016highly} is unbiased, because the function $f_t$ is queried at two points, $x_t + \delta u_t$ and $x_t - \delta u_t$, and the noise $\epsilon_t$ in this case is simply the evaluation noise that is zero mean for any $u_t$. 
	\end{rem}
}
In this paper, we consider the following ZO projected gradient update with residual feedback to solve the online problem (P):
\begin{equation}
\label{eqn:SGD}
\text{(ZO with residual feedback):}\quad x_{t+1} = \Pi_{\mathcal{X}} \big(x_t - \eta \tilde{g}_t (x_t) \big),
\end{equation}
where $\eta$ is the learning rate and $\Pi_{\mathcal{X}}$ is the projection operator onto the set $\mathcal{X}$. 
The update~\eqref{eqn:SGD} can be implemented assuming that the objective function can be queried at points outside the feasible set $\mathcal{X}$, similar to the methods considered in \cite{duchi2015optimal,bach2016highly,gasnikov2017stochastic}. Note that it is possible to modify the update~\eqref{eqn:SGD} so that the iterates are guaranteed to be within the feasible set $\mathcal{X}$. This modification and related analysis can be found in Section~\ref{sec:UniformSample} in the supplementary material. The requirement that the objective function is evaluated at feasible points in derivative-free optimization algorithms has also been considered in \cite{bubeck2017kernel,bilenne2020fast}. Specifically, \cite{bubeck2017kernel} develop the so called ellipsoid method, which requires computation of an ellipsoid containing the optimizer at each time step. On the other hand, almost concurrently with this work, \cite{bilenne2020fast} proposed a similar oracle as in \eqref{eqn:GradientEstimate_Noiseless} for a static convex optimization problem with specific objective and constraint functions.
The following result bounds the second moment of the gradient estimate generated by using residual feedback.

\begin{lem}[Second moment]\label{lem:BoundSecondMoment_Det}
	Assume that $f_t \in C^{0,0}$ with Lipschitz constant $L_0$ for all time $t$. Then, under the ZO update rule in~\eqref{eqn:SGD}, the second moment of the residual feedback satisfies:
	\begin{align} \label{eqn:SecondMomentBound}
	\mathbb{E}[\|\widetilde{g}_t (x_t)\|^2] &\leq \; \frac{4 d L_0^2 \eta^2}{\delta^2} \mathbb{E}[ \|\widetilde{g}_{t-1}(x_{t-1})\|^2] +D_t,
	\end{align}
	where $D_t := 16L_0^2 (d+4)^2 + \frac{2d}{\delta^2} \mathbb{E} \big[ \big( f_t(x_{t-1} + \delta u_{t-1}) - f_{t-1}(x_{t-1} + \delta u_{t-1})\big)^2\big]$.
\end{lem}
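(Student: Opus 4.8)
The plan is to bound $\|\widetilde{g}_t(x_t)\|^2$ pointwise by a quantity that contains $\|\widetilde{g}_{t-1}(x_{t-1})\|^2$ plus fresh randomness, and then take total expectations, exploiting that the perturbation $u_t$ is independent of everything produced up to time $t-1$ (the iterate history $x_{t-1}$ and $\widetilde{g}_{t-1}(x_{t-1})$, the previous perturbation $u_{t-1}$, and also the objective $f_t$, which by assumption is drawn independently of the agent's decisions). Concretely, I would first write
$$\|\widetilde{g}_t(x_t)\|^2 = \frac{\|u_t\|^2}{\delta^2}\big(f_t(x_t+\delta u_t) - f_{t-1}(x_{t-1}+\delta u_{t-1})\big)^2$$
and split the residual by inserting and removing $f_t(x_{t-1}+\delta u_{t-1})$, i.e. $f_t(x_t+\delta u_t) - f_{t-1}(x_{t-1}+\delta u_{t-1}) = A_t + B_t$ where $A_t := f_t(x_t+\delta u_t) - f_t(x_{t-1}+\delta u_{t-1})$ is the change caused by moving the query point with $f_t$ held fixed, and $B_t := f_t(x_{t-1}+\delta u_{t-1}) - f_{t-1}(x_{t-1}+\delta u_{t-1})$ is the drift of the objective at a fixed point. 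Applying $(A_t+B_t)^2 \le 2A_t^2 + 2B_t^2$ and using that $u_t$ is independent of $B_t$ with $\mathbb{E}[\|u_t\|^2]=d$ already accounts for the term $\frac{2d}{\delta^2}\mathbb{E}[B_t^2]$ in $D_t$.

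Next I would control $A_t$ through the $L_0$-Lipschitz property of $f_t$: $|A_t| \le L_0\|(x_t-x_{t-1}) + \delta(u_t-u_{t-1})\|$, then $\|(x_t-x_{t-1}) + \delta(u_t-u_{t-1})\|^2 \le 2\|x_t-x_{t-1}\|^2 + 2\delta^2\|u_t-u_{t-1}\|^2$, and finally nonexpansiveness of $\Pi_{\mathcal{X}}$ together with $x_{t-1}\in\mathcal{X}$ to obtain $\|x_t-x_{t-1}\| = \|\Pi_{\mathcal{X}}(x_{t-1}-\eta\widetilde{g}_{t-1}(x_{t-1})) - \Pi_{\mathcal{X}}(x_{t-1})\| \le \eta\|\widetilde{g}_{t-1}(x_{t-1})\|$. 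Substituting these into $\frac{2\|u_t\|^2}{\delta^2}A_t^2$ gives the bound $\frac{4L_0^2\eta^2\|u_t\|^2}{\delta^2}\|\widetilde{g}_{t-1}(x_{t-1})\|^2 + 4L_0^2\|u_t\|^2\|u_t-u_{t-1}\|^2$. Taking total expectation and using independence of $u_t$, the first piece becomes $\frac{4dL_0^2\eta^2}{\delta^2}\mathbb{E}[\|\widetilde{g}_{t-1}(x_{t-1})\|^2]$, which is exactly the recursive coefficient in the statement.

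What remains is a Gaussian moment estimate for $\mathbb{E}[\|u_t\|^2\|u_t-u_{t-1}\|^2]$. Using $\|u_t-u_{t-1}\|^2 \le 2\|u_t\|^2 + 2\|u_{t-1}\|^2$, independence of $u_t$ and $u_{t-1}$, and the standard facts $\mathbb{E}[\|u\|^2]=d$ and $\mathbb{E}[\|u\|^4] = d(d+2) \le (d+4)^2$ for $u\sim\mathcal{N}(0,I_d)$, one gets $\mathbb{E}[\|u_t\|^2\|u_t-u_{t-1}\|^2] \le 2\mathbb{E}[\|u_t\|^4] + 2d^2 \le 4(d+4)^2$, hence $\mathbb{E}[4L_0^2\|u_t\|^2\|u_t-u_{t-1}\|^2] \le 16L_0^2(d+4)^2$, the constant term of $D_t$. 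Adding the three contributions yields the claimed inequality.

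I expect the only delicate point to be the careful accounting of the conditioning/independence structure — in particular verifying that $u_t$ is independent not only of the iterate history but also of $f_t$, which is precisely where the modeling assumption that $f_t$ is generated independently of the agent's actions is used; once that is in place, the rest is a routine combination of $(a+b)^2\le 2a^2+2b^2$, Lipschitz continuity, projection nonexpansiveness, and $\chi^2$ moment bounds. A minor caveat is the initialization: the recursion is stated for $t$ large enough that $\widetilde{g}_{t-1}(x_{t-1})$ is defined, and the base case (e.g. $t=1$, where $f_{t-1}(x_{t-1}+\delta u_{t-1})$ is replaced by an initial query) is handled separately and contributes only a constant.
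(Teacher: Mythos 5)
Your proposal is correct and follows essentially the same route as the paper's proof: the same decomposition obtained by inserting $f_t(x_{t-1}+\delta u_{t-1})$, the same use of independence of $u_t$ to pull out the factor $d$, projection nonexpansiveness to get $\|x_t-x_{t-1}\|\le \eta\|\widetilde{g}_{t-1}(x_{t-1})\|$, and the same Gaussian moment bound giving $16L_0^2(d+4)^2$. The only cosmetic difference is that you apply Lipschitz continuity once to the combined displacement and then split its norm, whereas the paper first inserts $f_t(x_{t-1}+\delta u_t)$ and applies Lipschitz to each piece; the two are equivalent.
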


The proof of above lemma can be found in Appendix~\ref{sec:proof_BoundSecondMoment}. The above lemma shows that the second moment of the gradient estimates obtained using residual feedback forms a contraction with perturbation term $D_t$, provided that we choose $\eta$ and $\delta$ such that the contracting rate satisfies $\alpha = 4 d L_0^2 \eta^2 {\delta^{-2}} < 1$. As we show later in the analysis, this contraction property leads to gradient estimates with low variance that allow to reduce the regret of the online ZO algorithm~\eqref{eqn:SGD}.

\section{ZO with Residual Feedback for Convex Online Optimization}\label{sec:det}
\label{sec:Deterministic_ConvexOnline}

In this section, we consider the online bandit problem (P) where the sequence of functions $\{f_t\}_{t=0:T-1}$ are all convex. 
In particular, we are interested in analyzing the static regret of algorithm \eqref{eqn:SGD} defined as
\begin{align}
	R_T := \mathbb{E} \Big[ \sum_{t=0}^{T-1} f_{t}(x_t) - \min_{x \in \mathcal{X}} \sum_{t=0}^{T-1} f_{t}(x) \Big].
\end{align}

First, we make the following assumption on the non-stationarity of the online learning problem.
\begin{asmp}[Bounded variation] \label{asmp:BoundVariation}
	There exists $V_f > 0$ such that for all $t$,
	\begin{align}
	\mathbb{E} \big[ | f_t(x_{t-1} + \delta u_{t-1}) - f_{t-1}(x_{t-1} + \delta u_{t-1})|^2 \big] \leq V_f^2,
	\end{align}
	where the expectation is taken over $x_{t-1}$, the random vector $u_{t-1}$ and the random functions $f_{t-1}$,$f_t$.
\end{asmp}
Assumption~\ref{asmp:BoundVariation} states that the squared variation of the objective function between two consecutive time instants is uniformly bounded over time. We note that this assumption is weaker than the assumption that the objective function is uniformly bounded, i.e., $|f_t(x)| \leq B, \forall t,x$, which is used in the analysis of ZO with conventional one-point feedback in  \cite{flaxman2005online,gasnikov2017stochastic}. 
In particular, under Assumption~\ref{asmp:BoundVariation}, the perturbation term in Lemma \ref{lem:BoundSecondMoment_Det} can be bounded as $D_t \leq  16L_0^2 (d+4)^2 + 2dV_f^2{\delta^{-2}}$. 
Then, by telescoping the contraction inequality, we obtain the following bound for the second moment of the residual-feedback gradient estimate
\begin{align}
    \mathbb{E}[\|\tilde{g}_t (x_t)\|^2] \leq \max \Big\{ \mathbb{E}[\|\tilde{g}_0 (x_0)\|^2], \frac{1}{1-\alpha} \Big(16L_0^2 (d+4)^2 +  \frac{2d}{\delta^2} V_f^2 \Big) \Big\}. \label{eqn:secondmoment}
\end{align}
The detailed proof can be found in Appendix~\ref{sec:proof_SecondMoment}. In practice, $\delta$ needs to be sufficiently small so that the smoothed function $f_{\delta,t}$ is close to the original function $f_t$ according to Lemma~\ref{lem:GaussianApprox}.
In this case, the above bound on the second moment of the residual-feedback gradient estimates is dominated by $\mathcal{O}(d{\delta^{-2}} V_f^2)$, which is much smaller than the bound on the second moment of the conventional one-point gradient estimates $\mathcal{O}(d {\delta^{-2}}B^2)$, where $B$ is the uniform bound on $|f_t|$ over time. 
For example, consider the time-varying objective functions, $f_0(x) = 1/2x^2$ and $f_t(x) = f_{t-1}(x) + n_t$, where $n_t$ is Gaussian noise with zero mean at time $t$. Then, it can be verified that Assumption \ref{asmp:BoundVariation} holds with a finite $V_f$ whereas the second moment of $f_t(x)$ is unbounded over time.
As a result, the variance of the residual feedback gradient estimates can be significantly smaller than that of the conventional one-point feedback gradient estimates. 



The following result characterizes the regret of ZO with residual feedback when the objective function $f_t$ is convex and Lipschitz.
\begin{thm}[Regret for Convex Lipschitz $f_t$]\label{thm: convex_Lip}
	Let Assumption~\ref{asmp:BoundVariation} hold. Assume that $f_t\in C^{0,0}$ is convex with Lipschitz constant $L_0$ for all $t$ and $\|x_0 - x^\ast\| \leq R$. Run ZO with residual feedback for $T > R^2$ iterations with $\eta = R^{\frac{3}{2}} ({2\sqrt{2} L_0 \sqrt{d} T^{\frac{3}{4}}})^{-1}$ and $\delta = \sqrt{R} {T^{-\frac{1}{4}}}$. Then, we have that
	\begin{align}
	R_T \leq & \; \sqrt{2} L_0 \sqrt{dR} T^{\frac{3}{4}}  + \frac{\mathbb{E} \big[ \|\tilde{g}_0(x_0)\|^2 \big] R^{\frac{3}{2}}}{2 \sqrt{2d} L_0 T^{\frac{3}{4}}} + 8\sqrt{2} \frac{(d+4)^2}{\sqrt{d}} L_0 R^{\frac{3}{2}} T^{\frac{1}{4}} \nonumber \\
	&  + 2 L_0 \sqrt{dR} T^{\frac{3}{4}} + \sqrt{2dR} V_f^2 {L_0}^{-1} T^{\frac{3}{4}}.
	\end{align}
Asymptotically, we have $R_T = \mathcal{O}( (L_0 +{L_0}^{-1} V_f^2) \sqrt{dR}  T^{\frac{3}{4}})$.
\end{thm}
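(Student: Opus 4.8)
The plan is to run the textbook projected-online-gradient-descent analysis against the comparator $x^\ast = \argmin_{x\in\mathcal{X}}\sum_{t=0}^{T-1}f_t(x)$, with one essential modification: the sum $\sum_{t}\mathbb{E}\|\widetilde g_t(x_t)\|^2$ is controlled through the contraction recursion of Lemma~\ref{lem:BoundSecondMoment_Det} rather than by a crude per-step bound, which is exactly what replaces the usual $B^2$ dependence by a $V_f^2$ dependence.

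\textbf{Step 1 (one-step inequality).} Using non-expansiveness of $\Pi_{\mathcal{X}}$, for every realization $\|x_{t+1}-x^\ast\|^2 \le \|x_t-x^\ast\|^2 - 2\eta\,\inner{\widetilde g_t(x_t)}{x_t-x^\ast} + \eta^2\|\widetilde g_t(x_t)\|^2$, hence $\inner{\widetilde g_t(x_t)}{x_t-x^\ast} \le \tfrac{1}{2\eta}\big(\|x_t-x^\ast\|^2-\|x_{t+1}-x^\ast\|^2\big) + \tfrac{\eta}{2}\|\widetilde g_t(x_t)\|^2$. Taking expectations and conditioning on everything except the fresh draw $u_t$, Lemma~\ref{lem:UnbiasedEstimate_Noiseless} gives $\mathbb{E}[\inner{\widetilde g_t(x_t)}{x_t-x^\ast}] = \mathbb{E}[\inner{\nabla f_{\delta,t}(x_t)}{x_t-x^\ast}]$; since $f_t\in C^{0,0}$ is convex so is $f_{\delta,t}$, so $\inner{\nabla f_{\delta,t}(x_t)}{x_t-x^\ast}\ge f_{\delta,t}(x_t)-f_{\delta,t}(x^\ast)$, and Lemma~\ref{lem:GaussianApprox} turns the right side into $f_t(x_t)-f_t(x^\ast)-2\delta L_0\sqrt d$. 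Summing over $t=0,\dots,T-1$, telescoping the distance terms and using $\|x_0-x^\ast\|\le R$ yields
\[
R_T \;\le\; \frac{R^2}{2\eta} \;+\; \frac{\eta}{2}\sum_{t=0}^{T-1}\mathbb{E}\big[\|\widetilde g_t(x_t)\|^2\big] \;+\; 2\delta L_0\sqrt d\,T .
\]

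\textbf{Step 2 (variance term via telescoping).} Unrolling $\mathbb{E}\|\widetilde g_t(x_t)\|^2 \le \alpha\,\mathbb{E}\|\widetilde g_{t-1}(x_{t-1})\|^2 + D$ from Lemma~\ref{lem:BoundSecondMoment_Det}, with $\alpha = 4dL_0^2\eta^2\delta^{-2}$ and (under Assumption~\ref{asmp:BoundVariation}) $D = 16L_0^2(d+4)^2 + 2dV_f^2\delta^{-2}$, and summing the resulting geometric series gives $\sum_{t=0}^{T-1}\mathbb{E}\|\widetilde g_t(x_t)\|^2 \le (1-\alpha)^{-1}\big(\mathbb{E}\|\widetilde g_0(x_0)\|^2 + TD\big)$ whenever $\alpha<1$. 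For the stated $\eta = R^{3/2}(2\sqrt2 L_0\sqrt d\,T^{3/4})^{-1}$ and $\delta = \sqrt R\,T^{-1/4}$ one computes $\alpha = R^2/(2T) < 1/2$ because $T>R^2$, so $(1-\alpha)^{-1}\le 2$. Plugging these values into the three resulting pieces $\tfrac{R^2}{2\eta}$, $\eta\,\mathbb{E}\|\widetilde g_0(x_0)\|^2$, $16L_0^2(d+4)^2\eta T$, $2dV_f^2\delta^{-2}\eta T$, together with $2\delta L_0\sqrt d\,T$, is a direct algebra exercise producing the five displayed terms (the $\tfrac{R^2}{2\eta}$ and $2\delta L_0\sqrt d\,T$ pieces both equal a constant times $L_0\sqrt{dR}\,T^{3/4}$, the $V_f^2$ piece is $\Theta(L_0^{-1}V_f^2\sqrt{dR}\,T^{3/4})$, while the $\mathbb{E}\|\widetilde g_0\|^2$ piece is $\Theta(T^{-3/4})$ and the $(d+4)^2$ piece is $\Theta(T^{1/4})$); collecting the $T^{3/4}$-order terms gives $R_T=\mathcal O\big((L_0+L_0^{-1}V_f^2)\sqrt{dR}\,T^{3/4}\big)$.

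\textbf{Main obstacle.} The only genuinely delicate point is the conditional-expectation bookkeeping in Step~1: one must observe that the fresh Gaussian $u_t$ is independent not merely of $u_{t-1},x_{t-1}$ (as in the proof of Lemma~\ref{lem:UnbiasedEstimate_Noiseless}) but of the entire function sequence $f_0,\dots,f_{T-1}$, so that $\mathbb{E}[\inner{\widetilde g_t(x_t)}{x_t-x^\ast}]=\mathbb{E}[\inner{\nabla f_{\delta,t}(x_t)}{x_t-x^\ast}]$ remains valid even though the static comparator $x^\ast$ depends on all of $f_0,\dots,f_{T-1}$; once this is settled, everything reduces to the standard online-gradient-descent telescoping combined with the geometric-series bound on the second moments from Lemma~\ref{lem:BoundSecondMoment_Det}.
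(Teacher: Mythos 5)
Your proposal is correct and follows essentially the same route as the paper's proof: the standard projected-OGD telescoping against $x^\ast$ with convexity of $f_{\delta,t}$ and the $2\delta L_0\sqrt d\,T$ smoothing penalty from Lemma~\ref{lem:GaussianApprox}, combined with the telescoped/geometric-series bound on $\sum_t\mathbb{E}\|\widetilde g_t(x_t)\|^2$ from Lemma~\ref{lem:BoundSecondMoment_Det} under Assumption~\ref{asmp:BoundVariation}, and the same choices of $\eta,\delta$ giving $\alpha=R^2/(2T)\le 1/2$. The conditional-independence point you flag (that $u_t$ is independent of the whole function sequence, hence of the comparator $x^\ast$) is exactly the justification the paper uses implicitly when it "takes expectation over $u_t$," so your treatment is if anything slightly more careful.
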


The proof can be found in Appendix~\ref{sec:proof_convexLip}. To the best of our knowledge, the best known regret for ZO with conventional one-point feedback is of the order $\mathcal{O}(\sqrt{dL_0RB} T^{\frac{3}{4}})$ \cite{gasnikov2017stochastic}. Therefore, our regret bound is tighter if the function variation satisfies $V_f^2 \le \mathcal{O}(B^{\frac{1}{2}} L_0^{\frac{3}{2}})$. 
Essentially, using the proposed residual feedback gradient estimator, the regret of ZO  no longer depends on the uniform bound of the function value, which can be very large in practice. Instead, our regret only relies on how fast the function varies over time.
Note that knowledge of the neighborhood $R$ in Theorem~\ref{thm: convex_Lip} allows to select the stepsize $\eta$ and the parameter $\delta$ so that a better regret rate can be achieved that depends on $R$. However, knowledge of $R$ is not required and ZO 
with residual feedback converges from any initial point $x_0$. When the parameter $R$ is unknown, we can choose $\eta = ({2\sqrt{2} L_0 \sqrt{d} T^{\frac{3}{4}}})^{-1}$ and $\delta = {T^{-\frac{1}{4}}}$ and obtain the regret bound $R_T \le \mathcal{O}(L_0 R^2 \sqrt{d} T^{\frac{3}{4}} +{L_0}^{-1} \sqrt{d} V_f^2 T^{\frac{3}{4}})$. The proof can be found in Appendix~\ref{sec:proof_convexLip}.

\begin{rem}
We note that the complexity bound in Theorem 3.2 generally depends on the values of the Lipschitz parameters $L_0$, $L_1$ and the constant $V_f^2$. Specifically, choose $\eta = R^{\frac{3}{2}}(2\sqrt{2} L_0 \sqrt{d} T^{\frac{3}{4}})^{-1}$ and $\delta = \sqrt{R}L_0^{-q}T^{-\frac{1}{4}}$ with $q>0$ as a tuning parameter, and we obtain that $R_T = \mathcal{O}((L_0 +{L_0}^{1-q} + L_0^{2q-1} V_f^2) \sqrt{dR}  T^{\frac{3}{4}})$ when $T \geq L_0^{2q}R^2$. If $L_0 < 1$, we can choose $q = 1$ to achieve the bound $R_T = \mathcal{O}((L_0 + L_0V_f^2) \sqrt{dR}  T^{\frac{3}{4}})$. On the other hand, if $L_0 \ge 1$, we can choose $q = 0$ to achieve the bound $R_T = \mathcal{O}( (L_0 +{L_0}^{-1} V_f^2) \sqrt{dR}  T^{\frac{3}{4}})$. We note that the dependence of the bounds in Theorems~\ref{thm: convex_smooth}, \ref{thm:Online_Nonconvex_Nonsmooth} and \ref{thm:Online_Nonconvex_smooth} on $L_0, L_1$ can also be optimized in a similar way by properly choosing $\delta$.
\end{rem}

Next, we present the regret of ZO with residual feedback when the objective function $f_t$ is convex and smooth. 

\begin{thm}[Regret for Convex Smooth $f_t$]\label{thm: convex_smooth}
	Let Assumption~\ref{asmp:BoundVariation} hold. Assume that $f_t(x)\in C^{0,0}\cap C^{1,1}$ is convex with Lipschitz constant $L_0$ and smoothness constant $L_1$ for all $t$, and assume that $\|x_0 - x^\ast\| \leq R$.
	Run ZO with residual feedback for $T > R^2$ iterations with $\eta = R^{\frac{4}{3}} ({2\sqrt{2} L_0 d^\frac{2}{3} T^{\frac{2}{3}}})^{-1}$ and $\delta = R^{\frac{1}{3}} {d^{-\frac{1}{6}} T^{-\frac{1}{6}}}$. Then, we have that
	\begin{align}
	R_T  \leq & \;  \sqrt{2} L_0 d^\frac{2}{3} R^{\frac{2}{3}} T^{\frac{2}{3}} + \frac{\mathbb{E} \big[ \|\tilde{g}_0(x_0)\|^2 \big] R^{\frac{4}{3}}}{2\sqrt{2} L_0 d^\frac{2}{3} T^{\frac{2}{3}}}
	+ 8\sqrt{2} L_0 \frac{(d+4)^2}{d^\frac{2}{3}} R^{\frac{4}{3}} T^\frac{1}{3}  \nonumber \\
	& +  2 L_1 d^\frac{2}{3}R^{\frac{2}{3}} T^\frac{2}{3} + \sqrt{2} {L_0}^{-1} d^\frac{2}{3} R^{\frac{2}{3}} V_f^2 T^\frac{2}{3}.
	\end{align}
Asymptotically, we have that $R_T= \mathcal{O}((L_0 + L_1 + {L_0}^{-1} V_f^2 ) (dRT)^\frac{2}{3})$.
\end{thm}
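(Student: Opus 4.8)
The plan is to run the standard online projected–gradient–descent argument for the iterates~\eqref{eqn:SGD}, but with three non-routine ingredients supplied by the earlier results: unbiasedness of the residual estimator (Lemma~\ref{lem:UnbiasedEstimate_Noiseless}), the Gaussian–smoothing approximation bound \emph{for smooth functions} (Lemma~\ref{lem:GaussianApprox}), and the contraction bound on its second moment (Lemma~\ref{lem:BoundSecondMoment_Det}, telescoped as in~\eqref{eqn:secondmoment}). First I would write the per-step descent inequality: by non-expansiveness of $\Pi_{\mathcal{X}}$, $\|x_{t+1}-x^\ast\|^2 \le \|x_t-x^\ast\|^2 - 2\eta\langle \tilde g_t(x_t), x_t-x^\ast\rangle + \eta^2\|\tilde g_t(x_t)\|^2$. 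Conditioning on the history up to time $t$ (so $x_t$ is fixed while $u_t,f_t$ remain fresh) and invoking Lemma~\ref{lem:UnbiasedEstimate_Noiseless} replaces $\mathbb{E}\langle \tilde g_t(x_t),x_t-x^\ast\rangle$ by $\mathbb{E}\langle \nabla f_{\delta,t}(x_t),x_t-x^\ast\rangle$. Since $f_t$ is convex, its smoothing $f_{\delta,t}$ is convex (an average of translates of $f_t$), so $\langle \nabla f_{\delta,t}(x_t),x_t-x^\ast\rangle \ge f_{\delta,t}(x_t)-f_{\delta,t}(x^\ast)$, which after rearrangement yields
\[
\mathbb{E}\big[f_{\delta,t}(x_t)-f_{\delta,t}(x^\ast)\big] \le \frac{1}{2\eta}\big(\mathbb{E}\|x_t-x^\ast\|^2-\mathbb{E}\|x_{t+1}-x^\ast\|^2\big) + \frac{\eta}{2}\mathbb{E}\|\tilde g_t(x_t)\|^2.
\]

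Next I would pass from $f_{\delta,t}$ back to $f_t$: the smooth case of Lemma~\ref{lem:GaussianApprox} gives $|f_{\delta,t}(x)-f_t(x)| \le \delta^2 L_1 d$, hence $f_t(x_t)-f_t(x^\ast) \le f_{\delta,t}(x_t)-f_{\delta,t}(x^\ast)+2\delta^2 L_1 d$. This is exactly the place where smoothness improves on the Lipschitz analysis of Theorem~\ref{thm: convex_Lip}, whose approximation error is the larger $\delta L_0\sqrt d$, and it accounts for the more favourable $\delta$-scaling here. Summing over $t=0,\dots,T-1$, the $\|x_t-x^\ast\|^2$ terms telescope to at most $\|x_0-x^\ast\|^2 \le R^2$, the approximation error contributes $2T\delta^2 L_1 d$, and there remains $\tfrac{\eta}{2}\sum_t \mathbb{E}\|\tilde g_t(x_t)\|^2$. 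For this last sum I would unroll the contraction of Lemma~\ref{lem:BoundSecondMoment_Det} with rate $\alpha = 4dL_0^2\eta^2/\delta^2$ and perturbation bounded (via Assumption~\ref{asmp:BoundVariation}) by $D \le 16L_0^2(d+4)^2 + 2dV_f^2/\delta^2$, obtaining by a geometric series $\sum_{t=0}^{T-1}\mathbb{E}\|\tilde g_t(x_t)\|^2 \le \tfrac{1}{1-\alpha}\big(\mathbb{E}\|\tilde g_0(x_0)\|^2 + DT\big)$. The point worth emphasizing is that the initial second moment enters with an $O(1)$ factor, not a factor of $T$ — this is what ultimately produces the $T^{2/3}$ rate rather than something worse.

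Finally I would substitute $\eta = R^{4/3}(2\sqrt2 L_0 d^{2/3}T^{2/3})^{-1}$ and $\delta = R^{1/3} d^{-1/6}T^{-1/6}$ into
\[
R_T \le \frac{R^2}{2\eta} + \frac{\eta}{2(1-\alpha)}\big(\mathbb{E}\|\tilde g_0(x_0)\|^2 + DT\big) + 2T\delta^2 L_1 d,
\]
check that $\alpha = R^2/(2T) \le 1/2$ as soon as $T>R^2$ so that $1/(1-\alpha)\le 2$, and collect the powers of $T,d,R$; the five resulting terms match the stated bound, each of order at most $(dRT)^{2/3}$, giving $R_T = \mathcal{O}((L_0+L_1+L_0^{-1}V_f^2)(dRT)^{2/3})$.

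The main obstacle is not any single estimate but two structural points: (i) getting the filtration/conditioning right so that Lemma~\ref{lem:UnbiasedEstimate_Noiseless} can be applied term-by-term inside the telescoped sum, and (ii) controlling the variance of the residual estimator through the \emph{recursion} of Lemma~\ref{lem:BoundSecondMoment_Det} instead of a uniform bound — this is what lets the regret depend on $V_f$ rather than on a uniform bound $B$, and it also forces $\eta$ and $\delta$ to be balanced jointly so that the contraction factor $\alpha$ stays below $1$. Everything else is bookkeeping of constants.
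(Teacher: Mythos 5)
Your proposal is correct and follows essentially the same route as the paper's proof: the projected-gradient regret argument with unbiasedness and convexity of $f_{\delta,t}$, the smooth-case approximation error $\delta^2 L_1 d$ replacing $\delta L_0\sqrt{d}$, the telescoped second-moment contraction under Assumption~\ref{asmp:BoundVariation}, and the same parameter choices with $\alpha = R^2/(2T)\le 1/2$. Only the bookkeeping of absolute constants differs slightly from the paper's displayed bound.
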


The proof can be found in Appendix~\ref{sec:proof_convexSmooth}. To the best of our knowledge, the best known regret for ZO with conventional one-point feedback for convex and smooth problems is of the order $\mathcal{O}( L_1^\frac{1}{3}(dRBT)^{\frac{2}{3}})$ \cite{gasnikov2017stochastic}. Therefore, our regret bound
is tighter if the function variation satisfies $V_f^2 \le \mathcal{O}(B^{\frac{2}{3}} L_1^{\frac{1}{3}}L_0)$. Our numerical experiments in Section~\ref{sec:exp} show that ZO with residual feedback always outperforms ZO with conventional one-point feedback in practice.


\section{ZO with Residual Feedback for Non-Convex Online Optimization}
\label{sec:Deterministic_NonconvexOnline}
In this section, we analyze the regret of ZO with residual feedback for the unconstrained online bandit problem (P) where the objective functions $\{f_t\}_{t=0,...,T-1}$ are non-convex. 
To the best of our knowledge, this is the first time that a one-point zeroth-order method is studied for non-convex online optimization.
Throughout this section, we make the following assumption on the objective functions.

\begin{asmp}\label{asmp:BoundAccumVariation}
	There exist $W_T, \widetilde{W}_T >0$ such that the following conditions hold for all $t$. 
	\begin{enumerate}[leftmargin=*,topsep=0pt,noitemsep]
		\item $\sum_{t=1}^{T}\mathbb{E} [ f_{\delta,t}(x_t) - f_{\delta, t-1}(x_t) ] \leq W_T$, where the expectation is taken with respect to $x_t$ and the random smoothed objective functions $f_{\delta, t-1}$, $f_{\delta, t}$.
		\item $\sum_{t=1}^{T}\mathbb{E} [ | f_t(x_{t-1} + \delta u_{t-1}) - f_{t-1}(x_{t-1} + \delta u_{t-1}) |^2 ] \leq \widetilde{W}_T \;$, where the expectation is taken with respect to $x_{t-1}$, the random vector $u_{t-1}$ and the random objective functions $f_{t-1}$, $f_{t}$.
	\end{enumerate}
\end{asmp}
The above two conditions in Assumption~\ref{asmp:BoundAccumVariation} measure the accumulated first-order and second-order function variations. A similar assumption is made in \cite{roy2019multi}. 

First, we consider the case where $\{f_t\}_t$ are nonconvex and Lipschitz continuous functions. 
Since the objective function $f_t$ is not necessarily differentiable, i.e., $\nabla f(t)$ is not well defined, we define the regret as the accumulated gradient of the smoothed function, i.e.,
$R_{g, \delta}^T := \sum_{t=0}^{T-1} \mathbb{E}[\| \nabla f_{\delta,t}(x_t) \|^2].$
In addition, similar to \cite{nesterov2017random}, we require that the smoothed function $f_{\delta, t}$ is close to the original function $f_t$ such that $|f_{\delta, t}(x) - f_t(x)| \leq \epsilon_f$ for all $t$. To satisfy this condition, we need to choose $\delta \leq (\sqrt{d}L_0)^{-1} \epsilon_f$ according to Lemma~\ref{lem:GaussianApprox}.
Then, we can show the following regret bound for ZO with residual feedback. 


\begin{thm}[Nonconvex Lipschitz $f_t$]\label{thm:Online_Nonconvex_Nonsmooth}
Let Assumptions~\ref{asmp:BoundAccumVariation} hold. Assume that $f_t \in C^{0,0}$ with Lipschitz constant $L_0$ and that $f_t$ is bounded below by $f_t^\ast$ for all $t$. Run ZO with residual feedback for $T>(d\epsilon_f)^{-1}$ iterations with $\eta = \epsilon_f^{\frac{3}{2}} ({  2\sqrt{2} L_0^2 d^{\frac{3}{2}} T^{\frac{1}{2}}})^{-1}$ and $\delta = \epsilon_f ({d^\frac{1}{2} L_0})^{-1}$. Then, we have that
	\begin{align} \label{eqn:Nonsmooth_2}
	R_{g, \delta}^T \leq & \; 2\sqrt{2}L_0^2 \big( \mathbb{E}[f_{\delta,0}(x_0)] - f_{\delta,T}^\ast + W_T \big) d^{\frac{3}{2}}{\epsilon_f^{-\frac{3}{2}}} T^{\frac{1}{2}} + \frac{\epsilon_f^{\frac{1}{2}} \mathbb{E} \big[ \|\tilde{g}_0(x_0)\|^2 \big]}{2\sqrt{2d T}} \nonumber \\
	& + 4\sqrt{2}L_0\epsilon_f^{\frac{1}{2}} \frac{(d+4)^2}{d^{\frac{1}{2}}} T^{\frac{1}{2}} + \frac{L_0^2}{\sqrt{2}} \frac{d^{\frac{3}{2}} \widetilde{W}_T}{\epsilon_f^{\frac{3}{2}} T^{\frac{1}{2}}}.
	\end{align}
Asymptotically, we have $R_{g, \delta}^T = \mathcal{O}( d^{\frac{3}{2}}L_0^2 \epsilon_f^{-\frac{3}{2}}(W_T + \widetilde{W}_T T^{-1}) T^{\frac{1}{2}} + {d^{\frac{3}{2}}} L_0  \epsilon_f^{\frac{1}{2}} T^{\frac{1}{2}} )$.
\end{thm}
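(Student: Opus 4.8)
The plan is to run the standard descent-lemma argument on the smoothed functions $f_{\delta,t}$, which are $C^{1,1}$ by Lemma~\ref{lem:SmoothedFunctionLipschitiz} with smoothness $L_1 = \sqrt{d}\,\delta^{-1}L_0$, and then to control the extra error terms coming from (i) the non-stationarity $f_{\delta,t}-f_{\delta,t-1}$ and (ii) the variance of the residual-feedback estimator via the second-moment bound of Lemma~\ref{lem:BoundSecondMoment_Det}. First I would write the $L_1$-smoothness inequality for $f_{\delta,t}$ evaluated at $x_{t+1} = x_t - \eta\tilde g_t(x_t)$ (this is the unconstrained case, so $\Pi_{\mathcal X}$ is the identity), obtaining
\[
f_{\delta,t}(x_{t+1}) \le f_{\delta,t}(x_t) - \eta \inner{\nabla f_{\delta,t}(x_t)}{\tilde g_t(x_t)} + \tfrac{L_1\eta^2}{2}\|\tilde g_t(x_t)\|^2 .
\]
Taking conditional expectation and using Lemma~\ref{lem:UnbiasedEstimate_Noiseless} ($\mathbb{E}[\tilde g_t(x_t)\mid x_t] = \nabla f_{\delta,t}(x_t)$) turns the inner-product term into $-\eta\|\nabla f_{\delta,t}(x_t)\|^2$. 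Rearranging gives
\[
\eta\,\mathbb{E}\|\nabla f_{\delta,t}(x_t)\|^2 \le \mathbb{E}[f_{\delta,t}(x_t) - f_{\delta,t}(x_{t+1})] + \tfrac{L_1\eta^2}{2}\mathbb{E}\|\tilde g_t(x_t)\|^2 .
\]

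Next I would telescope over $t = 0,\dots,T-1$. The telescoping of $f_{\delta,t}(x_t) - f_{\delta,t}(x_{t+1})$ is not exact because the function index shifts; I would insert and cancel $f_{\delta,t}(x_{t+1}) - f_{\delta,t+1}(x_{t+1})$ so that
\[
\sum_{t=0}^{T-1}\mathbb{E}[f_{\delta,t}(x_t)-f_{\delta,t}(x_{t+1})] = \mathbb{E}[f_{\delta,0}(x_0)] - \mathbb{E}[f_{\delta,T-1}(x_T)] + \sum_{t=1}^{T-1}\mathbb{E}[f_{\delta,t}(x_t)-f_{\delta,t-1}(x_t)],
\]
and bound the last sum by $W_T$ using Assumption~\ref{asmp:BoundAccumVariation}(1), and $-\mathbb{E}[f_{\delta,T-1}(x_T)] \le -f^\ast_{\delta,T-1} \le -f^\ast_{\delta,T} + (\text{nonneg.})$ — here one is a bit loose and absorbs things into the $f^\ast_{\delta,T}$ and $W_T$ notation as the statement does. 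For the second-moment sum I would apply Lemma~\ref{lem:BoundSecondMoment_Det}: with $\alpha = 4dL_0^2\eta^2\delta^{-2}$, summing the contraction $\mathbb{E}\|\tilde g_t\|^2 \le \alpha\,\mathbb{E}\|\tilde g_{t-1}\|^2 + D_t$ over $t$ yields $\sum_{t=0}^{T-1}\mathbb{E}\|\tilde g_t\|^2 \le \frac{1}{1-\alpha}\big(\mathbb{E}\|\tilde g_0(x_0)\|^2 + \sum_{t=1}^{T-1} D_t\big)$, and $\sum_t D_t \le 16L_0^2(d+4)^2 T + 2d\delta^{-2}\widetilde W_T$ by Assumption~\ref{asmp:BoundAccumVariation}(2). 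Then divide through by $\eta$.

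Finally I would substitute the prescribed $\eta = \epsilon_f^{3/2}(2\sqrt2 L_0^2 d^{3/2}T^{1/2})^{-1}$ and $\delta = \epsilon_f(d^{1/2}L_0)^{-1}$, check that $\alpha < 1$ (indeed $\alpha = 4dL_0^2\eta^2\delta^{-2} = \tfrac12\epsilon_f d^{-1}T^{-1} < 1$ precisely when $T > (d\epsilon_f)^{-1}$, which is the stated iteration requirement, and then $\frac{1}{1-\alpha} \le 2$), compute $L_1 = \sqrt d\,\delta^{-1}L_0 = d L_0^2\epsilon_f^{-1}$, and collect the four resulting terms to match the claimed bound — the $\eta^{-1}(\mathbb{E}[f_{\delta,0}(x_0)]-f^\ast_{\delta,T}+W_T)$ term gives the $d^{3/2}L_0^2\epsilon_f^{-3/2}T^{1/2}$ factor, the $\frac{L_1\eta}{2(1-\alpha)}$ times $\mathbb{E}\|\tilde g_0\|^2$ gives the $\epsilon_f^{1/2}/\sqrt{2dT}$ term, the $(d+4)^2$ term gives $L_0\epsilon_f^{1/2}d^{-1/2}T^{1/2}$, and the $\widetilde W_T$ term gives $L_0^2 d^{3/2}\widetilde W_T\epsilon_f^{-3/2}T^{-1/2}$. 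The main obstacle I anticipate is the bookkeeping in the telescoping step — handling the mismatch between the function index and the iterate index cleanly, and making sure the lower-bound term $f^\ast_{\delta,T}$ and the constant $W_T$ are used consistently — rather than any single hard inequality; everything else is a routine (if careful) substitution of the stepsize schedule.
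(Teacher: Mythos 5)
Your proposal follows essentially the same route as the paper's proof: the descent lemma applied to $f_{\delta,t}$ with $L_{1,\delta}=\sqrt{d}\,\delta^{-1}L_0$, unbiasedness of $\tilde g_t$, a telescoping step whose index mismatch is absorbed into $W_T$ via Assumption~\ref{asmp:BoundAccumVariation} (your rearrangement is algebraically identical to the paper's add-and-subtract of $f_{\delta,t+1}(x_{t+1})$), the summed contraction bound from Lemma~\ref{lem:BoundSecondMoment_Det} with $\alpha=4dL_0^2\eta^2\delta^{-2}=\epsilon_f/(2dT)\le 1/2$, and the same parameter substitution. The term-by-term bookkeeping checks out, so this is correct and not a different approach.
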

The proof can be found in Appendix~\ref{sec:proof_nonconvexLip}. Theorem~\ref{thm:Online_Nonconvex_Nonsmooth} implies that the regret bound satisfies $R_{g,\delta}^T/T \to 0$ whenever $W_T=o(T^\frac{1}{2}\epsilon_f^{\frac{3}{2}})$ and $\widetilde{W}_T=o(T^{\frac{3}{2}}\epsilon_f^{\frac{3}{2}})$. In particular, if the bounded variation Assumption~\ref{asmp:BoundAccumVariation} holds, then we have $\widetilde{W}_T\le \mathcal{O}(TV_f^2)$, and it suffices to let $T^{-\frac{1}{2}}\epsilon_f^{-\frac{3}{2}} = o(1)$.  

Next, we assume that the objective functions $f_t$ in (P) are non-convex and smooth
and study the regret $R_{g}^T := \sum_{t=0}^{T-1} \mathbb{E}[\| \nabla f_{t}(x_t) \|^2]$.
Specifically, we provide the following regret bound for ZO with residual-feedback.

\begin{thm}[Nonconvex smooth $f_t$]\label{thm:Online_Nonconvex_smooth}
Let Assumptions~\ref{asmp:BoundAccumVariation} hold. Assume that $f_t \in C^{0,0}\cap C^{1,1}$ with Lipschitz constant $L_0$ and smoothness constant $L_1$ and that $f_t$ is bounded below by $f_t^\ast$ for all $t$. Run ZO with residual feedback for $T$ iterations with $\eta = ({2 \sqrt{2} L_{0} d^{\frac{4}{3}} T^{\frac{1}{2}}})^{-1}$ and $\delta = ({d^{\frac{5}{6}} T^\frac{1}{4}})^{-1}$. Then, 
	\begin{align} \label{eqn:Smooth_2}
	R_{g}^T  \leq &\; 4\sqrt{2} L_0 \big( \mathbb{E}[f_{\delta,0}(x_0)] - f_{\delta,T}^\ast + W_T \big) d^{\frac{4}{3}} T^{\frac{1}{2}} +  \frac{L_1 \mathbb{E} \big[ \|\tilde{g}_0(x_0)\|^2 \big] }{\sqrt{2} L_{0} d^{\frac{4}{3}} T^{\frac{1}{2}}} \nonumber \\
	& + 8\sqrt{2} L_1 L_0 \frac{(d+4)^2}{d^\frac{4}{3}} T^{\frac{1}{2}}
	+ \frac{\sqrt{2} L_1}{L_0}  d^{\frac{4}{3}} \widetilde{W}_T + 2 L_1^2 \frac{(d+3)^3}{d^{\frac{5}{3}}}  T^{\frac{1}{2}}.
	\end{align}
Asymptotically, we have that $R_g^T= \mathcal{O}(d^{\frac{4}{3}} L_0 W_T T^{\frac{1}{2}} + d^{\frac{4}{3}} L_1 {L_0}^{-1} \widetilde{W}_T)$.
\end{thm}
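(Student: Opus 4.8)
The plan is to run the standard descent-lemma argument for nonconvex smooth stochastic gradient descent, but applied to the Gaussian-smoothed functions $f_{\delta,t}$ rather than to $f_t$, and then to convert the resulting bound on $\sum_t \mathbb{E}[\|\nabla f_{\delta,t}(x_t)\|^2]$ into a bound on $R_g^T=\sum_t\mathbb{E}[\|\nabla f_t(x_t)\|^2]$ via the gradient-approximation estimate in Lemma~\ref{lem:GaussianApprox}. Since $f_t\in C^{1,1}$ with constant $L_1$ and smoothing commutes with the gradient, $f_{\delta,t}\in C^{1,1}$ with the same constant $L_1$, so along the unconstrained update $x_{t+1}=x_t-\eta\widetilde g_t(x_t)$ we get
\[
f_{\delta,t}(x_{t+1})\le f_{\delta,t}(x_t)-\eta\inner{\nabla f_{\delta,t}(x_t)}{\widetilde g_t(x_t)}+\tfrac{L_1\eta^2}{2}\|\widetilde g_t(x_t)\|^2 .
\]
Taking conditional expectation given the history and invoking the unbiasedness $\mathbb{E}[\widetilde g_t(x_t)\mid\mathcal F_t]=\nabla f_{\delta,t}(x_t)$ from Lemma~\ref{lem:UnbiasedEstimate_Noiseless} replaces the inner product by $-\eta\|\nabla f_{\delta,t}(x_t)\|^2$; rearranging and taking full expectations isolates $\eta\,\mathbb{E}[\|\nabla f_{\delta,t}(x_t)\|^2]$ on the left.

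Next I would sum over $t=0,\dots,T-1$. The objective is time-varying, so the telescoping is carried out by writing $\mathbb{E}[f_{\delta,t}(x_t)]-\mathbb{E}[f_{\delta,t}(x_{t+1})]=\big(\mathbb{E}[f_{\delta,t}(x_t)]-\mathbb{E}[f_{\delta,t+1}(x_{t+1})]\big)+\big(\mathbb{E}[f_{\delta,t+1}(x_{t+1})]-\mathbb{E}[f_{\delta,t}(x_{t+1})]\big)$: the first parentheses telescope to $\mathbb{E}[f_{\delta,0}(x_0)]-\mathbb{E}[f_{\delta,T}(x_T)]\le\mathbb{E}[f_{\delta,0}(x_0)]-f_{\delta,T}^\ast$, and the sum of the second parentheses is at most $W_T$ by the first condition of Assumption~\ref{asmp:BoundAccumVariation}. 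What remains on the right is $\tfrac{L_1\eta^2}{2}\sum_{t=0}^{T-1}\mathbb{E}[\|\widetilde g_t(x_t)\|^2]$, which I bound using Lemma~\ref{lem:BoundSecondMoment_Det}: with the prescribed $\eta$ and $\delta$ the contraction factor is $\alpha=4dL_0^2\eta^2\delta^{-2}=\tfrac{1}{2\sqrt T}\le\tfrac12$, so summing the recursion $\mathbb{E}[\|\widetilde g_t\|^2]\le\alpha\,\mathbb{E}[\|\widetilde g_{t-1}\|^2]+D_t$ and solving for the sum gives $\sum_{t=0}^{T-1}\mathbb{E}[\|\widetilde g_t(x_t)\|^2]\le\tfrac{1}{1-\alpha}\big(\mathbb{E}[\|\widetilde g_0(x_0)\|^2]+\sum_{t=1}^{T-1}D_t\big)\le 2\big(\mathbb{E}[\|\widetilde g_0(x_0)\|^2]+16L_0^2(d+4)^2T+2d\delta^{-2}\widetilde W_T\big)$, where the second condition of Assumption~\ref{asmp:BoundAccumVariation} bounds $\sum_{t=1}^{T-1}\mathbb{E}[(f_t(x_{t-1}+\delta u_{t-1})-f_{t-1}(x_{t-1}+\delta u_{t-1}))^2]\le\widetilde W_T$.

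Finally, dividing the accumulated inequality by $\eta$ and passing from the smoothed gradient to the true gradient via $\|\nabla f_t(x_t)\|^2\le 2\|\nabla f_{\delta,t}(x_t)\|^2+2\delta^2L_1^2(d+3)^3$ (Lemma~\ref{lem:GaussianApprox} together with $(a+b)^2\le2a^2+2b^2$) yields
\[
R_g^T\le 2\eta^{-1}\big(\mathbb{E}[f_{\delta,0}(x_0)]-f_{\delta,T}^\ast+W_T\big)+2L_1\eta\big(\mathbb{E}[\|\widetilde g_0(x_0)\|^2]+16L_0^2(d+4)^2T+2d\delta^{-2}\widetilde W_T\big)+2\delta^2L_1^2(d+3)^3T ;
\]
substituting $\eta=(2\sqrt2L_0d^{4/3}T^{1/2})^{-1}$ and $\delta=(d^{5/6}T^{1/4})^{-1}$ reproduces, term by term, the five summands in \eqref{eqn:Smooth_2}.

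The main obstacle is the careful bookkeeping forced by the time-varying objective: tracking which randomness each quantity is conditioned on so that unbiasedness applies cleanly, invoking the two conditions of Assumption~\ref{asmp:BoundAccumVariation} with the correct expectations, and — crucially — checking that the prescribed step size keeps $\alpha<1$ (in fact $\le\tfrac12$) so that the second-moment recursion of Lemma~\ref{lem:BoundSecondMoment_Det} genuinely contracts; once these are settled, matching the remaining algebra to the stated constants is routine.
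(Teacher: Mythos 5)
Your proposal is correct and follows essentially the same route as the paper's proof: descent lemma applied to the smoothed functions $f_{\delta,t}$ (with constant $L_1$), unbiasedness from Lemma~\ref{lem:UnbiasedEstimate_Noiseless}, telescoping with the $W_T$ condition, the second-moment recursion of Lemma~\ref{lem:BoundSecondMoment_Det} summed with $\widetilde W_T$ and $\alpha = \tfrac{1}{2\sqrt{T}}\le\tfrac12$, and the conversion $\|\nabla f_t\|^2\le 2\|\nabla f_{\delta,t}\|^2+2\delta^2L_1^2(d+3)^3$. The resulting terms match the stated bound constant for constant under the prescribed $\eta$ and $\delta$.
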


The proof can be found in Appendix~\ref{sec:proof_nonconvexSmooth}. Theorem~\ref{thm:Online_Nonconvex_smooth} implies that the regret bound satisfies $R_{g}^T/T \to 0$ whenever $W_T=o(T^\frac{1}{2})$ and $\widetilde{W}_T=o(T)$. We note that these requirements on $W_T,\widetilde{W}_T$ are weaker than those in the case of nonsmooth problems, as they do not rely on the small parameter $\epsilon_f$.


\section{ZO with Residual Feedback for Stochastic Online Optimization}\label{sec:stochastic}
Our proposed residual feedback gradient estimator can be also extended to solve stochastic online bandit problems. Since the regret analysis is similar to that for deterministic online problems presented before, we only introduce the key technical lemmas and comment on the differences in the proof. Specifically, we consider the following stochastic online bandit problems 
\begin{equation}
\min_{x \in \mathcal{X}} \sum_{t=0}^{T-1} \mathbb{E}[F_t(x;\xi_t)], \quad \text{where}~ \mathbb{E}[ F_t(x;\xi_t)] = f_t(x), \forall t, \tag{R}
\end{equation}
where $\xi_t$ denotes a certain noise that is independent of $x$.
Different from the deterministic online problems discussed before, the agent here can only query noisy evaluations of the objective function. This covers scenarios where the agent does not have access to the underlying data distribution. 
To solve the above stochastic online problem, we propose the following stochastic residual feedback
\begin{equation}
\label{eqn:GradientEstimate_Noise}
\widetilde{g}_t(x_t) := \frac{u_t}{\delta} \big(F_t(x_t + \delta u_t ; \xi_t) - F_{t-1}(x_{t-1} + \delta u_{t-1} ; \xi_{t-1})\big),	
\end{equation}
where $\xi_{t-1}$ and $\xi_t$ are independent random samples that are sampled at consecutive iterations $t-1$ and $t$, respectively. 
Since the noisy function value $F(x ; \xi_t)$ is an unbiased estimate of the objective function $f_t(x)$, it is straightforward to show that \eqref{eqn:GradientEstimate_Noise} is an unbiased gradient estimate of the function $f_{\delta,t}(x)$. 
To analyze the regret of ZO with stochastic residual feedback, we first consider the  convex case and make the following assumption on the variation of the stochastic objective functions.
\begin{asmp}\label{asmp:BoundedVariance}
	(Bounded stochastic variation) There exists $V_{f, \xi} >0$ such that for all $t$,
	\vspace{-4pt}
	\begin{equation*}
	\label{eqn:BoundedVarianceFunc}
	\mathbb{E}\big[ \big(F_t(x_{t-1} + \delta u_{t-1}, \xi_t) - F_{t-1}(x_{t-1} + \delta u_{t-1}, \xi_{t-1}) \big)^2 \big] \leq V_{f, \xi}^2,
	\end{equation*}
	where the expectation is taken with respect to $x_{t-1}$, the random vector $u_{t-1}$ and the random objective functions $F_{t-1}(\cdot, \xi_{t-1})$, $F_{t}(\cdot, \xi_{t})$.
\end{asmp}
The above assumption generalizes Assumption~\ref{asmp:BoundVariation} to stochastic problems. The bound $V_{f, \xi}^2$ controls both the variation of function over time and the variation due to stochastic sampling. 

The following lemma characterizes the second moment of the stochastic residual feedback gradient estimates. Its proof can be found in Appendix~\ref{sec:proof_BSMStoch}.

\begin{lem}
	\label{lem:BoundSecondMoment_Stoch}
	Assume $F(x, \xi) \in C^{0,0}$ with Lipschitz constant $L_0$ for all $\xi$. Then, under the ZO update rule, we have that 
	\begin{equation*}
	\mathbb{E}[\|\widetilde{g}_t (x_t)\|^2] \leq  \frac{4 d L_0^2 \eta^2}{\delta^2} \mathbb{E}[ \|\widetilde{g}_t(x_{t-1})\|^2] + D_{t, \xi},
	\end{equation*}
	where $D_{t, \xi} := 16L_0^2 (d+4)^2 + \frac{2 d}{\delta^2} \mathbb{E}[ \big( F_t(x_{t-1} + \delta u_{t-1}, \xi_t) - F_{t-1}(x_{t-1} + \delta u_{t-1}, \xi_{t-1})\big)^2 ]$.
\end{lem}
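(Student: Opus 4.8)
The plan is to mirror the proof of Lemma~\ref{lem:BoundSecondMoment_Det}, replacing the deterministic values $f_t(\cdot)$ by the noisy values $F_t(\cdot;\xi_t)$ and carrying along the extra randomness $\xi_t$. First I would insert the intermediate evaluation $F_t(x_{t-1}+\delta u_{t-1};\xi_t)$ and split the residual as
\[
F_t(x_t+\delta u_t;\xi_t) - F_{t-1}(x_{t-1}+\delta u_{t-1};\xi_{t-1}) = A_t + B_t,
\]
where $A_t := F_t(x_t+\delta u_t;\xi_t) - F_t(x_{t-1}+\delta u_{t-1};\xi_t)$ evaluates the \emph{same} realization $F_t(\cdot;\xi_t)$ at two points, and $B_t := F_t(x_{t-1}+\delta u_{t-1};\xi_t) - F_{t-1}(x_{t-1}+\delta u_{t-1};\xi_{t-1})$ is exactly the variation term appearing in $D_{t,\xi}$. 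Then $\|\widetilde g_t(x_t)\|^2 = \delta^{-2}\|u_t\|^2(A_t+B_t)^2 \le 2\delta^{-2}\|u_t\|^2 A_t^2 + 2\delta^{-2}\|u_t\|^2 B_t^2$, so it suffices to bound the two expectations separately.

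For the $A_t$ term I would use $F(\cdot;\xi)\in C^{0,0}$ with constant $L_0$ to get $|A_t| \le L_0\|(x_t-x_{t-1}) + \delta(u_t-u_{t-1})\|$, then invoke nonexpansiveness of $\Pi_{\mathcal{X}}$ together with $x_{t-1}\in\mathcal{X}$ to obtain $\|x_t-x_{t-1}\| \le \eta\|\widetilde g_{t-1}(x_{t-1})\|$, and apply $(a+b)^2\le 2a^2+2b^2$, yielding
\[
A_t^2 \le 2L_0^2\eta^2\|\widetilde g_{t-1}(x_{t-1})\|^2 + 2L_0^2\delta^2\|u_t-u_{t-1}\|^2.
\]
Multiplying by $2\delta^{-2}\|u_t\|^2$ and taking expectations, the key observation is that $u_t$ is sampled fresh and is therefore independent of $\widetilde g_{t-1}(x_{t-1})$, $x_{t-1}$, $u_{t-1}$, $\xi_{t-1}$ and $\xi_t$; using $\mathbb{E}\|u_t\|^2 = d$ produces the contraction factor $4dL_0^2\eta^2\delta^{-2}$ in front of $\mathbb{E}\|\widetilde g_{t-1}(x_{t-1})\|^2$, while bounding $\mathbb{E}[\|u_t\|^2\|u_t-u_{t-1}\|^2]$ through the Gaussian moment identities $\mathbb{E}\|u_t\|^4 = d(d+2)$ and $\mathbb{E}[\|u_t\|^2\|u_{t-1}\|^2] = d^2$ gives a term crudely bounded by $16L_0^2(d+4)^2$.

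For the $B_t$ term, since $u_t$ is independent of $x_{t-1}$, $u_{t-1}$, $\xi_{t-1}$ and $\xi_t$, the expectation factors as $\mathbb{E}[2\delta^{-2}\|u_t\|^2 B_t^2] = 2d\delta^{-2}\,\mathbb{E}[B_t^2]$, which is precisely the second term of $D_{t,\xi}$. Summing the two contributions yields the claimed recursion (with $\mathbb{E}\|\widetilde g_{t-1}(x_{t-1})\|^2$ on the right-hand side, as in Lemma~\ref{lem:BoundSecondMoment_Det}). The only real obstacle is bookkeeping: carefully justifying each conditioning step — in particular that the freshly drawn $u_t$ and the noise $\xi_t$ are independent of the entire history, and that $x_t$ is a function of $u_{0},\dots,u_{t-1}$ and $\xi_0,\dots,\xi_{t-1}$ only — and pushing the Gaussian fourth-moment estimates through the loose constant $16L_0^2(d+4)^2$; there is no conceptual difficulty beyond what already appears in the deterministic case.
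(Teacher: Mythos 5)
Your proposal is correct and follows essentially the same route as the paper's proof: split off the function/noise-variation term $B_t$, use Lipschitz continuity of $F_t(\cdot;\xi_t)$ together with nonexpansiveness of the projection to get $\|x_t-x_{t-1}\|\le\eta\|\widetilde g_{t-1}(x_{t-1})\|$, and exploit independence of the fresh $u_t$ plus Gaussian fourth-moment bounds to obtain the contraction factor and the $16L_0^2(d+4)^2$ constant. The only (cosmetic) difference is that you apply Lipschitzness once to the combined argument difference and then Young's inequality on the norms, whereas the paper inserts the extra intermediate point $F_t(x_{t-1}+\delta u_t,\xi_t)$ and bounds the two resulting differences separately — and you correctly read the right-hand side as $\mathbb{E}\|\widetilde g_{t-1}(x_{t-1})\|^2$, fixing the index typo in the stated lemma.
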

Observe that the above second moment bound is very similar to that in Lemma \ref{lem:BoundSecondMoment_Det}, and the only difference is the perturbation term. Since the perturbation term $D_{t, \xi}$ can be further bounded by leveraging Assumption \ref{asmp:BoundedVariance}, the resulting second moment bound can become almost the same as that in \cref{eqn:secondmoment} for deterministic problems (simply replace $V_f$ in \cref{eqn:secondmoment} by $V_{f,\xi}$). Therefore, the regret analysis of ZO with stochastic residual feedback is the same as that of ZO with residual feedback for deterministic online problems. Consequently, ZO with stochastic residual feedback achieves almost the same regret bounds as those in Theorems \ref{thm: convex_Lip} and \ref{thm: convex_smooth}, and one simply needs to replace $V_f$ by $V_{f,\xi}$. 


In the case of non-convex stochastic online problems, we adopt the following assumption that generalizes Assumption \ref{asmp:BoundAccumVariation}.
\begin{asmp}\label{asmp:BoundAccumVariation_stochastic}
	There exists $W_{T}, \widetilde{W}_{T,\xi} >0$ such that the following two conditions hold for all $t$.
	\begin{enumerate}[leftmargin=*,topsep=0pt,noitemsep]
		\item $\sum_{t=1}^{T}\mathbb{E} [ f_{\delta,t}(x_t) - f_{\delta, t-1}(x_t) ] \leq W_T$, where the expectation is taken with respect to $x_t$ and the random smoothed objective functions $f_{\delta, t-1}$, $f_{\delta, t}$.
		
		\item $\sum_{t=1}^{T}\mathbb{E} [ | F_t(x_{t-1} + \delta u_{t-1};\xi_t) - F_{t-1}(x_{t-1} + \delta u_{t-1};\xi_{t-1}) |^2 ] \leq \widetilde{W}_{T,\xi}$, where the expectation is taken with respect to $x_{t-1}$, the random vector $u_{t-1}$ and the random objective functions $F_{t-1}(\cdot, \xi_{t-1})$, $F_{t}(\cdot, \xi_{t})$.
	\end{enumerate}
\end{asmp}
Then, following similar steps as those in the proofs of Theorems \ref{thm:Online_Nonconvex_Nonsmooth} and \ref{thm:Online_Nonconvex_smooth}, we can obtain similar regret bounds for ZO with stochastic residual feedback (simply replace $W_T, \widetilde{W}_{T}$ in Theorems \ref{thm:Online_Nonconvex_Nonsmooth} and \ref{thm:Online_Nonconvex_smooth} by $W_{T,\xi}, \widetilde{W}_{T,\xi}$, respectively).


\section{Numerical Experiments}
\label{sec:exp}
In this section, we compare the performance of ZO with one-point, two-point and residual feedback in solving two non-stationary reinforcement learning problems, i.e., LQR control and resource allocation, in which either the reward or transition functions are varying over episodes. 
\subsection{Nonstatinoary LQR Control}
We consider an LQR problem with noisy system dynamics. 
The static version of this problem is considered in \cite{fazel2018global,malik2018derivative}. 
Specifically, consider a system whose state $x_k \in \mathbb{R}^{n_x}$ at step $k$ is subject to a transition function $x_{k+1} = A_t x_k + B_t u_k + w_k$,
where $u_k \in \mathbb{R}^{n_u}$ is the action at step $k$, and $A_t \in \mathbb{R}^{n_x \times n_x}$ and $B_t \in \mathbb{R}^{n_x \times n_u}$ are dynamical matrices in episode $t$. These matrices are unknown and changing over episodes. The vector $w_k$ is the noise on the state transition. Specifically, the entries of the dynamical matrices $A_0$ and $B_0$ at episode $0$ are randomly generated from a Gaussian distribution $\mathcal{N}(0, 0.1^2)$. Then, we generate the time-varying dynamical matrices as $A_{t+1} = A_t + 0.01 M_t$ and $B_{t+1} = B_t + 0.01N_t$, where $M_t$ and $N_t$ are random matrices whose entries are uniformly sampled from [0,1].
Moreover, consider a state feedback policy $u_k = K_t x_k$, where $K_t \in \mathbb{R}^{n_u \times n_x}$ is the policy parameter that is fixed during episode $t$. We assume that there exists an optimal policy $K_t^\ast$ so that the discounted accumulated cost function $V_t(K) := \mathbb{E}\big[ \sum_{k = 0}^{H-1} \gamma^{k} (x_k^T Q x_k + u_k^T R u_k)\big]$ at episode $t$ is minimized, where $\gamma \leq 1$ is the discount factor and $H$ is the horizon.
The goal is to track the time-varying optimal policy parameter $K_t^\ast$ so that $V_t(K_t) - V_t(K_t^\ast)$ is small in every episode.
\begin{figure}[t]
	\centering
	\subfigure[\label{fig:lqr}]{
		\includegraphics[width=0.47\columnwidth]{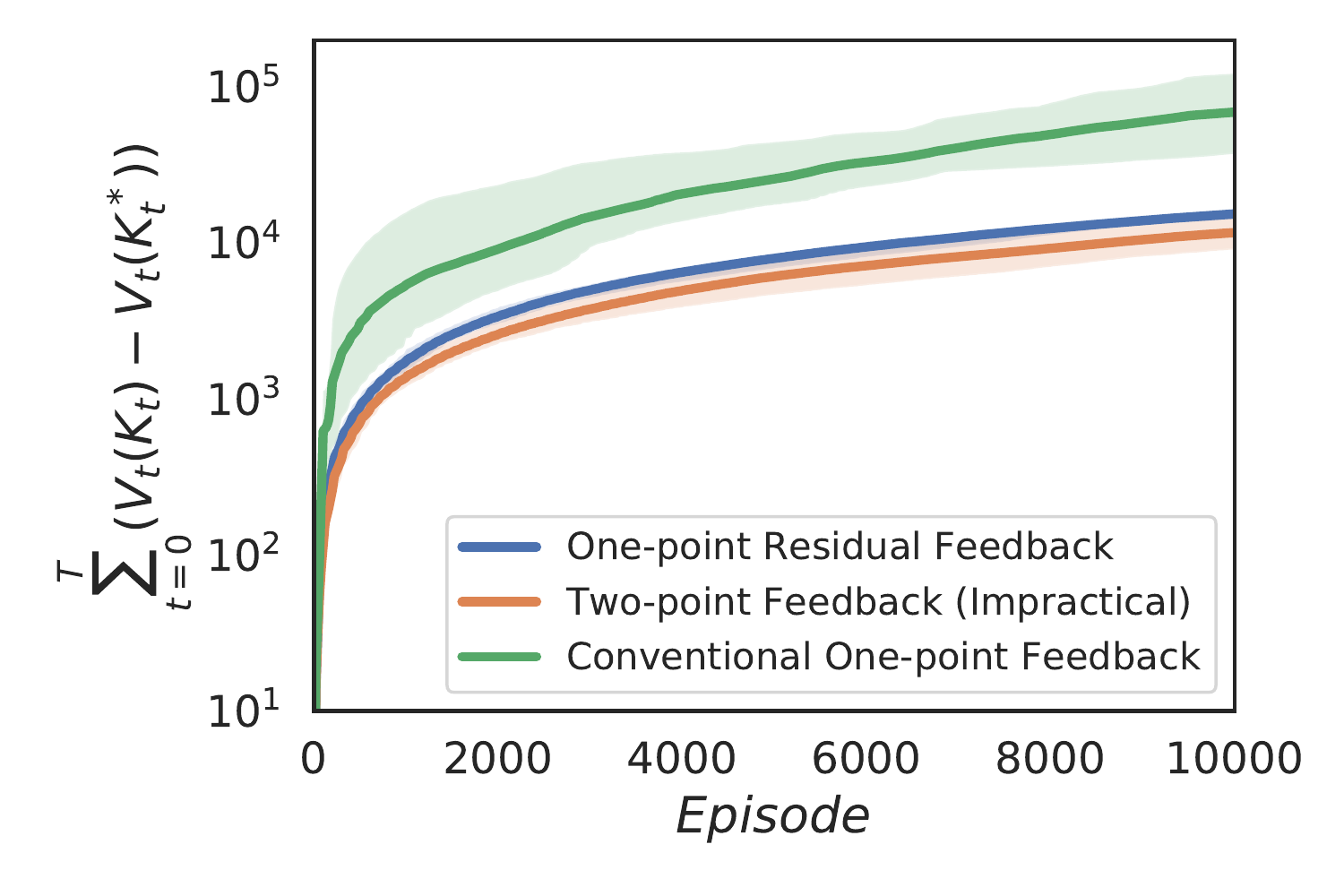}}
	\subfigure[\label{fig:var}]{
		\includegraphics[width=0.47\columnwidth]{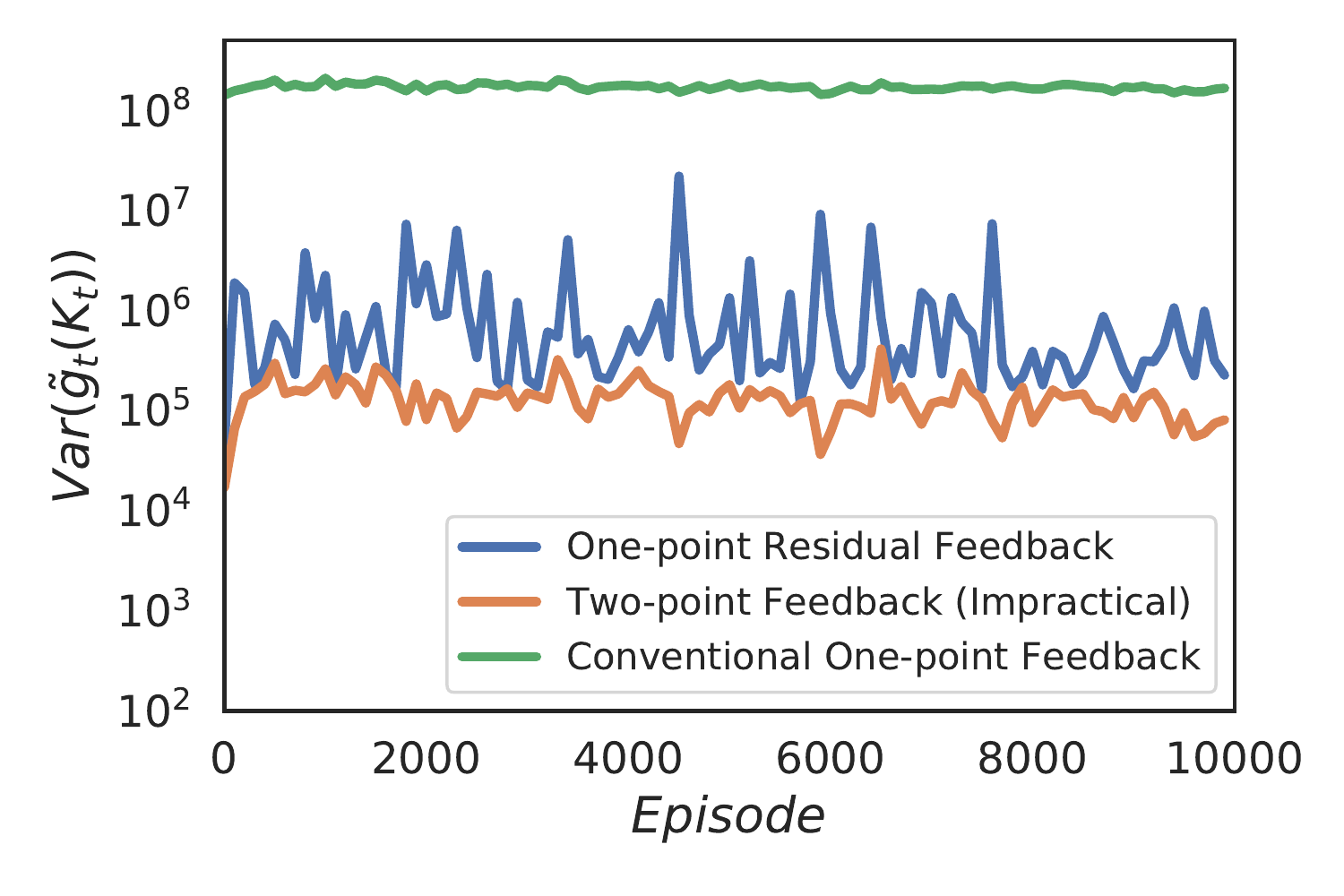}}
	\vspace{-2mm}
	\caption{\small Comparative results of ZO with the proposed one-point residual feedback~\eqref{eqn:GradientEstimate_Noiseless} (blue), the two-point oracle  in \cite{bach2016highly} (orange) and the conventional one-point oracle in \cite{gasnikov2017stochastic} (green) for online policy optimization in nonstationary LQR. Figure 1(a) presents the regrets $\sum_{t = 0}^{T}|V(K_t) - V(K^\ast)|$ achieved using the three diffident oracles and Figure 1(b) presents the variance of the gradient estimates returned by the three methods. The two point method (orange) is infeasible to use in practice and is presented here to serve as a simulation benchmark. }
\end{figure}

We apply the conventional one-point method in \cite{gasnikov2017stochastic} and the proposed residual-feedback method~\eqref{eqn:GradientEstimate_Noise} to solve the above non-stationary LQR problem. The performance of the two-point method in \cite{bach2016highly} is also presented to serve as a benchmark, although it is 
not possible to implement in practice for non-stationary problems. 
This is because the two-point method in \cite{bach2016highly} requires to evaluate value function $V_t$ for two different policy functions at two consecutive episodes. However, evaluating the value function $V_t$ for a given policy during episode $t$ requires to collect samples by executing this policy. Then, during the subsequent episode $t+1$, since the problem is non-stationary, the dynamic matrices change to $A_{t+1}, B_{t+1}$ and so does the value function $V_{t+1}$. Therefore, it is not possible to evaluate the same value function $V_t$ at two different episodes and, as a result, the two-point method in \cite{bach2016highly} is not applicable here. 
Each algorithm is run for $10$ trials, and the stepsizes are optimized for each algorithm separately. The accumulated regrets $\sum_{t = 0}^{T-1}|V(K_t) - V(K^\ast)|$ of the three algorithms are presented in \Cref{fig:lqr}. We observe that ZO with residual feedback achieves a much lower regret than the conventional one-point method and has a comparable performance to that of the two-point method. 
Moreover, we present in \Cref{fig:var} the estimated variance of the gradient estimates returned by these three oracles at the policy iterates over episodes. It can be seen that the variance of the gradient estimates returned by our proposed residual-feedback is close to that of the gradient estimates returned by the two-point feedback and is much smaller than that of the gradient estimates returned by the conventional one-point feedback. This observation validates our theoretical characterization of the second moment of the residual feedback gradient estimates. 
\subsection{Nonstationary Resource Allocation}
We consider a multi-stage resource allocation problem with time-varying sensitivity to the lack of resource supply. Specifically, $16$ agents are located on a $4 \times 4$ grid. During episode $t$, at step $k$, agent $i$ stores $m_i(k)$ amount of resources and has a demand for resources in the amount of $d_i(k)$.
Also, agent $i$ decides to send a fraction of resources $a_{ij}(k) \in [0, 1]$ to its neighbors $j \in  \mathcal{N}_i$ on the grid. The local amount of resources and demands of agent $i$ evolve as $m_i(k+1) = m_i(k) - \sum_{j \in \mathcal{N}_i} a_{ij}(k) m_i(k) + \sum_{j \in \mathcal{N}_i} a_{ji}(k) m_j(k) - d_i(k)$ and $d_i(k) = \psi_i \sin(\omega_i k + \phi_i) + w_{i,k}$,
where $w_{i,k}$ is the noise in the demand. At each step $k$, agent $i$ receives a local cost $r_{i,t}(k)$, such that $r_{i,t}(k) = 0$ when $m_i(k) \geq 0$ and $r_{i,t}(k) = \zeta_t m_i(k)^2$ when $m_i(k) < 0$, where $\zeta_t$ represents the varying sensitivity of the agents to the lack of supply during episode $t$. Let agent $i$ makes its decisions according to a parameterized policy function $\pi_{i, t}(o_i; \theta_{i,t}): \mathcal{O}_i \rightarrow [0,1]^{|\mathcal{N}_i|}$, where $\theta_{i,t}$ is the parameter of the policy function $\pi_{i,t}$ at episode $t$, $o_i \in \mathcal{O}_i$ denotes agent $i$'s local observation. Specifically, we let $o_i(k) = [m_i(k), d_i(k)]^T$. Our goal is to track the time-varying optimal policy so that the accumulated cost over the grid $J_t(\theta_t) = \sum_{i=1}^{16} \sum_{k = 0}^{H} \gamma^{k} r_{i,t}(k)$ during each episode is maintained at a low level, where $\theta_t = [\dots, \theta_{i,t}, \dots]$ is the policy parameter, $H$ is the problem horizon at each episode, and $\gamma$ is the discount factor.


In \Cref{fig:resource}, we present the cost $J_t(\theta_t)$ achieved during each episode after $10$ trials of ZO with residual-feedback, one-point, and two-point feedback which, as before, is impossible to use in practice for this non-stationary problem either.
It can be seen that ZO with our proposed residual-feedback achieves a cost $J_t(\theta_t)$ that is as low as the cost achieved by the two-point feedback in this non-stationary environment. In particular, ZO with both residual and two-point feedback performs much better than ZO with conventional one-point feedback. 
\Cref{fig:var_resource} also compares the estimated variance of the gradient estimates returned by these feedback schemes. It can be seen that the variance of the gradient estimates returned by the residual feedback oracle is comparable to that of the gradient estimates returned by the two-point oracle and is much smaller than that of the gradient estimates returned by the conventional one-point oracle.

\begin{figure}[t]
	\centering
	\subfigure[\label{fig:resource}]{
		\includegraphics[width=0.47\columnwidth]{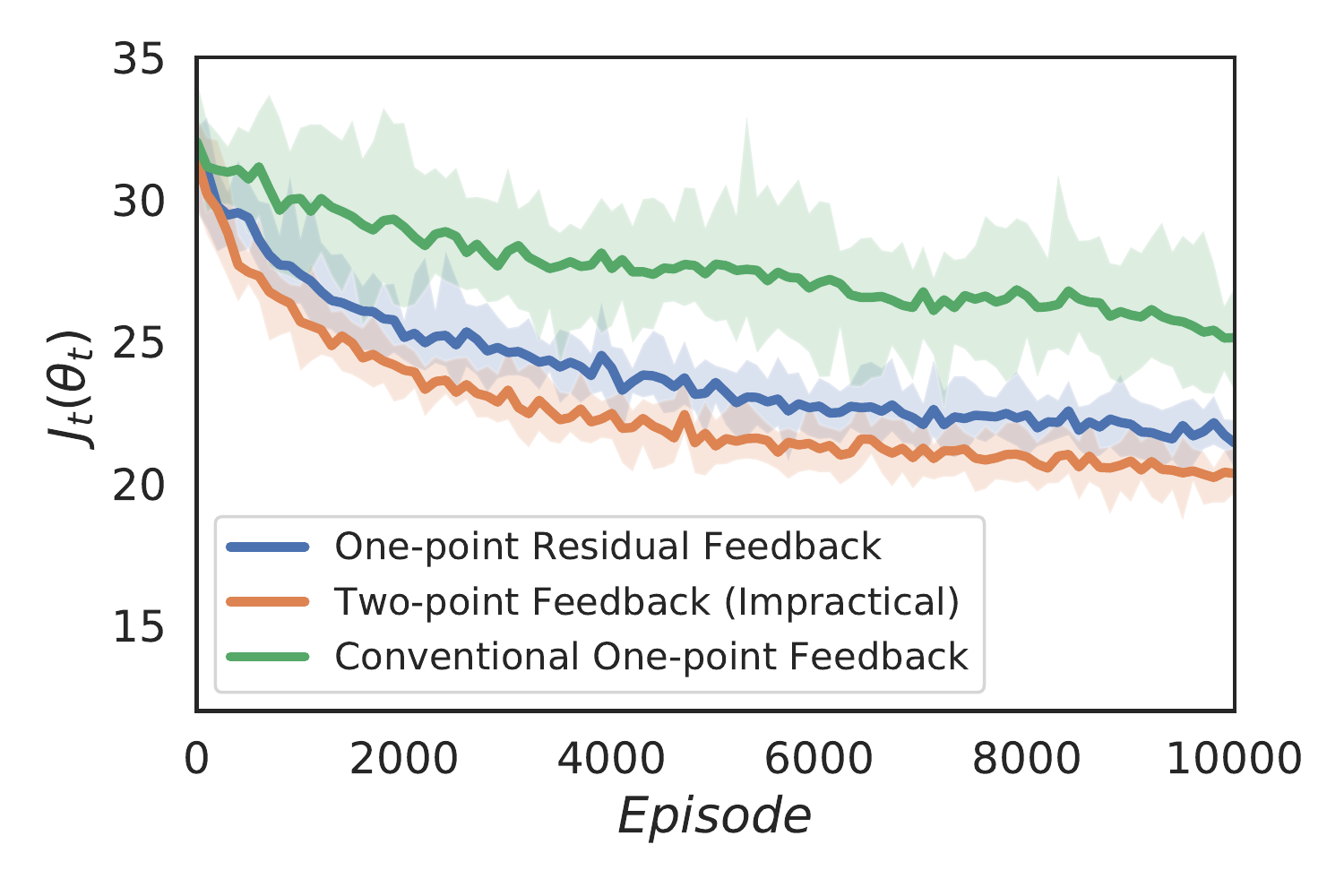}}
	\subfigure[\label{fig:var_resource}]{
		\includegraphics[width=0.47\columnwidth]{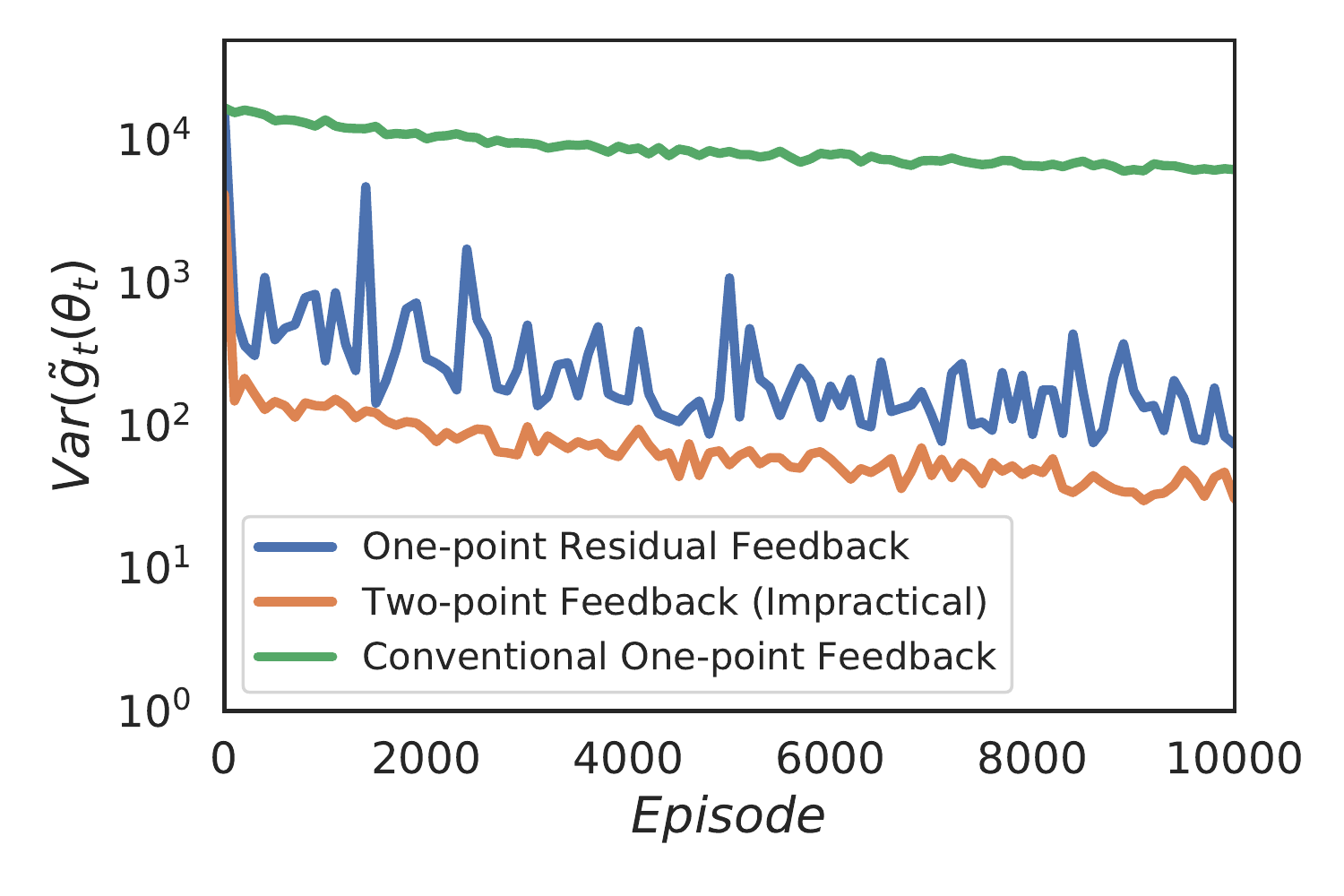}}
	\vspace{-2mm}
	\caption{\small 
		Comparative results of ZO with the proposed one-point residual feedback~\eqref{eqn:GradientEstimate_Noiseless} (blue), the two-point oracle in \cite{bach2016highly} (orange) and the conventional one-point oracle in \cite{gasnikov2017stochastic} (green) for the non-stationary resource allocation problem.  Figure 2(a) presents the varying cost $ J_t(\theta_t)$ achieved using the three diffident oracles and Figure 2(b) presents the variance of the gradient estimates at agent $1$ returned by the three methods. The two point method (orange) is infeasible to use in practice and is presented here to serve as a simulation benchmark.
	}
\end{figure}

\section{Conclusion}
In this paper, we proposed a novel one-point residual feedback oracle for zeroth-order online optimization, which estimates the gradient of the time-varying objective function using a single query of the function value at each time instant. For both deterministic and stochastic problems, we showed that ZO with the proposed residual feedback estimator achieves much lower regret than that of ZO with conventional one-point feedback for convex online optimization problems. 
In addition, we provided regret bounds for ZO with residual feedback for non-convex online optimization problems. To the best of our knowledge, this is the first time that a one-point zeroth-order method is theoretically studied for non-convex online problems. 
Numerical experiments on two non-stationary reinforcement learning problems were conducted and the proposed residual-feedback estimator was shown to significantly outperform the conventional one-point method.


\bibliography{biblio}
\bibliographystyle{iclr2021_conference}

\newpage
\appendix

\addcontentsline{toc}{section}{Appendix} 
\part{Appendix} 
\parttoc 

\section{Implementation Details of the Numerical Experiments}
\label{sec:exp_detail}
All experiments are conducted using Matlab R2019a on Ubuntu 18.04 with the AMD Ryzen 2700X 8-core processor and 16GB 2133MHz memory.

For the non-stationary LQR experiments, we select $n_x = 6$, $n_u = 6$ and $\gamma = 0.5$. The dynamical matrices $A_0$ and $B_0$ at episode $0$ are randomly generated from a Gaussian distribution $\mathcal{N}(0, 0.1^2)$. Then, we generate the time-varying dynamical matrices according to $A_{t+1} = A_t + 0.01 M_t$ and $B_{t+1} = B_t + 0.01N_t$, where $M_t$ and $N_t$ are random matrices whose entries are uniformly sampled from [0,1]. To evaluate the cost function $V_t(K_t)$ given the policy parameter $K_t$ at episode $t$, we roll out a trajectory of length $H = 50$ using the policy parameter $K_t$ and sum up the collected rewards. 

For the non-stationary resource allocation experiments, the policy function $\pi_{i, t}(o_i; \theta_{i,t})$ is parameterized as: $a_{ij} = \exp(z_{ij}) / \sum_{j}\exp(z_{ij})$, where $z_{ij} = \sum_{p = 1}^{9} \psi_p(o_i) \theta_{ij}(p)$ and $\theta_i = [\dots, \theta_{ij}, \dots]^T$ and the episode index $t$ is omitted for notational simplicity. Specifically, the feature function $\psi_p(o_i)$ is selected as $\psi_p(o_i) = \|o_i - c_p\|^2$, where $c_p$ is the parameter of the $p$-th feature function.  Effectively, the agents need to make decisions on $64$ actions, and each action is decided by $9$ parameters. Therefore, the problem dimension is $d = 576$. The discount factor is selected as $\gamma = 0.75$ and the length of the horizon is $H = 30$. The time-varying sensitivity parameter $\zeta_{i,t}$ is generated as follows: let $\zeta_{i,0} = 1$ and $\zeta_{i,t+1} = \zeta_{i,t} + 0.1 P_t$, where $P_t$ is a random number uniformly sampled from $[-1, 1]$.

\section{Proof of Lemma \ref{lem:BoundSecondMoment_Det}}
\label{sec:proof_BoundSecondMoment}

By definition of the residual feedback, we have
\begin{equation}
\label{eqn:BSM_online_1}
\begin{split}
\mathbb{E}[\|\tilde{g}_t (x_t)\|^2] & = \mathbb{E}[\frac{1}{\delta^2} \big(f_t(x_t + \delta u_t) - f_{t-1}(x_{t-1} + \delta u_{t-1})\big)^2 \|u_t\|^2]  \\
& \leq \frac{2}{\delta^2} \mathbb{E}[ \big(f_t(x_t + \delta u_t) - f_t(x_{t-1} + \delta u_{t-1})\big)^2 \|u_t\|^2] \\
& \quad \quad \quad \quad \quad + \frac{2}{\delta^2} \mathbb{E}[ \big( f_t(x_{t-1} + \delta u_{t-1}) - f_{t-1}(x_{t-1} + \delta u_{t-1})\big)^2 \|u_t\|^2].
\end{split}
\end{equation}
Since $u_t$ is independent of $x_{t-1}$, $u_{t-1}$ and the generation of functions $f_{t-1}$ and $f_t$, we have that $\frac{2}{\delta^2} \mathbb{E}[ \big( f_t(x_{t-1} + \delta u_{t-1}) - f_{t-1}(x_{t-1} + \delta u_{t-1})\big)^2 \|u_t\|^2] \leq \frac{2d}{\delta^2} \mathbb{E}[ \big( f_t(x_{t-1} + \delta u_{t-1}) - f_{t-1}(x_{t-1} + \delta u_{t-1})\big)^2]$.
Moreover, adding and subtracting $f_t(x_{t-1} + \delta u_{t})$ in the term $\big(f_t(x_t + \delta u_t) - f_t(x_{t-1} + \delta u_{t-1})\big)^2$ of the above inequality, we obtain that
\begin{equation}
\begin{split}
\mathbb{E}[\|\tilde{g} (x_t)\|^2] \leq & \frac{4}{\delta^2} \mathbb{E}[ \big(f_t(x_t + \delta u_t) - f_t(x_{t-1} + \delta u_{t})\big)^2 \|u_t\|^2] \\
& + \frac{4}{\delta^2} \mathbb{E}[ \big( f_t(x_{t-1} + \delta u_{t}) - f_t(x_{t-1} + \delta u_{t-1})\big)^2 \|u_t\|^2] \\
& + \frac{2d}{\delta^2} \mathbb{E}[ \big( f_t(x_{t-1} + \delta u_{t-1}) - f_{t-1}(x_{t-1} + \delta u_{t-1})\big)^2].
\end{split}
\end{equation}
Since $f_t \in C^{0,0}$ is Lipschitz with constant $L_0$, we further obtain that
\begin{align}
\label{eqn:BSM_2}
\mathbb{E}[\|\tilde{g} (x_t)\|^2] \leq & \frac{4 L_0^2}{\delta^2} \mathbb{E}[ \|x_t - x_{t-1}\|^2 \|u_t\|^2 ] + 4 L_0^2 \mathbb{E}[\|u_t - u_{t-1}\|^2\|u_t\|^2] \nonumber \\
& + \frac{2d}{\delta^2} \mathbb{E}[ \big( f_t(x_{t-1} + \delta u_{t-1}) - f_{t-1}(x_{t-1} + \delta u_{t-1})\big)^2].
\end{align}
Note that $u_t$ is a Gaussian vector independent from $x_t - x_{t-1}$, we then obtain that $\mathbb{E}[ \|x_t - x_{t-1}\|^2 \|u_t\|^2 ] = d \mathbb{E}[\|x_t - x_{t-1}\|^2].$ Furthermore, using Lemma 1 in \cite{nesterov2017random}, we know that $\mathbb{E}[\|u_t - u_{t-1}\|^2\|u_t\|^2] \leq 2\mathbb{E}[(\|u_t\|^2 + \|u_{t-1}\|^2)\|u_t\|^2] = 2\mathbb{E}[(\|u_t\|^4] +2\mathbb{E}[ \|u_{t-1}\|^2\|u_t\|^2] \leq 4(d+4)^2$. Substituting these bounds into inequality~\eqref{eqn:BSM_2}, we obtain that
\begin{align}
\mathbb{E}[\|\tilde{g} (x_t)\|^2] \leq & \frac{4 d L_0^2}{\delta^2} \mathbb{E}[ \|x_t - x_{t-1}\|^2] + 16 L_0^2 (d+4)^2 \nonumber \\
& + \frac{2d}{\delta^2} \mathbb{E}[ \big( f_t(x_{t-1} + \delta u_{t-1}) - f_{t-1}(x_{t-1} + \delta u_{t-1})\big)^2]. \nonumber
\end{align}
Since $x_t = \Pi_\mathcal{X}\big[ x_{t-1} - \eta \tilde{g}(x_{t-1}) \big]$, we get that $\|x_t - x_{t-1}\| = \|\Pi_\mathcal{X}\big[ x_{t-1} - \eta \tilde{g}(x_{t-1}) \big] - \Pi_\mathcal{X}\big[ x_{t-1} \big]\| \leq \eta \|\tilde{g}(x_{t-1})\|$ due to the nonexpansiveness of the projection operator onto a convex set. Therefore, we have that
%
\begin{align}
\mathbb{E}[\|\tilde{g}_t (x_t)\|^2] \leq & \frac{4 d L_0^2 \eta^2}{\delta^2} \mathbb{E}[ \|\tilde{g}_{t-1}(x_{t-1})\|^2] + 16L_0^2 (d+4)^2 \nonumber \\
& + \frac{2d}{\delta^2} \mathbb{E}[ \big( f_t(x_{t-1} + \delta u_{t-1}) - f_{t-1}(x_{t-1} + \delta u_{t-1})\big)^2]. \nonumber
\end{align}
The proof is complete.

\section{Proof of Theorem \ref{thm: convex_Lip}}
\label{sec:proof_convexLip}

Note that $f_{\delta, t}(x)$ is convex for all $t$, we then conclude that
\begin{align}
f_{\delta,t}(x_t) - f_{\delta,t}(x) \leq \langle \nabla f_{\delta,t}(x_t), x_t - x \rangle, \text{ for all } x \in \mathcal{X},
\end{align}
Adding and subtracting $\tilde{g}_t(x_t)$ after $\nabla f_{\delta,t}(x_t)$ in above inequality, and taking expectation over $u_t$ on both sides, we obtain that
\begin{align}\label{eqn:BSM_Oline_1}
\mathbb{E} \big[ f_{\delta,t}(x_t) - f_{\delta,t}(x) \big] \leq \mathbb{E} \big[ \langle  \tilde{g}_t(x_t), x_t - x \rangle \big].
\end{align} 
Since $x_{t+1} = \Pi_\mathcal{X}\big[ x_{t} - \eta \tilde{g}(x_{t}) \big]$, for any $x\in \mathcal{X}$ we have that
\begin{align}
\|x_{t+1} - x\|^2 & =  \|\Pi_\mathcal{X}\big[ x_{t} - \eta \tilde{g}(x_{t}) \big] - \Pi_\mathcal{X}\big[x \big]\|^2 \nonumber \\
&\leq \| x_{t} - \eta \tilde{g}(x_{t}) - x \|^2 \nonumber \\
& = \|x_t - x\|^2 - 2 \eta \langle \tilde{g}_t(x_t), x_t - x \rangle + \eta^2 \|\tilde{g}_t(x_t)\|^2.
\end{align}
Rearranging the above inequality yields that
\begin{align}
\langle \tilde{g}_t(x_t), x_t - x \rangle = \frac{1}{2\eta} \big( \|x_t - x\|^2 -  \|x_{t+1} - x\|^2\big) + \frac{\eta}{2} \|\tilde{g}_t(x_t)\|^2.
\end{align}
Taking expectation on both sides of the above inequality over $u_t$ and substituting the resulting bound into \eqref{eqn:BSM_Oline_1}, we obtain that 
\begin{align}
\mathbb{E} \Big[ \sum_{t=0}^T f_{\delta,t}(x_t) - \sum_{t=0}^T f_{\delta,t}(x) \Big] \leq \frac{1}{2\eta} \|x_0 - x\|^2  + \frac{\eta}{2} \mathbb{E} \Big[ \sum_{t=0}^T \|\tilde{g}_t(x_t)\|^2 \Big].
\end{align}
Since $f_t(x) \in C^{0,0}$, we know that $|f_{\delta, t}(x) - f_t(x)| \leq \delta L_0 \sqrt{d}$. Therefore, we obtain from the above inequality that 
\begin{align} \label{eqn:BSM_Oline_2}
\mathbb{E} \big[ \sum_{t=0}^T f_{t}(x_t) - \sum_{t=0}^T f_{t}(x) \big] &= \mathbb{E} \big[ \sum_{t=0}^T f_{\delta,t}(x_t) - \sum_{t=0}^T f_{\delta,t}(x) \big] + \mathbb{E} \big[ \sum_{t=0}^T \big( f_{t}(x_t) - f_{\delta,t}(x_t) \big) - \sum_{t=0}^T \big(  f_{t}(x) - f_{\delta,t}(x) \big) \big] \nonumber \\
& \leq \frac{1}{2\eta} \|x_0 - x\|^2  + \frac{\eta}{2} \mathbb{E} \big[ \sum_{t=0}^T \|\tilde{g}_t(x_t)\|^2 \big] + 2 \sqrt{d} L_0 \delta T.
\end{align}

Telescoping the bound in \eqref{eqn:SecondMomentBound} over $t = 1, 2,...,T$, adding $\mathbb{E} \big[ \|\tilde{g}_0(x_0) \|^2 \big]$ on both sides, adding $\frac{4 d L_0^2 \eta^2}{\delta^2} \mathbb{E}[ \|\tilde{g}_{T}(x_{T})\|^2]$ to the right hand side and using Assumption~\ref{asmp:BoundVariation}, we obtain that
\begin{align}\label{eqn:BSM_Oline_3}
\mathbb{E} \big[ \sum_{t=0}^T \|\tilde{g}_t(x_t)\|^2 \big] \leq \frac{1}{1-\alpha} \mathbb{E} \big[ \|\tilde{g}_0(x_0)\|^2 \big] + \frac{16}{1 - \alpha} L_0^2 (d+4)^2 T + \frac{2dV_f^2}{1 - \alpha} \frac{1}{\delta^2} T,
\end{align}
where $\alpha = \frac{4 d L_0^2 \eta^2}{\delta^2}$. Substituting the above bound into \eqref{eqn:BSM_Oline_2} yields that
\begin{align}
\mathbb{E} \big[ \sum_{t=0}^T f_{t}(x_t) - \sum_{t=0}^T f_{t}(x) \big]  \leq & \;  \frac{1}{2\eta} \|x_0 - x\|^2 + \frac{\eta}{2(1-\alpha)} \mathbb{E} \big[ \|\tilde{g}_0(x_0)\|^2 \big] + \frac{16}{1 - \alpha} L_0^2 (d+4)^2 \eta T  \nonumber \\
& + 2 \sqrt{d} L_0 \delta T + \frac{2dV_f^2}{1 - \alpha} \frac{\eta}{\delta^2} T.
\end{align}
Since above inequality holds for all $x \in \mathcal{X}$, we can replace $x$ with $x^\ast$. When the upper bound on $\|x_0 - x^\ast\| \leq R$ is known, let $\eta = \frac{R^{\frac{3}{2}}}{2\sqrt{2} L_0 \sqrt{d} T^{\frac{3}{4}}}$ and $\delta = \frac{\sqrt{R}}{T^{\frac{1}{4}}}$, so that $\alpha = \frac{4 d L_0^2 \eta^2}{\delta^2} = \frac{R^2}{2T} \leq \frac{1}{2}$, when $T \geq R^2$. Then, we obtain that
\begin{align} \label{eqn:BSM_Oline_4}
\mathbb{E} \big[ \sum_{t=0}^T f_{t}(x_t) - \sum_{t=0}^T f_{t}(x^\ast) \big] \leq & \; \sqrt{2} L_0 \sqrt{dR} T^{\frac{3}{4}}  + \frac{\mathbb{E} \big[ \|\tilde{g}_0(x_0)\|^2 \big] R^{\frac{3}{2}}}{2 \sqrt{2d} L_0 T^{\frac{3}{4}}} \nonumber \\
& + 8\sqrt{2} \frac{(d+4)^2}{\sqrt{d}} L_0 R^{\frac{3}{2}} T^{\frac{1}{4}} + 2 L_0 \sqrt{dR} T^{\frac{3}{4}} + \frac{\sqrt{2dR} V_f^2}{L_0} T^{\frac{3}{4}}.
\end{align}

When $R$ is unknown, let $\eta = \frac{1}{2\sqrt{2} L_0 \sqrt{d} T^{\frac{3}{4}}}$ and $\delta = \frac{1}{T^{\frac{1}{4}}}$, so that $\alpha = \frac{4 d L_0^2 \eta^2}{\delta^2} = \frac{1}{2T} \leq \frac{1}{2}$. Then, we obtain that
\begin{align} \label{eqn:BSM_Oline_4.1}
\mathbb{E} \big[ \sum_{t=0}^T f_{t}(x_t) - \sum_{t=0}^T f_{t}(x^\ast) \big] \leq & \; \sqrt{2} L_0 \sqrt{d} R^2 T^{\frac{3}{4}}  + \frac{\mathbb{E} \big[ \|\tilde{g}_0(x_0)\|^2 \big]}{2 \sqrt{2d} L_0 T^{\frac{3}{4}}} + 8\sqrt{2} \frac{(d+4)^2}{\sqrt{d}} L_0 T^{\frac{1}{4}} \nonumber \\
& + 2 \sqrt{d} L_0 T^{\frac{3}{4}} + \frac{\sqrt{2d} V_f^2}{L_0} T^{\frac{3}{4}}.
\end{align}

On the other hand, we can let $\eta = \frac{R^{\frac{3}{2}}}{2\sqrt{2} L_0 \sqrt{d} T^{\frac{3}{4}}}$ and $\delta = \frac{\sqrt{R}}{L_0^qT^{\frac{1}{4}}}$, where $q\in\mathbb{R}$ is a user-specific parameter. With this choice of parameters, we get $\alpha = \frac{4 d L_0^2 \eta^2}{\delta^2} = \frac{L_0^{2q}R^2}{2T} \leq \frac{1}{2}$ when $T \geq L_0^{2q}R^2$ and, as a result, we obtain that
	\begin{align} \label{eqn:BSM_Oline_4.2}
	\mathbb{E} \big[ \sum_{t=0}^T f_{t}(x_t) - \sum_{t=0}^T f_{t}(x^\ast) \big] \leq & \; \sqrt{2} L_0 \sqrt{dR} T^{\frac{3}{4}}  + \frac{\mathbb{E} \big[ \|\tilde{g}_0(x_0)\|^2 \big] R^{\frac{3}{2}}}{2 \sqrt{2d} L_0 T^{\frac{3}{4}}} + 8\sqrt{2} \frac{(d+4)^2}{\sqrt{d}} L_0 R^{\frac{3}{2}} T^{\frac{1}{4}} \nonumber \\
	& + 2 L_0^{1-q} \sqrt{dR} T^{\frac{3}{4}} + \sqrt{2dR} L_0^{2q-1}V_f^2 T^{\frac{3}{4}}.
	\end{align}

\section{Proof of Theorem \ref{thm: convex_smooth}}
\label{sec:proof_convexSmooth}
Since $f_t(x) \in C^{1,1}$, we know that $|f_{\delta, t}(x) - f_t(x)| \leq \delta^2 L_1 d$. 
Following the same proof logic as that for proving \eqref{eqn:BSM_Oline_2}, we obtain that 
\begin{align} \label{eqn:BSM_Oline_5}
\mathbb{E} \big[ \sum_{t=0}^T f_{t}(x_t) - \sum_{t=0}^T f_{t}(x) \big] \leq \frac{1}{2\eta} \|x_0 - x\|^2  + \frac{\eta}{2} \mathbb{E} \big[ \sum_{t=0}^T \|\tilde{g}_t(x_t)\|^2 \big] + 2  d L_1 \delta^2 T.
\end{align}
Substituting the bound in \eqref{eqn:BSM_Oline_3} into the above inequality, we obtain that
\begin{align}
\mathbb{E} \big[ \sum_{t=0}^T f_{t}(x_t) - \sum_{t=0}^T f_{t}(x) \big]  \leq & \;  \frac{1}{2\eta} \|x_0 - x\|^2 + \frac{\eta}{2(1-\alpha)} \mathbb{E} \big[ \|\tilde{g}_0(x_0)\|^2 \big] + \frac{16}{1 - \alpha} L_0^2 (d+4)^2 \eta T  \nonumber \\
& +  2  d L_1 \delta^2 T + \frac{2dV_f^2}{1 - \alpha} \frac{\eta}{\delta^2} T. 
\end{align}
Since above inequality holds for all $x \in \mathcal{X}$, we can replace $x$ with $x^\ast$. 
Assuming the bound $\|x_0 - x^\ast\| \leq R$ is known, let $\eta = \frac{R^{\frac{4}{3}}}{2\sqrt{2} L_0 d^\frac{2}{3} T^{\frac{2}{3}}}$ and $\delta = \frac{R^{\frac{1}{3}}}{  d^\frac{1}{6} T^{\frac{1}{6}}}$ so that $\alpha = \frac{4 d L_0^2 \eta^2}{\delta^2} = \frac{R^2}{2T} \leq \frac{1}{2}$ when $T \geq R^2$. Plugging these parameters into above inequality, we finally obtain that
\begin{align}
\mathbb{E} \big[ \sum_{t=0}^T f_{t}(x_t) - \sum_{t=0}^T f_{t}(x) \big]  \leq & \;  \sqrt{2} L_0 d^\frac{2}{3} R^{\frac{2}{3}} T^{\frac{2}{3}} + \frac{\mathbb{E} \big[ \|\tilde{g}_0(x_0)\|^2 \big] R^{\frac{4}{3}}}{2\sqrt{2} L_0 d^\frac{2}{3} T^{\frac{2}{3}}}
+ 8\sqrt{2} L_0 \frac{(d+4)^2}{d^\frac{2}{3}} R^{\frac{4}{3}} T^\frac{1}{3}  \nonumber \\
& +  2 L_1 d^\frac{2}{3}R^{\frac{2}{3}} T^\frac{2}{3} + \frac{\sqrt{2}}{L_0} d^\frac{2}{3} R^{\frac{2}{3}} V_f^2 T^\frac{2}{3}.
\end{align}

When the bound $\|x_0 - x^\ast\| \leq R$ is unknown. Choose $\eta = \frac{1}{2\sqrt{2} L_0 d^\frac{2}{3} T^{\frac{2}{3}}}$ and $\delta = \frac{1}{  d^\frac{1}{6} T^{\frac{1}{6}}}$ so that $\alpha = \frac{4 d L_0^2 \eta^2}{\delta^2} = \frac{1}{2T} \leq \frac{1}{2}$. Plugging these parameters into above inequality, we finally obtain that
\begin{align}
\mathbb{E} \big[ \sum_{t=0}^T f_{t}(x_t) - \sum_{t=0}^T f_{t}(x) \big]  \leq & \;  \sqrt{2} L_0 d^\frac{2}{3} \|x_0 - x\|^2 T^{\frac{2}{3}} + \frac{\mathbb{E} \big[ \|\tilde{g}_0(x_0)\|^2 \big]}{2\sqrt{2} L_0 d^\frac{2}{3} T^{\frac{2}{3}}}
+ 8\sqrt{2} L_0 \frac{(d+4)^2}{d^\frac{2}{3}} T^\frac{1}{3}  \nonumber \\
& +  2  d^\frac{2}{3} L_1 T^\frac{2}{3} + \frac{\sqrt{2}}{L_0} d^\frac{2}{3} V_f^2 T^\frac{2}{3}.
\end{align}
The proof is complete.

\section{Proof of Theorem \ref{thm:Online_Nonconvex_Nonsmooth}}
\label{sec:proof_nonconvexLip}
We first consider the case where Assumption~\ref{asmp:BoundAccumVariation}.1 holds. 
Note that $f_t(x) \in C^{0,0}$. According to Lemma~\ref{lem:GaussianApprox}, $f_{\delta,t}(x)$ has $L_{1,\delta}$-Lipschitz continuous gradient with $L_{1,\delta} = \frac{\sqrt{d}}{\delta} L_0$. Furthermore, according to Lemma 1.2.3 in \cite{nesterov2013introductory}, we have the following inequality
\begin{equation}
\label{eqn:Inequality_1_Online}
\begin{split}
f_{\delta,t}(x_{t+1}) &\leq f_{\delta,t}(x_{t}) +  \langle \nabla f_{\delta,t}(x_t), x_{t+1} - x_t \rangle + \frac{L_{1,\delta}}{2} \|x_{t+1} - x_t\|^2 \\
& = f_{\delta,t}(x_{t}) - \eta \langle \nabla f_{\delta,t}(x_t), \tilde{g}_t(x_t) \rangle + \frac{L_{1,\delta} \eta^2}{2} \|\tilde{g}_t (x_t)\|^2 \\
& = f_{\delta,t}(x_{t}) - \eta \langle \nabla f_{\delta,t}(x_t), \Delta_t \rangle - \eta \|\nabla f_{\delta,t}(x_t)\|^2 + \frac{L_{1,\delta} \eta^2}{2} \|\tilde{g}_t(x_t)\|^2,
\end{split}
\end{equation}
where $\Delta_t = \tilde{g}_t(x_t) - \nabla f_{\delta,t}(x_t)$. According to Lemma~\ref{lem:UnbiasedEstimate_Noiseless}, we know that $\mathbb{E}_{u_t} [ \tilde{g}_t(x_t) ] = \nabla f_{\delta,t}(x_t)$. Therefore, taking expectation over $u_t$ conditional on $x_t$ on both sides of inequality~\eqref{eqn:Inequality_1_Online} and rearranging terms, we obtain that
\begin{align}
\label{eqn:Inequality_2_Online}
& \eta \mathbb{E}[\|\nabla f_{\delta,t}(x_t)\|^2 ] \leq \mathbb{E}[f_{\delta,t}(x_t)] - \mathbb{E}[f_{\delta,t}(x_{t+1})] + \frac{L_{1,\delta} \eta^2}{2} \mathbb{E}[ \|\tilde{g}_t(x_t)\|^2] \nonumber \\
& \leq \mathbb{E}[f_{\delta,t}(x_t)] - \mathbb{E}[f_{\delta,t+1}(x_{t+1})] + \frac{L_{1,\delta} \eta^2}{2} \mathbb{E}[ \|\tilde{g}_t(x_t)\|^2] + \mathbb{E}[f_{\delta,t+1}(x_{t+1})] - \mathbb{E}[f_{\delta,t}(x_{t+1})], 
\end{align}
where the expectation is conditional on $x_t$. Then, we can further condition both sides of \eqref{eqn:Inequality_2_Online} on $x_0$ without changing the sign of inequality, and then apply the tower rule of conditional expectation to make the expectation in \eqref{eqn:Inequality_2_Online} become full expectation.
Telescoping the above inequality over $t = 0,...,T-1$ and dividing both sides by $\eta$, we obtain that
\begin{equation} \label{eqn:BSM_Online_5.1}
\begin{split}
\sum_{t = 0}^{T-1} \mathbb{E}[\|\nabla f_{\delta,t}(x_t)\|^2 ] & \leq \frac{\mathbb{E}[f_{\delta,0}(x_0)] - \mathbb{E}[f_{\delta,T}(x_{T})]}{\eta} + \frac{L_{1,\delta} \eta}{2} \sum_{t = 0}^{T-1} \mathbb{E}[\|\tilde{g}_t(x_t)\|^2] + \frac{W_T}{\eta} \\
& \leq \frac{\mathbb{E}[f_{\delta,0}(x_0)] - f_{\delta,T}^\ast}{\eta} + \frac{L_{1,\delta} \eta}{2} \sum_{t = 0}^{T-1} \mathbb{E}[\|\tilde{g}_t(x_t)\|^2] + \frac{W_T}{\eta},
\end{split}
\end{equation}
where $f_{\delta,T}^\ast$ is the lower bound of the smoothed function $f_{\delta,T}(x)$. $f_{\delta,T}^\ast$ must exist because we assume the orignal function $f_t(x)$ is lower bounded and the smoothed function has a bounded distance from $f_t(x)$ due to Lemma~\ref{lem:GaussianApprox} for all $t$.

Next,  we consider the case where Assumption~\ref{asmp:BoundAccumVariation}.2 holds. Summing the bound in \eqref{eqn:SecondMomentBound} from $t = 1,...,T$, adding $\mathbb{E} \big[ \|\tilde{g}_0(x_0) \|^2 \big]$ on both sides, and adding $\frac{4 d L_0^2 \eta^2}{\delta^2} \mathbb{E}[ \|\tilde{g}_{T}(x_{T})\|^2]$ to the right hand side, we obtain that
\begin{align}\label{eqn:BSM_Oline_6}
\mathbb{E} \big[ \sum_{t=0}^T \|\tilde{g}_t(x_t)\|^2 \big] \leq \frac{1}{1-\alpha} \mathbb{E} \big[ \|\tilde{g}_0(x_0)\|^2 \big] + \frac{16}{1 - \alpha} L_0^2 (d+4)^2 T + \frac{2d}{1 - \alpha} \frac{\widetilde{W}_T}{\delta^2},
\end{align}
Substituting this bound into the inequality~\eqref{eqn:BSM_Online_5.1}, we obtain that
\begin{equation} \label{eqn:Online_Nonconvex_nonsmooth_2}
\begin{split}
\sum_{t = 0}^{T-1} \mathbb{E}[\| \nabla f_{\delta,t}(x_t)\|^2 ] \leq & \; \frac{\mathbb{E}[f_{\delta,0}(x_0)] - f_{\delta,T}^\ast}{\eta} + \frac{W_T}{\eta} + \frac{ \sqrt{d} L_0\eta}{2\delta} \frac{1}{1-\alpha} \mathbb{E} \big[ \|\tilde{g}_0(x_0)\|^2 \big]  \\
&  + \frac{ \sqrt{d} L_0\eta}{2\delta} \frac{16}{1 - \alpha} L_0^2 (d+4)^2 T + \frac{ \sqrt{d} L_0\eta}{2\delta} \frac{2d}{1 - \alpha} \frac{\widetilde{W}_T}{\delta^2}.
\end{split}
\end{equation}
To fullfill the requirement that $|f_t(x) - f_{\delta,t}(x)| \leq \epsilon_f$, we set the exporation parameter $\delta = \frac{\epsilon_f}{d^\frac{1}{2} L_0}$. In addition, let the stepsize be $\eta = \frac{\epsilon_f^{1.5} }{  2\sqrt{2} L_0^2 d^{1.5} T^{\frac{1}{2}}}$. Then, we have that $\alpha =  \frac{4 d L_0^2 \eta^2}{\delta^2} = \frac{\epsilon_f}{2 d T}\leq \frac{1}{2}$ when $T \geq \frac{1}{d \epsilon_f}$. Therefore, we have that $\frac{1}{1 - \alpha} \leq 2$. Substituting this bound and the choices of $\eta$ and $\delta$ into the bound~\eqref{eqn:Online_Nonconvex_nonsmooth_1}, we finally obtain that
\begin{align}
\sum_{t = 0}^{T-1} \mathbb{E}[\| \nabla f_{\delta,t}(x_t)\|^2 ] \leq & \; 2\sqrt{2}L_0^2 \big( \mathbb{E}[f_{\delta,0}(x_0)] - f_{\delta,T}^\ast + W_T \big) \frac{d^{1.5} }{\epsilon_f^{1.5}} T^{\frac{1}{2}} + \frac{\epsilon_f^{\frac{1}{2}} \mathbb{E} \big[ \|\tilde{g}_0(x_0)\|^2 \big]}{2\sqrt{2d T}} \nonumber \\
& + 4\sqrt{2}L_0\epsilon_f^{\frac{1}{2}} \frac{(d+4)^2}{d^{\frac{1}{2}}} T^{\frac{1}{2}} + \frac{L_0^2}{\sqrt{2}} \frac{d^{1.5} \widetilde{W}_T}{\epsilon_f^{1.5} T^{\frac{1}{2}}}.
\end{align}
The proof is complete.

\section{Proof of Theorem \ref{thm:Online_Nonconvex_smooth}}
\label{sec:proof_nonconvexSmooth}
We first consider the case where Assumption~\ref{asmp:BoundAccumVariation}.1 holds. 
Note that when $f_t \in C^{1,1}$ with Lipschitz constant $L_1$, the smoothed function $f_{\delta, t} \in C^{1,1}$ with Lipschitz constant $L_1$. Therefore,
following the proof of Theorem~\ref{thm:Online_Nonconvex_Nonsmooth} but replacing $L_{1,\delta}$ with $L_1$, we obtain that 
\begin{equation} \label{eqn:BSM_Online_Nonconvex_Smooth_1}
\sum_{t = 0}^{T-1} \mathbb{E}[\|\nabla f_{\delta,t}(x_t)\|^2 ] \leq \frac{\mathbb{E}[f_{\delta,0}(x_0)] - f_{\delta,T}^\ast}{\eta} + \frac{L_{1} \eta}{2} \sum_{t = 0}^{T-1} \mathbb{E}[\|\tilde{g}_t(x_t)\|^2] + \frac{W_T}{\eta}.
\end{equation}
Since $f_t \in C^{1,1}$, according to Lemma~\ref{lem:GaussianApprox}, we have that $\|\nabla f_{\delta,t}(x)- \nabla f_t(x)\|\le \delta L_1 (d+3)^{3/2}$. 

Furthermore, we have that
\begin{align} \label{eqn:BSM_Online_Nonconvex_Smooth_2}
& \sum_{t = 0}^{T-1} \mathbb{E}[\| \nabla f(x_t)\|^2 ] = \sum_{t = 0}^{T-1} \mathbb{E}[\| \nabla f(x_t) - \nabla f_{\delta,t}(x_t) + \nabla f_{\delta,t}(x_t)\|^2 ] \nonumber \\
&\leq 2\mathbb{E}[\| \nabla f(x_t) - \nabla f_{\delta,t}(x_t)\|^2] + 2\mathbb{E}[\| \nabla f_{\delta,t}(x_t) \|^2].
\end{align}

Next, we consider the case where Assumption~\ref{asmp:BoundAccumVariation}.2 holds. 
Substituting the bound in \eqref{eqn:BSM_Oline_6} into \eqref{eqn:BSM_Online_Nonconvex_Smooth_1} and using the bound in \eqref{eqn:BSM_Online_Nonconvex_Smooth_2}, we obtain that
\begin{align}
\sum_{t = 0}^{T-1} \mathbb{E}[\| \nabla f(x_t)\|^2 ] \leq & \; 2 \frac{\mathbb{E}[f_{\delta,0}(x_0)] - f_{\delta,T}^\ast}{\eta} + 2 \frac{W_T}{\eta}  +   \frac{L_1}{1-\alpha} \mathbb{E} \big[ \|\tilde{g}_0(x_0)\|^2 \big] \eta + \frac{16L_1}{1 - \alpha} L_0^2 (d+4)^2 \eta T \nonumber \\
&  \; + \frac{2dL_1 \widetilde{W}_T}{1 - \alpha} \frac{\eta}{\delta^2} + 2 L_1^2 (d+3)^3 \delta^2 T,
\end{align}
Choose $\eta = \frac{1}{2 \sqrt{2} L_{0} d^{\frac{4}{3}} T^{\frac{1}{2}}}$ and $\delta = \frac{1}{ d^{\frac{5}{6}} T^\frac{1}{4}}$. Then, $\alpha = \frac{4 d L_0^2 \eta^2}{\delta^2} = \frac{1}{2 \sqrt{T}} \leq \frac{1}{2}$ and $\frac{1}{1 - \alpha} \leq 2$. Substituting these results into the above inequality, we finally obtain that
\begin{align}
\sum_{t = 0}^{T-1} \mathbb{E}[\| \nabla f(x_t)\|^2 ] \leq &\; 4\sqrt{2} L_0 \big( \mathbb{E}[f_{\delta,0}(x_0)] - f_{\delta,T}^\ast + W_T \big) d^{\frac{4}{3}} T^{\frac{1}{2}} +  \frac{L_1 \mathbb{E} \big[ \|\tilde{g}_0(x_0)\|^2 \big] }{\sqrt{2} L_{0} d^{\frac{4}{3}} T^{\frac{1}{2}}} \nonumber \\
& + 8\sqrt{2} L_1 L_0 \frac{(d+4)^2}{d^\frac{4}{3}} T^{\frac{1}{2}}
+ \frac{\sqrt{2} L_1}{L_0}  d^{\frac{4}{3}} \widetilde{W}_T + 2 L_1^2 \frac{(d+3)^3}{d^{\frac{5}{3}}}  T^{\frac{1}{2}}.
\end{align}
The proof is complete.

\section{Proof of Lemma~\ref{lem:BoundSecondMoment_Stoch}}
\label{sec:proof_BSMStoch}
Consider the case when $F_t(x, \xi) \in C^{0,0}$ with $L_0(\xi)$. According to \eqref{eqn:GradientEstimate_Noise}, we have that
\begin{equation}
\label{eqn:BSM_stoch_1}
\begin{split}
\mathbb{E}[\|\tilde{g}_t (x_t)\|^2] & = \mathbb{E}[\frac{1}{\delta^2} \big(F_t(x_t + \delta u_t, \xi_t) - F_{t-1}(x_{t-1} + \delta u_{t-1}, \xi_{t-1})\big)^2 \|u_t\|^2]  \\
& \leq \frac{2}{\delta^2} \mathbb{E}[ \big(F_t(x_t + \delta u_t, \xi_t) - F_t(x_{t-1} + \delta u_{t-1}, \xi_t)\big)^2 \|u_t\|^2] \\
& \quad \quad \quad \quad \quad + \frac{2}{\delta^2} \mathbb{E}[ \big( F_t(x_{t-1} + \delta u_{t-1}, \xi_t) - F_{t-1}(x_{t-1} + \delta u_{t-1}, \xi_{t-1})\big)^2 \|u_t\|^2].
\end{split}
\end{equation}
Using the bound in Assumption~\ref{asmp:BoundedVariance} and the fact that the generation of random objective functions $F_{t-1}(\cdot, \xi_{t-1})$ and $F_{t}(\cdot, \xi_{t})$ are independent of $u_t$, we get that $\frac{2}{\delta^2} \mathbb{E}[ \big( F_t(x_{t-1} + \delta u_{t-1}, \xi_t) - F_{t-1}(x_{t-1} + \delta u_{t-1}, \xi_{t-1})\big)^2 \|u_t\|^2] \leq \frac{2d }{\delta^2} V_{f, \xi}^2$. In addition, adding and subtracting $F_t(x_{t-1} + \delta u_{t}, \xi_{t})$ in $\big(F_t(x_t + \delta u_t, \xi_t) - F_t(x_{t-1} + \delta u_{t-1}, \xi_t)\big)^2$ in above inequality, we obtain that
\begin{equation}
\begin{split}
\mathbb{E}[\|\tilde{g}_t (x_t)\|^2] \leq & \frac{4}{\delta^2} \mathbb{E}[ \big(F_t(x_t + \delta u_t, \xi_t) - F_t(x_{t-1} + \delta u_{t}, \xi_t)\big)^2 \|u_t\|^2] \\
& + \frac{4}{\delta^2} \mathbb{E}[ \big( F_t(x_{t-1} + \delta u_{t}, \xi_t) - F_t(x_{t-1} + \delta u_{t-1}, \xi_{t})\big)^2 \|u_t\|^2] \\
& + \frac{2 d}{\delta^2} \mathbb{E}[ \big( F_t(x_{t-1} + \delta u_{t-1}, \xi_t) - F_{t-1}(x_{t-1} + \delta u_{t-1}, \xi_{t-1})\big)^2 ].
\end{split}
\end{equation}
By Lipschitz continuity of $F_t(\cdot;\xi_t)$, we can bound the first two items on the right hand side of above inequality following the same procedure after inequality~\eqref{eqn:BSM_2} and get that 
\begin{align}
\mathbb{E}[\|\tilde{g}_t (x_t)\|^2] \leq & \; \frac{4 d L_0^2 \eta^2}{\delta^2} \mathbb{E}[ \|\tilde{g}_t(x_{t-1})\|^2] + 16L_0^2 (d+4)^2  \nonumber \\
& + \frac{2 d}{\delta^2} \mathbb{E}[ \big( F_t(x_{t-1} + \delta u_{t-1}, \xi_t) - F_{t-1}(x_{t-1} + \delta u_{t-1}, \xi_{t-1})\big)^2 ]. \nonumber
\end{align}
The proof is complete.

\section{Residual-Feedback Convex Optimization with Unit Sphere Sampling} 
\label{sec:UniformSample}
Consider the online bandit optimization problem (P) with convex objective functions and a compact constraint set $\mathcal{X}$. In this section, we assume that the objective function $f(x)$ cannot be queried outside the constraint set $\mathcal{X}$. To satisfy this requirement, we estimate the gradient as
\begin{align}
\widetilde{g}_t(x_t) := \frac{d}{\delta}\big(f_t(x_t + \delta u_t) - f_{t-1}(x_{t-1} + \delta u_{t-1})\big) u_t, \label{eqn:GradientEstimate_US}
\end{align}
where $u_{t-1}$ and $u_t$ are independently and uniformly sampled from the unit sphere  $\mathbb{S} := \{x \in \mathbb{R}^d : \|x\| = 1\}$. Consider the smoothed function $f_\delta(x) = \mathbb{E}_{v \in \mathbb{B}}\big[ f(x + \delta v) \big]$, where the random vector $v$ is uniformly sampled from the unit ball $\mathbb{B} = \{x \in \mathbb{R}^d : \|x\| \leq 1\}$ . Then, we have the following lemma
\begin{lem} \label{lem:I.1}
	The function $\widetilde{g}_t(x_t)$ is an unbiased estimate of the gradient $\nabla f_\delta (x_t)$, i.e.,  $\mathbb{E}\big[ \widetilde{g}_t(x_t) \big] = \nabla f_\delta (x_t)$.
\end{lem}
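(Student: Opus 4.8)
The plan is to reduce the claim to the classical sphere-sampling identity of \cite{flaxman2005online} and then verify that identity via the divergence theorem. First I would condition on the entire history up to the point at which $x_t$ is determined, so that the only fresh randomness in $\widetilde{g}_t(x_t)$ is the unit vector $u_t$, while $x_{t-1}$, $u_{t-1}$ and the random function $f_{t-1}$ are all independent of $u_t$. Splitting $\widetilde{g}_t(x_t) = \frac{d}{\delta} f_t(x_t + \delta u_t) u_t - \frac{d}{\delta} f_{t-1}(x_{t-1} + \delta u_{t-1}) u_t$ and taking the conditional expectation, the second term vanishes because $u_t$ has mean zero and is independent of the scalar multiplying it; hence $\mathbb{E}[\widetilde{g}_t(x_t)\mid x_t] = \frac{d}{\delta}\,\mathbb{E}_{u_t}\!\big[ f_t(x_t + \delta u_t)\, u_t\big]$, and it remains to show this equals $\nabla f_{\delta}(x_t)$, where $f_\delta(x) = \mathbb{E}_{v\in\mathbb{B}}[f_t(x+\delta v)]$.

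For the remaining step I would rewrite $f_{\delta}(x) = \frac{1}{\mathrm{vol}(\delta \mathbb{B})}\int_{\delta \mathbb{B}} f_t(x+w)\,dw$ via the change of variables $w = \delta v$, differentiate under the integral sign (using $\nabla_x f_t(x+w) = \nabla_w f_t(x+w)$) to get $\nabla f_{\delta}(x) = \frac{1}{\mathrm{vol}(\delta\mathbb{B})}\int_{\delta\mathbb{B}} \nabla_w f_t(x+w)\,dw$, and apply the divergence theorem componentwise to convert this volume integral over $\delta\mathbb{B}$ into a surface integral over $\delta\mathbb{S}$ with outward unit normal $w/\|w\|$. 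On $\delta\mathbb{S}$ we have $\|w\| = \delta$; parametrising $w = \delta u$ with $u\in\mathbb{S}$ and using $\mathrm{area}(\delta\mathbb{S}) = \delta^{d-1}\mathrm{area}(\mathbb{S})$ together with the elementary identity $\mathrm{area}(\mathbb{S}) = d\,\mathrm{vol}(\mathbb{B})$, the constants collapse exactly to $d/\delta$, yielding $\nabla f_{\delta}(x) = \frac{d}{\delta}\,\mathbb{E}_{u\sim\mathrm{unif}(\mathbb{S})}[f_t(x+\delta u)\,u]$. Combined with the conditioning step and the tower property, this gives $\mathbb{E}[\widetilde{g}_t(x_t)] = \nabla f_{\delta}(x_t)$.

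The one technical point, and the main obstacle, is that $f_t$ is only assumed Lipschitz, so $\nabla_w f_t$ exists only almost everywhere and the differentiation-under-the-integral and divergence-theorem steps need care. I would dispatch this with a standard mollification argument: take smooth functions $f_t^{(n)} \to f_t$ locally uniformly (e.g.\ convolutions with a bump function), each Lipschitz with the same constant; the identity holds for each smooth $f_t^{(n)}$ by the argument above, and then one passes to the limit using that $f_{\delta}^{(n)} \to f_{\delta}$ with $\nabla f_{\delta}^{(n)} \to \nabla f_{\delta}$ pointwise and that $\mathbb{E}_u[f_t^{(n)}(x+\delta u)u] \to \mathbb{E}_u[f_t(x+\delta u)u]$ by dominated convergence. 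Alternatively, one may simply invoke the corresponding lemma of \cite{flaxman2005online}, which treats precisely this sphere-sampling estimator, and only the mean-zero/independence argument for the residual term is then specific to our setting.
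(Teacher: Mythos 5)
Your proposal is correct and follows essentially the same route as the paper: the paper's proof likewise observes that the residual term $\frac{d}{\delta}f_{t-1}(x_{t-1}+\delta u_{t-1})u_t$ has zero conditional mean because $u_t$ is zero-mean and independent of $x_{t-1}$, $u_{t-1}$ and $f_{t-1}$, and then invokes Lemma~2.1 of \cite{flaxman2005online} for the sphere-sampling identity $\nabla f_{\delta}(x)=\frac{d}{\delta}\mathbb{E}_{u\sim\mathbb{S}}[f(x+\delta u)\,u]$. The only difference is that you additionally re-derive that cited identity via the divergence theorem with a mollification argument for the Lipschitz-only case, which the paper simply takes as known.
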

\begin{proof}
	Since $u_t$ is sampled independently from $x_{t-1}$ and $u_{t-1}$, and $u_t$ has zero mean, it is straightforward to complete the proof by applying Lemma 2.1 in \cite{flaxman2005online}.
\end{proof}
To ensure that the iterates are confined within the constraint set $\mathcal{X}$, we consider the update
\begin{align} \label{eqn:Update_US}
	x_{t+1} = \Pi_{(1 - \xi)\mathcal{X}}\big( x_t - \eta \tilde{g}_t(x_t) \big),
\end{align}
where the set $(1-\xi)\mathcal{X}:= \{(1-\xi)x : \forall x \in \mathcal{X} \}$ is a shrinked version of the original constraint set $\mathcal{X}$. The goal is to select a parameter $\xi$ so that for every $x_\xi \in (1-\xi)\mathcal{X}$, $x_\xi + \delta u \in \mathcal{X}$ for every $u \in \mathbb{S}$. To achieve this, we first make the following assumption that is inspired by \cite{flaxman2005online,bubeck2012regret}.
\begin{asmp}
	There exist contants $r$ and $\bar{r}$ such that $ r \mathbb{B} \subset \mathcal{X} \subset \bar{r} \mathbb{B}$.
\end{asmp}
Then, we have the following lemma.
\begin{lem} \label{lem:I.3}
	If the parameter $\xi$ satisfies $1 \geq \xi \geq \frac{\delta}{r}$, then for every iterate $x_t$ obtained using \eqref{eqn:Update_US}, we have that $x_t + \delta u_t \in \mathcal{X}$ for all $u_t \in \mathbb{S}$.
\end{lem}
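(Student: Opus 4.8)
The plan is to argue purely geometrically, using convexity of $\mathcal{X}$ together with the inscribed-ball condition $r\mathbb{B}\subset\mathcal{X}$ from the preceding assumption. The guiding idea: the update \eqref{eqn:Update_US} projects onto the \emph{shrunken} set $(1-\xi)\mathcal{X}$, so each iterate $x_t$ can be written as a scaled point $(1-\xi)y_t$ with $y_t\in\mathcal{X}$; the perturbation $\delta u_t$ can then be absorbed as the ``missing'' $\xi$-fraction of a convex combination, as long as $\delta u_t$ is small enough to fit inside $\xi\, r\mathbb{B}$. This is essentially the confinement argument of \cite{flaxman2005online} adapted to the residual-feedback iterate.

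First I would establish the invariant $x_t\in(1-\xi)\mathcal{X}$ for all $t$. For $t\ge 1$ this is immediate, since $x_t=\Pi_{(1-\xi)\mathcal{X}}(\cdot)$ and a projection always returns a point of its target set; for $t=0$ it is the standing initialization assumption that the algorithm starts inside $(1-\xi)\mathcal{X}$. So fix $t$ and write $x_t=(1-\xi)y_t$ for some $y_t\in\mathcal{X}$.

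Next I would exhibit the perturbed point as a convex combination of two points of $\mathcal{X}$. Since $u_t\in\mathbb{S}$ we have $\|u_t\|=1$, so the vector $z_t:=(\delta/\xi)u_t$ has norm $\delta/\xi\le r$ because $\xi\ge\delta/r$; hence $z_t\in r\mathbb{B}\subset\mathcal{X}$. Then
\[
x_t+\delta u_t=(1-\xi)y_t+\delta u_t=(1-\xi)y_t+\xi z_t,
\]
which is a convex combination of $y_t,z_t\in\mathcal{X}$ with coefficients $1-\xi$ and $\xi$ in $[0,1]$. By convexity of $\mathcal{X}$, $x_t+\delta u_t\in\mathcal{X}$, which is exactly the claim.

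There is no genuinely hard step here; the only points requiring care are (i) the base case $t=0$, which is precisely why one must assume the algorithm is initialized inside $(1-\xi)\mathcal{X}$, and (ii) getting the direction of the inequality $\xi\ge\delta/r$ right — it is exactly the condition that shrinking the unit vector $u_t$ by the factor $\xi$ keeps $\delta u_t/\xi$ within the inscribed ball $r\mathbb{B}$. The upper bound $\xi\le 1$ plays only the housekeeping role of keeping $(1-\xi)\mathcal{X}$ nonempty and the convex-combination coefficients valid.
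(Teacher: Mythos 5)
Your argument is correct and is essentially identical to the paper's proof: both write $x_t=(1-\xi)x$ with $x\in\mathcal{X}$ (using that the projection lands in $(1-\xi)\mathcal{X}$), note that $\xi\ge\delta/r$ forces $\delta u_t=\xi x'$ for some $x'\in r\mathbb{B}\subset\mathcal{X}$, and conclude by convexity that $x_t+\delta u_t=(1-\xi)x+\xi x'\in\mathcal{X}$. Your explicit remark about the initialization $x_0\in(1-\xi)\mathcal{X}$ is a small point the paper leaves implicit, but it does not change the argument.
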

\begin{proof}
	When $1 \geq \xi \geq \frac{\delta}{r}$, we get that $\| \delta u \| \leq \xi r$. Therefore, there exists $x' \in r \mathbb{B} \subset \mathcal{X}$ such that the vector $\delta u = \xi x'$.
	Since $x_t \in (1 - \xi) \mathcal{X}$, there exists $x \in \mathcal{X}$ such that $x_t = (1 - \xi) x$, and there exists $x' \in \mathcal{X}$ such that $\delta u = \xi x'$. As a result, we have that $x_t + \delta u = (1-\xi)x + \xi x' \in \mathcal{X}$. This is because set $\mathcal{X}$ is convex.
\end{proof}
Next, we study the regret $R_T := \mathbb{E} \Big[ \sum_{t=0}^{T-1} f_{t}(x_t) - \min_{x \in \mathcal{X}} \sum_{t=0}^{T-1} f_{t}(x) \Big]$ achieved by executing the online update~\eqref{eqn:Update_US} We do so  in the following two steps. First, in Lemma~\ref{lem:I.4}, we provide an upper bound on the difference between the optimal solution that lies in the set $(1-\xi)\mathcal{X}$ and the one that lies in the set $\mathcal{X}$, i.e., $\min_{x \in (1 - \xi)\mathcal{X}} \sum_{t=0}^{T-1} f_{t}(x)  - \min_{x \in \mathcal{X}} \sum_{t=0}^{T-1} f_{t}(x)$; Then, in Theorem~\ref{thm: convex_Lip_US}, we bound the regret defined by the expected difference between the function values achieved by running the update~\eqref{eqn:Update_US} and the term $\min_{x \in (1 - \xi)\mathcal{X}} \sum_{t=0}^{T-1} f_{t}(x)$, i.e., $\mathbb{E} \Big[ \sum_{t=0}^{T-1} f_{t}(x_t) - \min_{x \in (1-\xi)\mathcal{X}} \sum_{t=0}^{T-1} f_{t}(x) \Big]$. Adding the two bounds above, we can complete the proof.

In the following lemma we provide a bound on $\min_{x \in (1 - \xi)\mathcal{X}} \sum_{t=0}^{T-1} f_{t}(x)  - \min_{x \in \mathcal{X}} \sum_{t=0}^{T-1} f_{t}(x)$.
\begin{lem} \label{lem:I.4}
	If the function $f_t$ is convex and $f_t \in C^{0,0}$ with Lipschitz constant $L_0$ for all time $t$, we have that 
	\begin{align} \label{eqn:I.4}
		\sum_{t=0}^{T-1} f_{t}( x_\xi^\ast) - \sum_{t=0}^{T-1} f_{t}(x^\ast) \leq \bar{r} L_0 \xi T,
	\end{align}
	where $x_\xi^\ast = \arg \min_{x \in (1 - \xi)\mathcal{X}} \sum_{t=0}^{T-1} f_{t}(x)$ and $x^\ast  = \arg \min_{x \in \mathcal{X}} \sum_{t=0}^{T-1} f_{t}(x)$.
\end{lem}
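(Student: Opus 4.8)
The plan is to compare the restricted minimizer $x_\xi^\ast$ with a feasible surrogate for $x^\ast$ obtained by scaling toward the origin. First I would observe that, since $x^\ast \in \mathcal{X}$, the scaled point $(1-\xi)x^\ast$ lies in $(1-\xi)\mathcal{X}$ by the very definition of that set. Hence $(1-\xi)x^\ast$ is a feasible competitor for the optimization problem defining $x_\xi^\ast$, which gives
\begin{align}
\sum_{t=0}^{T-1} f_t(x_\xi^\ast) \;\le\; \sum_{t=0}^{T-1} f_t\big((1-\xi)x^\ast\big). \nonumber
\end{align}

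Next I would control the right-hand side term by term using Lipschitz continuity of each $f_t$:
\begin{align}
f_t\big((1-\xi)x^\ast\big) - f_t(x^\ast) \;\le\; L_0 \,\big\| (1-\xi)x^\ast - x^\ast \big\| \;=\; L_0\,\xi\,\|x^\ast\|. \nonumber
\end{align}
Since $\mathcal{X} \subset \bar r\,\mathbb{B}$ by the standing assumption, we have $\|x^\ast\| \le \bar r$, so each term is bounded by $\bar r L_0 \xi$. Summing over $t = 0,\dots,T-1$ and combining with the previous inequality yields
\begin{align}
\sum_{t=0}^{T-1} f_t(x_\xi^\ast) - \sum_{t=0}^{T-1} f_t(x^\ast) \;\le\; \sum_{t=0}^{T-1}\Big( f_t\big((1-\xi)x^\ast\big) - f_t(x^\ast)\Big) \;\le\; \bar r L_0 \xi T, \nonumber
\end{align}
which is exactly \eqref{eqn:I.4}.

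There is no serious obstacle here; the only point requiring care is that the scaling $(1-\xi)x^\ast$ must land in $(1-\xi)\mathcal{X}$ (immediate from the set definition) and that $\|x^\ast\|$ is controlled by the outer-radius assumption $\mathcal{X}\subset\bar r\mathbb{B}$. Note that convexity of $f_t$ is not actually needed for this particular bound—only Lipschitz continuity—though it is listed in the hypotheses for consistency with the surrounding analysis; convexity (together with $0\in\mathcal{X}$, guaranteed by $r\mathbb{B}\subset\mathcal{X}$) is what makes $(1-\xi)\mathcal{X}$ a natural shrinkage in the companion update \eqref{eqn:Update_US}.
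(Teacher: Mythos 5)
Your proposal is correct, and its overall architecture matches the paper's proof: both compare $x_\xi^\ast$ against the feasible competitor $(1-\xi)x^\ast \in (1-\xi)\mathcal{X}$ and then bound the per-step gap $f_t\big((1-\xi)x^\ast\big) - f_t(x^\ast)$ by $\bar r L_0 \xi$ before summing. The one genuine difference is in how that gap is bounded: the paper invokes convexity first, writing $f_t\big((1-\xi)x^\ast\big) \le (1-\xi) f_t(x^\ast) + \xi f_t(0)$ (using $(1-\xi)x^\ast = (1-\xi)x^\ast + \xi\cdot 0$), and only then applies Lipschitz continuity to control $\xi\big(f_t(0)-f_t(x^\ast)\big) \le \xi L_0 \|x^\ast\| \le \bar r L_0 \xi$; you instead apply Lipschitz continuity directly to the displacement $\|(1-\xi)x^\ast - x^\ast\| = \xi\|x^\ast\| \le \xi \bar r$. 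Both routes give exactly the same constant, but yours is slightly more elementary and substantiates your side remark: convexity of $f_t$ is indeed not needed for this particular inequality (it is the convexity of $\mathcal{X}$, with $0\in\mathcal{X}$ from $r\mathbb{B}\subset\mathcal{X}$, that matters elsewhere, e.g.\ in Lemma~\ref{lem:I.3}), whereas the paper's argument happens to route through convexity of $f_t$ without gaining anything from it here. Your version is a clean simplification with no gap.
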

\begin{proof}
	Since $x^\ast \in \mathcal{X}$, we have that $(1 - \xi) x^\ast \in (1 - \xi) \mathcal{X}$. Moreover, since $x_\xi^\ast$ is the minimizer in the set $(1- \xi) \mathcal{X}$, we get that
	\begin{align} \label{eqn:I.4_1}
		\sum_{t=0}^{T-1} f_{t}(x_\xi^\ast) \leq \sum_{t=0}^{T-1} f_{t}( (1 - \xi)x^\ast).
	\end{align}
	Also, since $f_t$ is convex and $(1 - \xi)x^\ast = (1 - \xi)x^\ast + \xi 0$, we have that
	\begin{align} \label{eqn:I.4_2}
		f_{t}( (1 - \xi)x^\ast) & \leq (1 - \xi) f(x^\ast) + \xi f_t(0) \nonumber \\
		& \leq  (1 - \xi) f_t(x^\ast) + \xi f_t(x^\ast) - \xi f_t(x^\ast) + \xi f_t(0) \nonumber \\
		& \leq f_t(x^\ast) + \xi L_0 \|x^\ast\| \leq f_t(x^\ast) + \bar{r} L_0 \xi,
	\end{align}
	where the last inequality is due to the fact that $x^\ast \in \mathcal{X} \subset \bar{r} \mathbb{B}$. Summing the inequality~\eqref{eqn:I.4_2} over time, we obtain that
	\begin{align} \label{eqn:I.4_3}
		\sum_{t=0}^{T-1} f_{t}( (1 - \xi)x^\ast) - \sum_{t=0}^{T-1} f_{t}(x^\ast) \leq \bar{r} L_0 \xi T.
	\end{align}
	Adding up the inequalities~\eqref{eqn:I.4_1} and \eqref{eqn:I.4_3} and rearranging terms completes the proof.
\end{proof}
Next, we study the regret $\mathbb{E} \Big[ \sum_{t=0}^{T-1} f_{t}(x_t) - \min_{x \in (1-\xi)\mathcal{X}} \sum_{t=0}^{T-1} f_{t}(x) \Big]$ following similar steps as in Section~\ref{sec:Deterministic_ConvexOnline}. First, we can bound the difference between the smoothed objective function $f_{\delta, t}$ and $f_t$ for every time step $t$ as follows.
\begin{lem}
	Consider a function $f$ and its smoothed version $f_\delta$. It holds that 
	\begin{align*}
	|f_\delta(x)-f(x)|\le
	\begin{cases}
	\delta L_0, ~\text{if}~f\in C^{0,0}, \\
	\delta^2 L_1, ~\text{if}~f\in C^{1,1}.
	\end{cases}
	\end{align*}
\end{lem}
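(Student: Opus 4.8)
The plan is to reuse the argument behind the Gaussian smoothing bound in Lemma~\ref{lem:GaussianApprox}, but to note that here the smoothing vector $v$ is drawn uniformly from the unit ball $\mathbb{B}$, so that $\|v\|\le 1$ almost surely. This almost-sure bound is exactly what replaces the Gaussian moment estimates $\mathbb{E}\|u\| \le \sqrt d$ and $\mathbb{E}\|u\|^2 = d$ and therefore removes the dimension-dependent factors $\sqrt d$ and $d$ from the right-hand side.

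For $f\in C^{0,0}$, I would start from $f_\delta(x)-f(x)=\mathbb{E}_{v\in\mathbb{B}}\big[f(x+\delta v)-f(x)\big]$, bring the absolute value inside the expectation via the triangle inequality, and apply the $L_0$-Lipschitz property pointwise: $|f(x+\delta v)-f(x)|\le L_0\delta\|v\|\le L_0\delta$ for every $v\in\mathbb{B}$. Taking the expectation gives $|f_\delta(x)-f(x)|\le L_0\delta$.

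For $f\in C^{1,1}$, I would use the standard quadratic bound implied by $L_1$-smoothness (Lemma~1.2.3 in \cite{nesterov2013introductory}), $\big|f(x+\delta v)-f(x)-\delta\langle\nabla f(x),v\rangle\big|\le \tfrac{L_1}{2}\delta^2\|v\|^2\le \tfrac{L_1}{2}\delta^2$, and then split $f_\delta(x)-f(x)=\delta\big\langle\nabla f(x),\mathbb{E}_{v}[v]\big\rangle+\mathbb{E}_{v}\big[f(x+\delta v)-f(x)-\delta\langle\nabla f(x),v\rangle\big]$. The first term vanishes because the uniform distribution on $\mathbb{B}$ is symmetric about the origin, hence $\mathbb{E}_{v}[v]=0$; the second term is bounded in absolute value by $\tfrac{L_1}{2}\delta^2\le L_1\delta^2$ using the pointwise estimate above. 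This yields $|f_\delta(x)-f(x)|\le L_1\delta^2$.

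The only step requiring any care is the symmetry argument that cancels the linear term in the smooth case — without it one would be stuck with an $O(\delta)$ bound rather than $O(\delta^2)$. Everything else is routine, and the constants one actually obtains ($\tfrac12 L_1\delta^2$, and the sharper $\mathbb{E}\|v\|^2=\tfrac{d}{d+2}$) are if anything stronger than what is claimed, so there is no tightness obstruction.
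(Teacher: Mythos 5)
Your proposal is correct and follows essentially the same route as the paper's proof: the triangle inequality plus the pointwise Lipschitz bound $\|\delta v\|\le\delta$ for the $C^{0,0}$ case, and the quadratic $L_1$-smoothness bound combined with $\mathbb{E}_{v\in\mathbb{B}}[v]=0$ to cancel the linear term in the $C^{1,1}$ case. Your observation that the argument actually yields the sharper constant $\tfrac{L_1}{2}\delta^2$ is accurate, but this is only a constant-factor refinement of the same argument.
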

\begin{proof}
	Recall that $f_\delta(x) = \mathbb{E}_{v \in \mathbb{B}} \big[ f(x + \delta v)\big]$. Then, we have that
	\begin{align} \label{eqn:I.5_0}
		|f_\delta(x)-f(x)| & = | \mathbb{E}_{v \in \mathbb{B}} \big[ f(x + \delta v) - f(x)\big] | \nonumber \\
		& \leq \mathbb{E}_{v \in \mathbb{B}} \big[  |f(x + \delta v) - f(x)| \big] \nonumber \\
		& \leq \mathbb{E}_{v \in \mathbb{B}} \big[  L_0 \|\delta v\| \big].
	\end{align}
	Furthermore, since $v \in  \mathbb{B}$, we have that $\|\delta v\| \leq \delta$. Combining this inequality with \eqref{eqn:I.5_0}, we have that $|f_\delta(x)-f(x)| \leq \mathbb{E}_{v \in \mathbb{B}} \big[  \delta L_0 \big] = L_0 \delta$.
	When the function $f \in C^{1,1}$ with Lipschitz constant $L_1$, we have that 
	\begin{align} \label{eqn:I.5_1}
		\langle \nabla f(x), \delta v \rangle - \frac{L_1}{2} \|\delta v\|^2 \leq f(x + \delta v) - f(x) \leq \langle \nabla f(x), \delta v \rangle + \frac{L_1}{2} \|\delta v\|^2,
	\end{align}
	for all $v \in \mathbb{B}$. Taking the expectation of \eqref{eqn:I.5_1} over v sampled uniformly from the unit ball $\mathbb{B}$ and recalling that $v$ is sampled independently from $x$ and has zero mean, we get that
	\begin{align} \label{eqn:I.5_2}
		-L_1\delta^2 \leq -\frac{L_1}{2}\mathbb{E}_{v \in \mathbb{B}}\big[ \|\delta v\|^2 \big] \leq \mathbb{E}_{v \in \mathbb{B}} \big[  f(x + \delta v) - f(x)  \big] \leq \frac{L_1}{2}\mathbb{E}_{v \in \mathbb{B}}\big[ \|\delta v\|^2 \big] \leq L_1 \delta^2.
	\end{align}
	In addition, because $|f_\delta(x)-f(x)| = | \mathbb{E}_{v \in \mathbb{B}} \big[ f(x + \delta v) - f(x)\big] |$, we obtain that $|f_\delta(x)-f(x)| \leq L_1 \delta^2$. The proof is complete.
\end{proof}
The next lemma provides a bound on the second moment of the gradient estimate~\eqref{eqn:GradientEstimate_US} under update~\eqref{eqn:Update_US}. 
\begin{lem}[Second moment]\label{lem:BoundSecondMoment_US}
	Assume that $f_t \in C^{0,0}$ with Lipschitz constant $L_0$ for all time $t$. Then, under the ZO update rule in~\eqref{eqn:Update_US}, the second moment of the residual feedback~\eqref{eqn:GradientEstimate_US} satisfies:
	\begin{align} \label{eqn:SecondMomentBound_US}
	\mathbb{E}[\|\widetilde{g}_t (x_t)\|^2] &\leq \; \frac{4 d^2 L_0^2 \eta^2}{\delta^2} \mathbb{E}[ \|\widetilde{g}_{t-1}(x_{t-1})\|^2] +D_t,
	\end{align}
	where $D_t := 16 d^2 L_0^2 + \frac{2d^2}{\delta^2} \mathbb{E} \big[ \big( f_t(x_{t-1} + \delta u_{t-1}) - f_{t-1}(x_{t-1} + \delta u_{t-1})\big)^2\big]$.
\end{lem}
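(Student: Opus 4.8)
The plan is to mirror the proof of Lemma~\ref{lem:BoundSecondMoment_Det} in Appendix~\ref{sec:proof_BoundSecondMoment}, tracking the two structural differences introduced by unit-sphere sampling: the estimator~\eqref{eqn:GradientEstimate_US} carries a prefactor $d/\delta$ rather than $1/\delta$, and the perturbation direction $u_t$ is now a \emph{deterministic} unit vector, so $\|u_t\|=1$ almost surely. Because of the latter, the steps that in the Gaussian case required Lemma~1 of \cite{nesterov2017random} to control $\mathbb{E}[\|u_t\|^2]$ and $\mathbb{E}[\|u_t\|^4]$ become trivial, at the cost of an extra factor $d^2$ (coming from $(d/\delta)^2$) appearing throughout.

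First I would expand $\mathbb{E}[\|\widetilde g_t(x_t)\|^2] = \tfrac{d^2}{\delta^2}\,\mathbb{E}\big[\big(f_t(x_t+\delta u_t) - f_{t-1}(x_{t-1}+\delta u_{t-1})\big)^2\big]$, using $\|u_t\|^2=1$, and split via $(a+b)^2 \le 2a^2+2b^2$ into a term $\tfrac{2d^2}{\delta^2}\,\mathbb{E}\big[\big(f_t(x_t+\delta u_t) - f_t(x_{t-1}+\delta u_{t-1})\big)^2\big]$ measuring the drift of the iterate, and the perturbation term $\tfrac{2d^2}{\delta^2}\,\mathbb{E}\big[\big(f_t(x_{t-1}+\delta u_{t-1}) - f_{t-1}(x_{t-1}+\delta u_{t-1})\big)^2\big]$, which is precisely the second summand of $D_t$; note that unlike in the Gaussian case no independence argument is needed for this term, since the factor $\|u_t\|^2$ is identically $1$ and can simply be dropped. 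For the first term, I would insert $\pm f_t(x_{t-1}+\delta u_t)$, apply $(a+b)^2 \le 2a^2+2b^2$ once more, and use $L_0$-Lipschitz continuity of $f_t$ to bound the two pieces by $\tfrac{4d^2 L_0^2}{\delta^2}\,\mathbb{E}[\|x_t - x_{t-1}\|^2]$ and $4d^2 L_0^2\,\mathbb{E}[\|u_t - u_{t-1}\|^2]$ respectively; the latter is at most $16 d^2 L_0^2$ because $\|u_t-u_{t-1}\|^2 \le 2\|u_t\|^2 + 2\|u_{t-1}\|^2 = 4$.

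Finally, since $x_t = \Pi_{(1-\xi)\mathcal{X}}\big(x_{t-1} - \eta\widetilde g_{t-1}(x_{t-1})\big)$ and $x_{t-1} = \Pi_{(1-\xi)\mathcal{X}}(x_{t-1})$, non-expansiveness of the projection onto the convex set $(1-\xi)\mathcal{X}$ gives $\|x_t - x_{t-1}\| \le \eta\,\|\widetilde g_{t-1}(x_{t-1})\|$, hence $\tfrac{4d^2 L_0^2}{\delta^2}\,\mathbb{E}[\|x_t - x_{t-1}\|^2] \le \tfrac{4 d^2 L_0^2 \eta^2}{\delta^2}\,\mathbb{E}[\|\widetilde g_{t-1}(x_{t-1})\|^2]$. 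Collecting the three contributions yields exactly \eqref{eqn:SecondMomentBound_US} with the stated $D_t$. I do not expect any genuine obstacle; the only point requiring care is to keep in mind that $u_t$ lying on the unit sphere means $\|u_t\|\equiv 1$, which simultaneously simplifies the moment bookkeeping (no fourth-moment estimates) and forces the $d^2$ scaling — this is precisely what distinguishes the bound from its Gaussian-sampling counterpart in Lemma~\ref{lem:BoundSecondMoment_Det}.
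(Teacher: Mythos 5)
Your proposal is correct and follows essentially the same route as the paper's proof in Appendix~\ref{sec:UniformSample}: the same two-stage splitting with $(a+b)^2\le 2a^2+2b^2$ after inserting $\pm f_t(x_{t-1}+\delta u_t)$, the same Lipschitz bounds giving the $\tfrac{4d^2L_0^2\eta^2}{\delta^2}$ and $16d^2L_0^2$ terms via $\|u_t-u_{t-1}\|^2\le 4$, and the same non-expansiveness of $\Pi_{(1-\xi)\mathcal{X}}$ to relate $\|x_t-x_{t-1}\|$ to $\eta\|\widetilde g_{t-1}(x_{t-1})\|$. The only cosmetic difference is that you drop $\|u_t\|^2\equiv 1$ immediately, whereas the paper carries it through the first split before invoking $u_t\in\mathbb{S}$; the substance is identical.
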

\begin{proof}
	By definition of the residual feedback~\eqref{eqn:GradientEstimate_US}, we have that
	\begin{equation}
	\label{eqn:BSM_online_1_US}
	\begin{split}
	\mathbb{E}[\|\tilde{g}_t (x_t)\|^2] & = \mathbb{E}[\frac{d^2}{\delta^2} \big(f_t(x_t + \delta u_t) - f_{t-1}(x_{t-1} + \delta u_{t-1})\big)^2 \|u_t\|^2]  \\
	& \leq \frac{2d^2}{\delta^2} \mathbb{E}[ \big(f_t(x_t + \delta u_t) - f_t(x_{t-1} + \delta u_{t-1})\big)^2 \|u_t\|^2] \\
	& \quad \quad \quad \quad \quad + \frac{2d^2}{\delta^2} \mathbb{E}[ \big( f_t(x_{t-1} + \delta u_{t-1}) - f_{t-1}(x_{t-1} + \delta u_{t-1})\big)^2 \|u_t\|^2] \\
	& \leq \frac{2d^2}{\delta^2} \mathbb{E}[ \big(f_t(x_t + \delta u_t) - f_t(x_{t-1} + \delta u_{t-1})\big)^2] \\
	& \quad \quad \quad \quad \quad + \frac{2d^2}{\delta^2} \mathbb{E}[ \big( f_t(x_{t-1} + \delta u_{t-1}) - f_{t-1}(x_{t-1} + \delta u_{t-1})\big)^2],
	\end{split}
	\end{equation}
	where the last inequality is because $u_t \in \mathbb{S}$. Moreover, adding and subtracting $f_t(x_{t-1} + \delta u_{t})$ to the term $\big(f_t(x_t + \delta u_t) - f_t(x_{t-1} + \delta u_{t-1})\big)^2$ in the inequality~\eqref{eqn:BSM_online_1_US}, we obtain
	\begin{equation}
	\begin{split}
	\mathbb{E}[\|\tilde{g} (x_t)\|^2] \leq & \frac{4d^2}{\delta^2} \mathbb{E}[ \big(f_t(x_t + \delta u_t) - f_t(x_{t-1} + \delta u_{t})\big)^2 ] \\
	& + \frac{4d^2}{\delta^2} \mathbb{E}[ \big( f_t(x_{t-1} + \delta u_{t}) - f_t(x_{t-1} + \delta u_{t-1})\big)^2 ] \\
	& + \frac{2d^2}{\delta^2} \mathbb{E}[ \big( f_t(x_{t-1} + \delta u_{t-1}) - f_{t-1}(x_{t-1} + \delta u_{t-1})\big)^2]. 
	\end{split}
	\end{equation}
	Since $f_t \in C^{0,0}$ is Lipschitz with constant $L_0$, we further obtain that
	\begin{align}
	\label{eqn:US_online_2}
	\mathbb{E}[\|\tilde{g} (x_t)\|^2] \leq & \frac{4 d^2 L_0^2}{\delta^2} \mathbb{E}[ \|x_t - x_{t-1}\|^2 ] + 4 d^2 L_0^2 \mathbb{E}[\|u_t - u_{t-1}\|^2] \nonumber \\
	& + \frac{2d^2}{\delta^2} \mathbb{E}[ \big( f_t(x_{t-1} + \delta u_{t-1}) - f_{t-1}(x_{t-1} + \delta u_{t-1})\big)^2].
	\end{align}
	Since $u_t \in \mathbb{S}$, we get that $\mathbb{E}[\|u_t - u_{t-1}\|^2] \leq 4$.  
	Substituting this bound into inequality~\eqref{eqn:US_online_2}, we obtain that
	\begin{align}
	\mathbb{E}[\|\tilde{g} (x_t)\|^2] \leq & \frac{4 d^2 L_0^2}{\delta^2} \mathbb{E}[ \|x_t - x_{t-1}\|^2] + 16 d^2 L_0^2 \nonumber \\
	& + \frac{2d^2}{\delta^2} \mathbb{E}[ \big( f_t(x_{t-1} + \delta u_{t-1}) - f_{t-1}(x_{t-1} + \delta u_{t-1})\big)^2].
	\end{align}
	Since $x_t = \Pi_{(1-\xi)\mathcal{X}} \big[ x_{t-1} - \eta \tilde{g}(x_{t-1}) \big]$, we get that $\|x_t - x_{t-1}\| = \|\Pi_{(1-\xi)\mathcal{X}}\big[ x_{t-1} - \eta \tilde{g}(x_{t-1}) \big] - \Pi_{(1-\xi)\mathcal{X}}\big[ x_{t-1} \big]\| \leq \eta \|\tilde{g}(x_{t-1})\|$ due to the nonexpansiveness of the projection operator onto a convex set. Therefore, we have that
	%
	\begin{align}
	\mathbb{E}[\|\tilde{g}_t (x_t)\|^2] \leq & \frac{4 d^2 L_0^2 \eta^2}{\delta^2} \mathbb{E}[ \|\tilde{g}_{t-1}(x_{t-1})\|^2] + 16 d^2 L_0^2  \nonumber \\
	& + \frac{2d^2}{\delta^2} \mathbb{E}[ \big( f_t(x_{t-1} + \delta u_{t-1}) - f_{t-1}(x_{t-1} + \delta u_{t-1})\big)^2].
	\end{align}
	The proof is complete.
\end{proof}
Using Lemmas~\ref{lem:I.1}-\ref{lem:BoundSecondMoment_US}, we can obtain the main theorem for online convex optimization using~\eqref{eqn:Update_US}.

\begin{thm}[Regret for Convex Lipschitz $f_t$]\label{thm: convex_Lip_US}
	Let Assumption~\ref{asmp:BoundVariation} hold. Assume that $f_t\in C^{0,0}$ is convex with Lipschitz constant $L_0$ for all $t$. Run ZO with residual feedback for $T > \bar{r}^2 L_0^{2q}$ iterations with $\eta = \frac{\bar{r}^{\frac{3}{2}}}{2\sqrt{2} L_0 \sqrt{d} T^{\frac{3}{4}}}$ and $\delta = \frac{\sqrt{\bar{r} d}}{ L_0^q  T^{\frac{1}{4}}}$, where $q \in \mathbb{R}$ is a user-specified parameter. Then, we have that
	\begin{align}
		R_T  \leq & \; 4\sqrt{2\bar{r} d} L_0 T^{\frac{3}{4}} + \frac{\mathbb{E} \big[ \|\tilde{g}_0(x_0)\|^2 \big] \bar{r}^{\frac{3}{2}}}{2 \sqrt{2d} L_0 T^{\frac{3}{4}}} + 8\sqrt{2} d^{\frac{3}{2}} L_0 \bar{r}^{\frac{3}{2}} T^{\frac{1}{4}} \nonumber \\
		&  + (2 + \frac{\bar{r}}{r}) L_0^{1 - q} \sqrt{d\bar{r}} T^{\frac{3}{4}} + \frac{\sqrt{2d\bar{r}} V_f^2}{ L_0^{1 - 2q}} T^\frac{3}{4}.
	\end{align}
	Asymptotically, we have $R_T = \mathcal{O}\big( (L_0 + L_0^{1-q} +{L_0}^{2q-1} V_f^2) \sqrt{d\bar{r}}  T^{\frac{3}{4}} \big)$.
\end{thm}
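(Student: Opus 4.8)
The plan is to follow the proof of Theorem~\ref{thm: convex_Lip} almost verbatim, with three modifications: the smoothing here is over the ball $\mathbb{B}$ rather than Gaussian, the iterates live in the shrunken set $(1-\xi)\mathcal{X}$, and the oracle~\eqref{eqn:GradientEstimate_US} carries a factor $d$ rather than $\sqrt{d}$, so the contraction rate scales with $d^2$. First I would fix $\xi=\delta/r$, the smallest value permitted by Lemma~\ref{lem:I.3}, which guarantees every query $x_t+\delta u_t$ lies in $\mathcal{X}$. Then I would decompose the regret as
\[
R_T=\mathbb{E}\Big[\textstyle\sum_{t}\big(f_t(x_t)-f_t(x_\xi^\ast)\big)\Big]+\Big(\textstyle\sum_{t}f_t(x_\xi^\ast)-\textstyle\sum_{t}f_t(x^\ast)\Big),
\]
where $x_\xi^\ast=\arg\min_{x\in(1-\xi)\mathcal{X}}\sum_t f_t(x)$; the second bracket is at most $\bar r L_0\xi T=\tfrac{\bar r}{r}L_0\delta T$ by Lemma~\ref{lem:I.4}, so it remains to bound the first (``online'') bracket.

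For the online bracket I would run the standard projected-gradient argument on the smoothed functions $f_{\delta,t}$. Convexity of $f_{\delta,t}$ gives $f_{\delta,t}(x_t)-f_{\delta,t}(x_\xi^\ast)\le\langle\nabla f_{\delta,t}(x_t),x_t-x_\xi^\ast\rangle$; taking conditional expectation over $u_t$ and using the unbiasedness $\mathbb{E}[\widetilde g_t(x_t)\mid x_t]=\nabla f_{\delta,t}(x_t)$ from Lemma~\ref{lem:I.1} replaces the true gradient by $\widetilde g_t(x_t)$, after which the tower rule turns everything into a full expectation. Nonexpansiveness of $\Pi_{(1-\xi)\mathcal{X}}$ together with the identity $\langle\widetilde g_t(x_t),x_t-x\rangle=\tfrac{1}{2\eta}\big(\|x_t-x\|^2-\|x_{t+1}-x\|^2\big)+\tfrac{\eta}{2}\|\widetilde g_t(x_t)\|^2$ and telescoping over $t$, followed by converting $f_{\delta,t}$ back to $f_t$ through the ball-smoothing bound $|f_{\delta,t}(x)-f_t(x)|\le\delta L_0$, yields
\[
\mathbb{E}\Big[\textstyle\sum_{t}\big(f_t(x_t)-f_t(x_\xi^\ast)\big)\Big]\le\frac{\|x_0-x_\xi^\ast\|^2}{2\eta}+\frac{\eta}{2}\,\mathbb{E}\Big[\textstyle\sum_{t}\|\widetilde g_t(x_t)\|^2\Big]+2\delta L_0 T,
\]
and I would use $\|x_0-x_\xi^\ast\|\le 2\bar r$ since both points lie in $\mathcal{X}\subset\bar r\mathbb{B}$.

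The remaining quantity $\sum_t\mathbb{E}[\|\widetilde g_t(x_t)\|^2]$ is controlled by telescoping the contraction of Lemma~\ref{lem:BoundSecondMoment_US}: with $\alpha=4d^2L_0^2\eta^2/\delta^2$ and Assumption~\ref{asmp:BoundVariation} giving $D_t\le16d^2L_0^2+2d^2V_f^2\delta^{-2}$, one obtains $\sum_t\mathbb{E}[\|\widetilde g_t(x_t)\|^2]\le\frac{1}{1-\alpha}\big(\mathbb{E}[\|\widetilde g_0(x_0)\|^2]+(16d^2L_0^2+2d^2V_f^2\delta^{-2})T\big)$. Plugging in $\eta=\bar r^{3/2}(2\sqrt2 L_0\sqrt d\,T^{3/4})^{-1}$ and $\delta=\sqrt{\bar r d}\,(L_0^{q}T^{1/4})^{-1}$ gives $\alpha=\bar r^2 L_0^{2q}/(2T)\le\tfrac12$ whenever $T\ge\bar r^2L_0^{2q}$, so $\tfrac{1}{1-\alpha}\le2$; then $\tfrac{\|x_0-x_\xi^\ast\|^2}{2\eta}=\mathcal{O}(\sqrt{\bar r d}\,L_0 T^{3/4})$, $\tfrac{\eta}{2}\mathbb{E}[\|\widetilde g_0(x_0)\|^2]$ produces the $\widetilde g_0$ term, $d^2L_0^2\eta T=\mathcal{O}(d^{3/2}L_0\bar r^{3/2}T^{1/4})$, $2\delta L_0T+\tfrac{\bar r}{r}L_0\delta T=(2+\tfrac{\bar r}{r})L_0^{1-q}\sqrt{d\bar r}\,T^{3/4}$, and $\eta d^2 V_f^2 T\delta^{-2}=\mathcal{O}(L_0^{2q-1}\sqrt{d\bar r}\,V_f^2 T^{3/4})$; summing these reproduces the stated bound, and reading off the dominant power of $T$ gives the asymptotic rate.

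The main obstacle I anticipate is not any single step but the coupled bookkeeping of the parameters: feasibility forces $\xi=\Theta(\delta/r)$, the smoothing error contributes $\Theta(\delta L_0 T)$, and the gradient variance contributes $\Theta(\eta d^2 V_f^2 T/\delta^2)$, and one must verify that the particular choice $\delta=\Theta(\sqrt{\bar r d}\,T^{-1/4})$ balances all three against $\|x_0-x_\xi^\ast\|^2/\eta$ at the common order $\mathcal{O}(T^{3/4})$. Tracking the $d$-powers correctly is the delicate part, since the $d$ (rather than $\sqrt d$) in~\eqref{eqn:GradientEstimate_US} makes $\alpha$ scale with $d^2$ and thereby changes the admissible range of $T$ and the final dimension dependence relative to the Gaussian-smoothing analysis of Theorem~\ref{thm: convex_Lip}.
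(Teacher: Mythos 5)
Your proposal is correct and follows essentially the same route as the paper's proof: the decomposition via Lemma~\ref{lem:I.4} with $\xi=\delta/r$, the projected-gradient argument on the ball-smoothed $f_{\delta,t}$ with the smoothing error $\delta L_0$, telescoping the second-moment contraction of Lemma~\ref{lem:BoundSecondMoment_US} under Assumption~\ref{asmp:BoundVariation}, the bound $\|x_0-x_\xi^\ast\|\le 2\bar r$, and the same parameter choices giving $\alpha=\bar r^2L_0^{2q}/(2T)\le 1/2$. Your term-by-term constants also match the stated bound, so nothing is missing.
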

\begin{proof}
	First, we provide a bound on the regret that compares the sum of the function values obtained using~\eqref{eqn:Update_US} to that obtained for the optimizer $x_\xi^\ast$ in the shrinked constraint set $(1 - \xi)\mathcal{X}$, i.e., $\mathbb{E} \Big[ \sum_{t=0}^{T-1} f_{t}(x_t) - \min_{x \in (1-\xi)\mathcal{X}} \sum_{t=0}^{T-1} f_{t}(x) \Big]$.
	Since $f_{\delta, t}(x)$ is convex for all $t$, we conclude that
	\begin{align} \label{eqn:convexLip_1_US}
	f_{\delta,t}(x_t) - f_{\delta,t}(x) \leq \langle \nabla f_{\delta,t}(x_t), x_t - x \rangle, \text{ for all } x \in (1-\xi)\mathcal{X}.
	\end{align}
	Adding and subtracting $\tilde{g}_t(x_t)$ to $\nabla f_{\delta,t}(x_t)$ in inequality~\eqref{eqn:convexLip_1_US}, and taking the expectation of both sides with respect to $u_t$, we obtain that
	\begin{align}\label{eqn:BSM_Oline_1_US}
	\mathbb{E} \big[ f_{\delta,t}(x_t) - f_{\delta,t}(x) \big] \leq \mathbb{E} \big[ \langle  \tilde{g}_t(x_t), x_t - x \rangle \big].
	\end{align} 
	Since $x_{t+1} = \Pi_{(1-\xi)\mathcal{X}} \big[ x_{t} - \eta \tilde{g}(x_{t}) \big]$, for any $x\in (1-\xi)\mathcal{X}$ we have that
	\begin{align} \label{eqn:convexLip_2_US}
	\|x_{t+1} - x\|^2 & =  \|\Pi_{(1-\xi)\mathcal{X}} \big[ x_{t} - \eta \tilde{g}(x_{t}) \big] - \Pi_{(1-\xi)\mathcal{X}} \big[x \big]\|^2 \nonumber \\
	&\leq \| x_{t} - \eta \tilde{g}(x_{t}) - x \|^2 \nonumber \\
	& = \|x_t - x\|^2 - 2 \eta \langle \tilde{g}_t(x_t), x_t - x \rangle + \eta^2 \|\tilde{g}_t(x_t)\|^2.
	\end{align}
	Rearranging the terms in inequality~\eqref{eqn:convexLip_2_US} yields
	\begin{align} \label{eqn:convexLip_3_US}
	\langle \tilde{g}_t(x_t), x_t - x \rangle \leq \frac{1}{2\eta} \big( \|x_t - x\|^2 -  \|x_{t+1} - x\|^2\big) + \frac{\eta}{2} \|\tilde{g}_t(x_t)\|^2.
	\end{align}
	Taking the expectation of both sides of inequality~\eqref{eqn:convexLip_3_US} with respect to $u_t$ and substituting the resulting bound into \eqref{eqn:BSM_Oline_1_US}, we obtain that 
	\begin{align} \label{eqn:convexLip_4_US}
	\mathbb{E} \Big[ \sum_{t=0}^{T-1} f_{\delta,t}(x_t) - \sum_{t=0}^{T-1} f_{\delta,t}(x) \Big] \leq \frac{1}{2\eta} \|x_0 - x\|^2  + \frac{\eta}{2} \mathbb{E} \Big[ \sum_{t=0}^{T-1} \|\tilde{g}_t(x_t)\|^2 \Big].
	\end{align}
	Since $f_t(x) \in C^{0,0}$, we know that $|f_{\delta, t}(x) - f_t(x)| \leq \delta L_0$. Therefore, we obtain
	\begin{align} \label{eqn:BSM_Oline_2_US}
	\mathbb{E} \big[ \sum_{t=0}^{T-1} f_{t}(x_t) - \sum_{t=0}^{T-1} f_{t}(x) \big] &= \mathbb{E} \big[ \sum_{t=0}^{T-1} f_{\delta,t}(x_t) - \sum_{t=0}^{T-1} f_{\delta,t}(x) \big] \nonumber \\
	& \;\;\;\;\;\; + \mathbb{E} \big[ \sum_{t=0}^{T-1} \big( f_{t}(x_t) - f_{\delta,t}(x_t) \big) - \sum_{t=0}^{T-1} \big(  f_{t}(x) - f_{\delta,t}(x) \big) \big] \nonumber \\
	& \leq \frac{1}{2\eta} \|x_0 - x\|^2  + \frac{\eta}{2} \mathbb{E} \big[ \sum_{t=0}^{T-1} \|\tilde{g}_t(x_t)\|^2 \big] + 2 L_0 \delta T,
	\end{align}
	where we have made use of the bound in \eqref{eqn:convexLip_4_US}. Telescoping the bound in \eqref{eqn:SecondMomentBound_US} over $t = 1, 2,...,T-1$, adding $\mathbb{E} \big[ \|\tilde{g}_0(x_0) \|^2 \big]$ to both sides, and adding $\frac{4 d^2 L_0^2 \eta^2}{\delta^2} \mathbb{E}[ \|\tilde{g}_{T-1}(x_{T-1})\|^2]$ to the right hand side, we obtain that
	\begin{align}\label{eqn:BSM_Oline_3_US}
	\mathbb{E} \big[ \sum_{t=0}^{T-1} \|\tilde{g}_t(x_t)\|^2 \big] \leq \frac{1}{1-\alpha} \mathbb{E} \big[ \|\tilde{g}_0(x_0)\|^2 \big] + \frac{16}{1 - \alpha} d^2 L_0^2 T + \frac{2d^2V_f^2}{1 - \alpha} \frac{1}{\delta^2} T,
	\end{align}
	where $\alpha = \frac{4 d^2 L_0^2 \eta^2}{\delta^2}$. Substituting the bound in \eqref{eqn:BSM_Oline_3_US} into \eqref{eqn:BSM_Oline_2_US} yields
	\begin{align} \label{eqn:convexLip_5_US}
	\mathbb{E} \big[ \sum_{t=0}^{T-1} f_{t}(x_t) - \sum_{t=0}^{T-1} f_{t}(x) \big]  \leq & \;  \frac{1}{2\eta} \|x_0 - x\|^2 + \frac{\eta}{2(1-\alpha)} \mathbb{E} \big[ \|\tilde{g}_0(x_0)\|^2 \big] + \frac{16}{1 - \alpha} d^2 L_0^2 \eta T  \nonumber \\
	& + 2 L_0 \delta T + \frac{2d^2 V_f^2}{1 - \alpha} \frac{\eta}{\delta^2} T.
	\end{align}
	Since inequality~\eqref{eqn:convexLip_5_US} holds for all $x \in (1-\xi)\mathcal{X}$, we can replace $x$ in \eqref{eqn:convexLip_5_US}  with $x_\xi^\ast$. Furthermore, using Lemma~\ref{lem:I.4}, we have that 
	\begin{align} \label{eqn:convexLip_6_US}
				\sum_{t=0}^{T-1} f_{t}( x_\xi^\ast) - \sum_{t=0}^{T-1} f_{t}(x^\ast) \leq \bar{r} L_0 \xi T.
	\end{align}
	Summing inequalities~\eqref{eqn:convexLip_5_US} and \eqref{eqn:convexLip_6_US}, we obtain
	\begin{align} \label{eqn:convexLip_7_US}
		\mathbb{E} \big[ \sum_{t=0}^{T-1} f_{t}(x_t) - \sum_{t=0}^{T-1} f_{t}(x^\ast) \big]  \leq & \;  \frac{1}{2\eta} \|x_0 - x_\xi^\ast\|^2 + \frac{\eta}{2(1-\alpha)} \mathbb{E} \big[ \|\tilde{g}_0(x_0)\|^2 \big] + \frac{16}{1 - \alpha} d^2 L_0^2 \eta T  \nonumber \\
		& + 2 L_0 \delta T + \frac{2d^2 V_f^2}{1 - \alpha} \frac{\eta}{\delta^2} T + \bar{r}L_0\xi T,
	\end{align}	
	where $\|x_0 - x_\xi^\ast\|^2 \leq 4\bar{r}^2$. According to Lemma~\ref{lem:I.3}, we can select $\xi = \frac{\delta}{r}$ to guarantee that all iterates $x_t + \delta u_t \in \mathcal{X}$ for all $u_t \in \mathbb{S}$. Furthermore, let $\eta = \frac{\bar{r}^{\frac{3}{2}}}{2\sqrt{2} L_0 \sqrt{d} T^{\frac{3}{4}}}$ and $\delta = \frac{\sqrt{\bar{r} d}}{ L_0^q  T^{\frac{1}{4}}}$, where $q \in \mathbb{R}$ is a user-specified parameter. Then, $\alpha = \frac{4 d^2 L_0^2 \eta^2}{\delta^2} = \frac{1}{2T} \bar{r}^2 L_0^{2q} \leq \frac{1}{2}$ when $T \geq \bar{r}^2 L_0^{2q}$. Substituting these parameter values into \eqref{eqn:convexLip_7_US}, we obtain that
	\begin{align} \label{eqn:BSM_Oline_4_US}
		\mathbb{E} \big[ \sum_{t=0}^{T-1} f_{t}(x_t) - \sum_{t=0}^{T-1} f_{t}(x^\ast) \big]  \leq & \; 4\sqrt{2\bar{r} d} L_0 T^{\frac{3}{4}} + \frac{\mathbb{E} \big[ \|\tilde{g}_0(x_0)\|^2 \big] \bar{r}^{\frac{3}{2}}}{2 \sqrt{2d} L_0 T^{\frac{3}{4}}} + 8\sqrt{2} d^{\frac{3}{2}} L_0 \bar{r}^{\frac{3}{2}} T^{\frac{1}{4}} \nonumber \\
		&  + (2 + \frac{\bar{r}}{r}) L_0^{1 - q} \sqrt{d\bar{r}} T^{\frac{3}{4}} + L_0^{2q-1}\sqrt{2d\bar{r}} V_f^2 T^\frac{3}{4}.
	\end{align}
	The proof is complete.
\end{proof}

\section{Discussion on Online Optimization with Adversaries}
In Section~\ref{sec:prelim}, we consider online optimization problems where the sequence of the objective functions $\{f_t\}_t$ is randomly generated and is independent of the agent's decisions. This assumption is satisfied when the non-stationarity of the environment is caused by the nature. In this section, we consider a different scenario where the objective function is selected by an opponent. Specifically, at time $t$, the agent selects a decision $x_t + \delta u_t$, then the opponent selects a objective function $f_t$ according to the history information $H_t = \{x_0 + \delta u_0, f_0, \dots, x_{t-1} + \delta u_{t-1}, f_{t-1}, x_t + \delta u_t\}$ to maximize the agent's regret.


When the gradient estimator~\eqref{eqn:GradientEstimate_Noiseless} is applied, where the searching direction $u_t$ is sampled from Gaussian distribution $\mathcal{N}(0, I)$, we have the following Lemma in adversarial scenario.

\begin{lem}[Second moment]\label{lem:BoundSecondMoment_Adv}
	Assume that $f_t \in C^{0,0}$ with Lipschitz constant $L_0$ for all time $t$. Then, under the ZO update rule in~\eqref{eqn:SGD}, the second moment of the residual feedback satisfies: for all $t$,
	\begin{align} \label{eqn:SecondMomentBound_Adv}
	\mathbb{E}[\|\widetilde{g}_t (x_t)\|^2] &\leq \; \frac{4 d L_0^2 \eta^2}{\delta^2} \mathbb{E}[ \|\widetilde{g}_{t-1}(x_{t-1})\|^2] +D_t, \\
	\text{where}~D_t := 16L_0^2 (d+4)^2 + \frac{2}{\delta^2}&\mathbb{E} \big[ \big( f_t(x_{t-1} + \delta u_{t-1}) - f_{t-1}(x_{t-1} + \delta u_{t-1})\big)^2 \|u_t\|^2\big]. \nonumber
	\end{align}
\end{lem}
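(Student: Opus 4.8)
The plan is to follow the proof of Lemma~\ref{lem:BoundSecondMoment_Det} in Appendix~\ref{sec:proof_BoundSecondMoment} essentially verbatim, isolating the single place where independence of the search direction $u_t$ from the current objective $f_t$ was invoked. Starting from
\begin{align*}
\mathbb{E}[\|\widetilde{g}_t (x_t)\|^2] = \frac{1}{\delta^2}\,\mathbb{E}\big[\big(f_t(x_t+\delta u_t) - f_{t-1}(x_{t-1}+\delta u_{t-1})\big)^2\|u_t\|^2\big],
\end{align*}
I would apply $(a+b)^2\le 2a^2+2b^2$ to separate a ``same-function'' part $f_t(x_t+\delta u_t)-f_t(x_{t-1}+\delta u_{t-1})$ from the ``function-variation'' part $f_t(x_{t-1}+\delta u_{t-1})-f_{t-1}(x_{t-1}+\delta u_{t-1})$, each still carrying the factor $\|u_t\|^2$. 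On the same-function part I would add and subtract $f_t(x_{t-1}+\delta u_t)$ and use $L_0$-Lipschitz continuity---valid for whatever function the adversary selects---to obtain terms dominated by $L_0^2\|x_t-x_{t-1}\|^2\|u_t\|^2$ and $L_0^2\delta^2\|u_t-u_{t-1}\|^2\|u_t\|^2$.

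The key point is that after applying Lipschitz continuity the bounds no longer involve $f_t$ at all, so the relevant expectations still factor. Indeed, $x_t-x_{t-1}$ is a function of $x_{t-1}$ and $\widetilde{g}_{t-1}(x_{t-1})$, hence of $u_0,\dots,u_{t-1}$, $f_0,\dots,f_{t-1}$ and $x_0$; since the adversary picks $f_{t-1}$ (and all earlier objectives) using only the history $H_{t-1}$, which does not contain the freshly drawn $u_t$, the increment $x_t-x_{t-1}$ remains independent of $u_t$. Therefore $\mathbb{E}[\|x_t-x_{t-1}\|^2\|u_t\|^2]=d\,\mathbb{E}[\|x_t-x_{t-1}\|^2]$ exactly as in the non-adversarial case, and $\mathbb{E}[\|u_t-u_{t-1}\|^2\|u_t\|^2]\le 4(d+4)^2$ by Lemma~1 of \cite{nesterov2017random} since $u_t,u_{t-1}$ are i.i.d.\ standard Gaussian irrespective of the adversary. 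Combining, these two terms contribute $\tfrac{4dL_0^2}{\delta^2}\mathbb{E}[\|x_t-x_{t-1}\|^2]$ and $16L_0^2(d+4)^2$.

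The one genuine change forced by the adversarial model is the function-variation term: because the opponent selects $f_t$ from $H_t$, which contains $x_t+\delta u_t$, the direction $u_t$ is no longer independent of $f_t$, so I cannot replace $\|u_t\|^2$ by its mean $d$ and must keep it inside the expectation---which is exactly why $D_t$ in~\eqref{eqn:SecondMomentBound_Adv} retains the extra $\|u_t\|^2$ relative to Lemma~\ref{lem:BoundSecondMoment_Det}. Finally I would invoke nonexpansiveness of the projection, $\|x_t-x_{t-1}\|=\|\Pi_{\mathcal{X}}[x_{t-1}-\eta\widetilde{g}_{t-1}(x_{t-1})]-\Pi_{\mathcal{X}}[x_{t-1}]\|\le\eta\|\widetilde{g}_{t-1}(x_{t-1})\|$, and substitute to obtain the stated contraction with factor $4dL_0^2\eta^2/\delta^2$. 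I do not anticipate a real obstacle; the only subtlety is bookkeeping which expectations still decouple---all the Lipschitz-controlled ones, whose bounds are oblivious to the adversary's choice---and which one does not, namely the true function variation that the adversary can deliberately correlate with $u_t$.
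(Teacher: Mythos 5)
Your proposal is correct and follows essentially the same route as the paper: the paper's proof simply reuses the argument of Lemma~\ref{lem:BoundSecondMoment_Det} and notes that the only step that fails under an adversary is replacing $\|u_t\|^2$ by its mean $d$ in the function-variation term, which is exactly the term where you retain $\|u_t\|^2$. If anything, you make explicit a point the paper leaves implicit, namely that $x_t-x_{t-1}$ is determined by the history up to $u_{t-1},f_{t-1}$ and hence remains independent of the freshly drawn $u_t$, so the factorization $\mathbb{E}[\|x_t-x_{t-1}\|^2\|u_t\|^2]=d\,\mathbb{E}[\|x_t-x_{t-1}\|^2]$ survives the adversarial selection of $f_t$.
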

\begin{proof}
	The proof is essentially the same as the proof of Lemma~\ref{lem:BoundSecondMoment_Det}, except that the bound $\frac{2}{\delta^2}\mathbb{E} \big[ \big( f_t(x_{t-1} + \delta u_{t-1}) - f_{t-1}(x_{t-1} + \delta u_{t-1})\big)^2 \|u_t\|^2\big] \leq \frac{2d}{\delta^2}\mathbb{E} \big[ \big( f_t(x_{t-1} + \delta u_{t-1}) - f_{t-1}(x_{t-1} + \delta u_{t-1})\big)^2\big]$ used under \eqref{eqn:BSM_online_1} does not apply in the adversary case, because the selection of the function $f_t$ depends on $u_t$. Since the other derivations in the proof of Lemma~\ref{lem:BoundSecondMoment_Det} does not rely on the independence between $u_t$ and $f_t$, they still hold. It is straightforward to obtain the bound in \eqref{eqn:SecondMomentBound_Adv}.
\end{proof}

Next, we present the assumptions on the adversary agent for online convex optimization problems.
\begin{asmp}[Bounded Adversary] \label{asmp:BoundVariation_Adv}
	Given the history $H_t$, the adversary agent selects a function $f_t$ such that for all time $t$ there exists a constant $V_f^2$ that satisfies
	\begin{align}
	| f_t(x_{t-1} + \delta u_{t-1}) - f_{t-1}(x_{t-1} + \delta u_{t-1})|^2 \leq V_f^2.
	\end{align}
\end{asmp}

Then, within the expectation term in $D_t$ in the bound~\eqref{eqn:SecondMomentBound_Adv}, for any realization of the random vector $u_t$, the bound $\big( f_t(x_{t-1} + \delta u_{t-1}) - f_{t-1}(x_{t-1} + \delta u_{t-1})\big)^2 \leq V_f^2$ holds according to Assumption~\ref{asmp:BoundVariation_Adv}. Therefore, we have that
\begin{align} \label{eqn:I.1}
\frac{2}{\delta^2}\mathbb{E} \big[ \big( f_t(x_{t-1} + \delta u_{t-1}) - f_{t-1}(x_{t-1} + \delta u_{t-1})\big)^2 \|u_t\|^2\big] \leq \frac{2}{\delta^2}\mathbb{E} \big[ V_f^2 \|u_t\|^2\big] \leq \frac{2d}{\delta^2} V_f^2.
\end{align}
Therefore, after combining Lemma~\ref{lem:BoundSecondMoment_Adv} and Assumption~\ref{asmp:BoundVariation_Adv}, we can achieve the bound on the second moment $\mathbb{E}\big[\|\tilde{g}_t(x_t)\|^2\big]$
\begin{align}
\mathbb{E}\big[\|\tilde{g}_t(x_t)\|^2\big] \leq \frac{4 d L_0^2 \eta^2}{\delta^2} \mathbb{E}[ \|\widetilde{g}_{t-1}(x_{t-1})\|^2] + 16L_0^2 (d+4)^2 + \frac{2d}{\delta^2} V_f^2.
\end{align}
This is the same bound we obtained by combining Lemma~\ref{lem:BoundSecondMoment_Det} and Assumption~\ref{asmp:BoundVariation}. And it can be used to obtain \eqref{eqn:BSM_Oline_3} in the proofs of Theorems~\ref{thm: convex_Lip}, which is also used in \ref{thm: convex_smooth}. Then, it is straightforward to follow the same proofs of Theorems~\ref{thm: convex_Lip} and \ref{thm: convex_smooth} to get the same regret bounds in online convex optimization problems under adversarial environment. 

Finally, we present the assumptions on the adversary agent for non-stationary non-convex optimization problems.

\begin{asmp}\label{asmp:BoundAccumVariation_Adv}
	From time $t = 0$ to $T$, the adversary agent selects a sequence of objective functions $\{f_t\}$ such that
	\begin{enumerate}[leftmargin=*,topsep=0pt,noitemsep]
		\item There exists a constant $W_T$ that satisfies $\sum_{t=1}^{T}\mathbb{E} [ f_{\delta,t}(x_t) - f_{\delta, t-1}(x_t) ] \leq W_T$, where the expectation is taken with respect to $x_t$.
		\item At time $t \geq 1$, given the history $H_t$, the adversary agent selects a function $f_t$ such that there exists a constant $V_{f,t}^2$ that satisfies
		\begin{align}
		| f_t(x_{t-1} + \delta u_{t-1}) - f_{t-1}(x_{t-1} + \delta u_{t-1})|^2 \leq V_{f,t}^2.
		\end{align}
		Furthermore, we have that
		\begin{align}
		\sum_{t=1}^T V_{f,t}^2 \leq \widetilde{W}_T.
		\end{align}
	\end{enumerate}
\end{asmp}
Different from Assumption~\ref{asmp:BoundVariation_Adv}, where at each time $t$, the adversary should select a function $f_t$ according to a uniform function variation bound $V_f^2$, Assumption~\ref{asmp:BoundAccumVariation_Adv}.2 allows the adversary to select $f_t$ according to a varying function variation bound $V_{f,t}^2$. However, there also exists a budget $\widetilde{W}_T$ for the adversary, which represents the total variation on the functions that the adversary is allowed to make from time $t=0$ to $T$.

Then, similar to the discussion under Assumption~\ref{asmp:BoundVariation_Adv}, within the expectation term in $D_t$ in the bound~\eqref{eqn:SecondMomentBound_Adv}, for any realization of the random vector $u_t$, the bound $\big( f_t(x_{t-1} + \delta u_{t-1}) - f_{t-1}(x_{t-1} + \delta u_{t-1})\big)^2 \leq V_{f,t}^2$ holds at time $t$ according to Assumption~\ref{asmp:BoundAccumVariation_Adv}. Therefore, we can combine Lemma~\ref{lem:BoundSecondMoment_Adv} and Assumption~\ref{asmp:BoundAccumVariation_Adv} and use similar derivation in \eqref{eqn:I.1} to achieve the same  bounds in \eqref{eqn:BSM_Online_5.1}, \eqref{eqn:BSM_Oline_6} and \eqref{eqn:BSM_Online_Nonconvex_Smooth_1}, which are used in the proof of Theorems~\ref{thm:Online_Nonconvex_Nonsmooth} and \ref{thm:Online_Nonconvex_smooth}. The other part of the proofs remains the same. Therefore, by combining Lemma~\ref{lem:BoundSecondMoment_Adv} and Assumption~\ref{asmp:BoundAccumVariation_Adv}, we achieve the same regret bounds in Theorems \ref{thm:Online_Nonconvex_Nonsmooth} and \ref{thm:Online_Nonconvex_smooth} in online non-stationary non-convex optimization problems under adversarial environment.

\section{Proof of the Second Moment Bound~\eqref{eqn:secondmoment}}
\label{sec:proof_SecondMoment}
Let $\alpha = \frac{4d L_0^2 \eta^2}{\delta^2}$, using~\eqref{eqn:SecondMomentBound}, we have that
\begin{align}
\mathbb{E}[\|\widetilde{g}_t (x_t)\|^2] \leq \; \alpha^t \mathbb{E}[ \|\widetilde{g}_{0}(x_{0})\|^2] + \sum_{j = 1}^{t} \alpha^{t - j} D_j, \text{ for all } t \geq 1. 
\end{align}
According to Assumption~\ref{asmp:BoundVariation}, we obtain that
\begin{align}
\mathbb{E}[\|\widetilde{g}_t (x_t)\|^2] \leq \; \alpha^t \mathbb{E}[ \|\widetilde{g}_{0}(x_{0})\|^2] + \sum_{j = 1}^{t} \alpha^{t - j} \big( 16L_0^2 (d+4)^2 + \frac{2d}{\delta^2} V_f^2 \big) , \text{ for all } t \geq 1. 
\end{align}
Therefore, we get that
\begin{align}
\mathbb{E}[\|\tilde{g}_t (x_t)\|^2] \leq \max \Big\{ \mathbb{E}[\|\tilde{g}_0 (x_0)\|^2], \dots, \alpha^t \mathbb{E}[ \|\widetilde{g}_{0}(x_{0})\|^2] + \sum_{j = 1}^{t} \alpha^{t - j} \big( 16L_0^2 (d+4)^2 + \frac{2d}{\delta^2} V_f^2 \big), \dots \Big\}. \nonumber
\end{align}
Next, we show that this inequality is equivalent to 
\begin{align}
\mathbb{E}[\|\tilde{g}_t (x_t)\|^2] \leq \max \Big\{ \mathbb{E}[\|\tilde{g}_0 (x_0)\|^2], \frac{1}{1-\alpha} \Big(16L_0^2 (d+4)^2 +  \frac{2d}{\delta^2} V_f^2 \Big) \Big\}.
\end{align}
To see this, observe that the sequence $\Big\{ \mathbb{E}[\|\tilde{g}_0 (x_0)\|^2], \dots, \alpha^t \mathbb{E}[ \|\widetilde{g}_{0}(x_{0})\|^2] + \sum_{j = 1}^{t} \alpha^{t - j} \big( 16L_0^2 (d+4)^2 + \frac{2d}{\delta^2} V_f^2 \big), \dots \Big\}$ is monotonic. This is because if $\mathbb{E}[\|\tilde{g}_0 (x_0)\|^2] \geq \alpha \mathbb{E}[\|\tilde{g}_0 (x_0)\|^2] + 16L_0^2 (d+4)^2 + \frac{2d}{\delta^2} V_f^2$, then we can multiply both sides by $\alpha$ and add $16L_0^2 (d+4)^2 + \frac{2d}{\delta^2} V_f^2$ to both sides and get that $\alpha \mathbb{E}[\|\tilde{g}_0 (x_0)\|^2] + 16L_0^2 (d+4)^2 + \frac{2d}{\delta^2} V_f^2 \geq \alpha^2 \mathbb{E}[\|\tilde{g}_0 (x_0)\|^2] + \alpha\big( 16L_0^2 (d+4)^2 + \frac{2d}{\delta^2} V_f^2 \big) + \big( 16L_0^2 (d+4)^2 + \frac{2d}{\delta^2} V_f^2 \big)$. Using mathematical induction we can show that the sequence is monotonically non-increasing. Similarly, if $\mathbb{E}[\|\tilde{g}_0 (x_0)\|^2] \leq \alpha \mathbb{E}[\|\tilde{g}_0 (x_0)\|^2] + 16L_0^2 (d+4)^2 + \frac{2d}{\delta^2} V_f^2$, then we can show that the sequence is monotonically non-decreasing and converges to $\frac{1}{1-\alpha} \Big(16L_0^2 (d+4)^2 +  \frac{2d}{\delta^2} V_f^2 \Big)$. Therefore, the proof is complete.

\end{document}